\def\eqref#1{equation~\ref{#1}}
\def\Eqref#1{Equation~\ref{#1}}
\def\1{\bm{1}}
\DeclareMathAlphabet{\mathsfit}{\encodingdefault}{\sfdefault}{m}{sl}
\SetMathAlphabet{\mathsfit}{bold}{\encodingdefault}{\sfdefault}{bx}{n}
\def\gA{{\mathcal{A}}}
\def\gP{{\mathcal{P}}}
\def\gS{{\mathcal{S}}}
\def\gX{{\mathcal{X}}}
\newtheorem{theorem}{Theorem}
\newtheorem{definition}{Definition}
\newtheorem{preposition}{Preposition}[section]
\newcommand\headercell[1]{%
   \smash[b]{\begin{tabular}[t]{@{}c@{}} #1 \end{tabular}}}
\title{ETGL-DDPG: A Deep Deterministic Policy Gradient Algorithm for Sparse Reward Continuous Control}
\author{\name Ehsan Futuhi  \\
      \addr Department of Computing Science, University of Alberta\\
      \AND
      \name Shayan Karimi \\
      \addr Department of Computing Science, University of Alberta
      \AND
      \name Chao Gao \\
      \addr Huawei Canada Research Center, Edmonton, Canada \\
      \AND
      \name Martin Müller \\
      \addr Department of Computing Science, University of Alberta
}
\begin{document}

\maketitle

\begin{abstract}
We consider deep deterministic policy gradient (DDPG) in the context of reinforcement learning with sparse rewards. To enhance exploration, we introduce a search procedure, \emph{${\epsilon}{t}$-greedy}, which generates exploratory options for exploring less-visited states. We prove that search using $\epsilon t$-greedy has polynomial sample complexity under mild MDP assumptions. To more efficiently use the information provided by rewarded transitions, we develop a new dual experience replay buffer framework, \emph{GDRB}, and implement \emph{longest n-step returns}. The resulting algorithm, \emph{ETGL-DDPG}, integrates all three techniques: \bm{$\epsilon t$}-greedy, \textbf{G}DRB, and \textbf{L}ongest $n$-step, into DDPG. We evaluate ETGL-DDPG on standard benchmarks and demonstrate that it outperforms DDPG, as well as other state-of-the-art methods, across all tested sparse-reward continuous environments. Ablation studies further highlight how each strategy individually enhances the performance of DDPG in this setting.
\end{abstract}

\section{Introduction}\label{sec:intro}

Deep deterministic policy gradient (DDPG)~\citep{lillicrap2015continuous} is one of the representative algorithms for reinforcement learning (RL)~\citep{sutton2018reinforcement}, alongside other prominent approaches \citep{haarnoja2018soft, fujimoto2018addressing, andrychowicz2017hindsight}. In recent years, DDPG has served as the backbone algorithm for introducing novel ideas in robotics and RL \cite{DBLP:conf/iclr/Barth-MaronHBDH18, pan2020softmax, liu2023metric,wang2023optimal,tiapkin2024generative}. While DDPG demonstrates strong performance in continuous control tasks with dense reward signals~\citep{duan2016benchmarking, kiran2021deep}, its effectiveness diminishes significantly in sparse-reward settings where rewards are only observed upon reaching the goal~\citep{matheron2019problem, luck2019improved}. 

In sparse-reward environments where success depends on reaching a goal state, DDPG's deficiency can be explained from three perspectives. The first one is its lack of  \emph{directional exploration}. Like other off-policy RL algorithms, DDPG employs a \emph{behavior policy} for exploring the environment. The standard choices are either an $\epsilon$-greedy behavior policy that samples a random action with probability $\epsilon$ (e.g., $0.1$), or the main policy with artificial noise. As argued in~\citep{dabney2020temporally}, these one-step \emph{noise augmented greedy} strategies are ineffective for exploring large sparse-reward state spaces due to the lack of temporal abstraction. To improve $\epsilon$-greedy, \citet{dabney2020temporally} propose a temporally extended $\epsilon z$-greedy policy that expands exploration into multiple steps, controlled by a distribution $z$. $\epsilon z$-greedy represents an advancement from the option framework for reinforcement learning~\citep{sutton1999between}. Theoretically, an option $O$ is defined as a tuple $O= \langle I,\pi,\beta \rangle$, where $I$ is the set of states where an option can begin, $\pi$ is the option policy that determines which actions to take while executing the option, and $\beta$ is the termination condition. In $\epsilon z$-greedy, each option repeats a primitive action for a specific number of time steps which is sampled from a distribution $z$ (e.g., a uniform distribution). The option can begin at any state with probability $\epsilon$ and terminates whenever their length reaches a limit that is decided by $z$. While $\epsilon z$-greedy improves over $\epsilon$-greedy, it is also  \emph{directionless}: for exploratory action, the agent does not use any information from its experience for more informed exploration.  

\begin{figure}[t] \small
    \centering
        \begin{subfigure}[b]{197pt} \label{sd}
        \centering
        \vspace{-5pt}
        \includegraphics[width=7cm,height=4.0cm]{"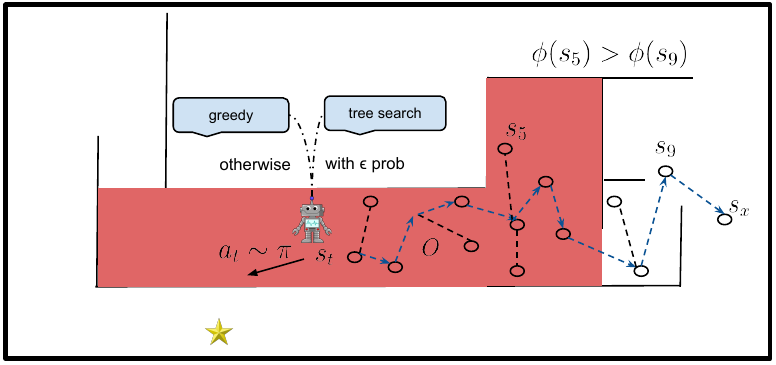"}
        \caption{$\epsilon t$-greedy: greedy or tree search}
        \end{subfigure}
        \begin{subfigure}[b]{197pt}
        \centering
        \includegraphics[width=7cm,height=4.055cm]{"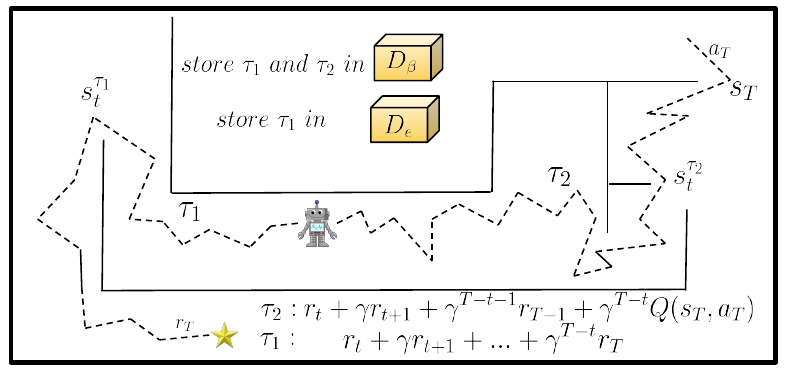"}
        \caption{GDRB and the longest n-step return}
        \end{subfigure}
    \caption{ (a): $\epsilon t$-greedy exploration strategy. The agent creates a tree from the current state $s_{t}$ with $\epsilon$ probability. Otherwise, it uses its policy to determine the next action $a_{t}\sim \pi$. The tree uses a hash function $\phi$ to estimate the visit counts to states. If the newly added node $s_{x}$ to the tree is located in an unvisited area $n(\phi(s_{x}))=0$, the path from the root to that node is returned as option $O$. The tree helps in avoiding obstacles, discovering unexplored areas, and staying away from highly-visited regions (middle red area). (b): GDRB and the longest n-step return for Q-value updates. $\tau_{1}$ reaches the goal (a successful episode), and $\tau_{2}$ is truncated by time limit (an unsuccessful episode). The first buffer $D_{\beta}$ stores both trajectories but $D_{e}$ only stores successful trajectories. The target Q-value for state $s_{t}$ is shown for both trajectories below the figure. In successful episodes, the target Q-value is the episode return. $s_T$ represents the last state in each episode, which is the goal state indicated by a star in $\tau_1$.}
    \label{fig: intro}
\end{figure}

The second drawback of DDPG is its uniform treatment of zero and non-zero rewards in the replay buffer. For most off-policy RL algorithms, a replay buffer is used to store and sample transitions of the agent's interactions with the environment. By default, DDPG uses a uniform sampling strategy that assigns an equal probability of being chosen to all transitions in the buffer. In sparse-reward environments, uniform sampling therefore rarely chooses rewarded transitions. In general, RL algorithms can be improved by prioritizing transitions based on the associated rewards or TD error~\citep{schaul2015prioritized}. For problems with well-defined goals, a replay buffer can be further enhanced to exploit the strong correlation of rewards and goals.  
The third weakness of DDPG is its slow information propagation when updating its learning policy. Since only the last transition in a successful episode (i.e., goal reached) gets rewarded, in standard DDPG, the agent must achieve the goal many times to make sure that the reward is eventually propagated backward to early states.  It is known that one way to achieve this is to provide intermediate rewards with reward shaping methods~\citep{laud2004theory}. However, effective reward shaping is usually problem-specific and does not generalize to a wide range of tasks.

In this paper, we enhance DDPG~\citep{lillicrap2015continuous} to address all three aforementioned problems. We choose DDPG over SAC, as SAC's maximum entropy framework already provides advanced exploration, and combining it with our components would introduce extra complexity, making it harder to isolate and analyze our contributions. Our first contribution is \emph{$\epsilon t$-greedy}, a new temporally version of $\epsilon$-greedy that utilizes a light-weight search procedure, similar to \citet{laud2004theory}, to enable more directional exploration based on the agent's previous experience data. We show that similar to $\epsilon z$-greedy, $\epsilon t$-greedy has polynomial sample complexity in related parameters of the MDP. Our second contribution is a new \emph{goal-conditioned dual replay buffer} (GDRB), that uses two replay buffers along with an adaptive sampling strategy to differentiate goal-reached and goal-not-reached experience data. These two buffers differ in retention policy, size, and the transitions they store. Our third enhancement is to replace the one-step update in DDPG with the \emph{longest $n$-step return} for all transitions in an episode. Figure~\ref{fig: intro} illustrates the innovations of ETGL-DDPG. In Section \ref{sec:experiments}, we evaluate the performance of ETGL-DDPG through extensive experiments on 2D and 3D continuous control benchmarks. As the proposed strategies are orthogonal, we show that each of the three strategies individually improves the performance of DDPG via a thorough ablation study. Furthermore, ETGL-DDPG outperforms current state-of-the-art methods across all tested environments.

\section{Background}
We consider a Markov decision process (MDP) defined by the tuple $(S, A, \mathcal{T}, r, \gamma, \rho)$. $S$ is the set of states, $A$ is the set of actions, $\mathcal{T}(s^{\prime}|s,a)$ is the transition distribution, $r: S\times A\times S \rightarrow \mathbb{R}$ is the reward function, $\gamma \in [0,1]$ is the discount factor, and $\rho(s_{0},s_{g})$ is the distribution from which initial and goal states are sampled for each episode. 
Every episode starts with sampling a new pair of initial and goal states. 
At each time-step $t$, the agent chooses an action using its policy and considering the current state and the goal state $a_{t}=\pi(s_{t},s_{g})$ resulting in reward $r_{t}=(s_{t},a_{t},s_{g})$. The next state is sampled from $\mathcal{T}(.|s_{t},a_{t})$. The episode ends when either the goal state or the maximum number of steps $T$ is reached. The return is the discounted sum of future rewards $R_{t}=\sum^{T}_{i=t} \gamma^{i-t}r_{i}$. The Q-function and value function associated with the agent's policy are defined as $Q^{\pi}(s_{t},a_{t},s_{g})=\mathbb{E}[R_{t}|s_{t},a_{t},s_{g}]$ and $V^{\pi}(s_{t},s_{g})=max_{a} Q^{\pi}(s_{t},a_{t},s_{g})$. The agent's objective is to learn an optimal policy $\pi^{*}$ that maximizes the expected return $\mathbb{E}_{s_{0}}[R_{0}|s_{0},s_{g}]$. 

\subsection{Deep Deterministic Policy Gradient (DDPG)}
To ease presentation, we adopt our notation with explicit reference to the goal state for both the critic and the actor networks in DDPG. DDPG maintains an actor $\mu(s, s_{g})$ and a critic $Q(s, a, s_{g})$. The agent explores the environment through a stochastic policy $a \sim \mu(s,s_{g})+ w$, where $w$ is a noise sampled from a normal distribution or an Ornstein-Uhlenbeck process \citep{uhlenbeck1930theory}. To update both actor and critic, transition tuples are sampled from a replay buffer to perform a mini-batch gradient descent. The critic is updated by a loss $L$, 

\begin{equation}
L= \mathbb{E}[Q(s_{t},a_{t},s_{g})-y_{t}]^{2} 
\end{equation}

where $y_{t}=r_{t}+\gamma Q^{\prime}(s_{t+1},\mu^{\prime}(s_{t+1},s_{g}),s_{g})$. $Q^{\prime}$ and $\mu^{\prime}$ are the target critic and actor, respectively; their weights are soft-updated to the current weights of the main critic and actor, respectively. The actor is updated by the deterministic policy gradient algorithm \citep{silver2014deterministic} to maximize the estimated Q-values of the critic using loss $-\mathbb{E}_{s}[Q(s,\mu(s,s_{g}),s_{g})]$.

\subsection{Locality-Sensitive Hashing}

Our approach discretizes the state space with a hash function $\phi: \mathbb{S} \rightarrow \mathbb{M}$, that maps states to buckets in $\mathbb{M}$. When we encounter a state $s$, we increment the visit count for $\phi(s)$. We use $n(\phi(s))$ as the visit counts of all states that map to the same bucket $\phi(s)$. Clearly, the \emph{granularity} of the discretization significantly impacts our exploration method. The goal for the granularity is that “distant” states are in separate buckets while “similar” states are grouped into one. 

We use Locality-Sensitive Hashing (LSH) as our hashing function, a popular class of hash functions for querying nearest neighbors based on a similarity metric \citep{bloom1970space}. SimHash~\citep{charikar2002similarity} is a computationally efficient LSH method that calculates similarity based on angular distance. SimHash retrieves a binary code of state $s \in S$ as 

\begin{equation}
\phi(s)=sgn(Af(s)) \in \{-1,1\}^{k},
\end{equation}

where $f: S \rightarrow \mathbb{R}^{D}$ is a preprocessing function and $A$ is a $k\times D$ matrix with i.i.d. entries drawn from a standard Gaussian distribution $\mathcal{N}(0,1)$. The parameter $k$ determines the granularity of the hash: larger values result in fewer collisions, thereby enhancing the ability to distinguish between different states.

\section{The ETGL-DDPG Method}\label{sec:methods}

In this section, we describe three strategies in ETGL-DDPG for improving DDPG in sparse-reward tasks. The full pseudocode for ETGL-DDPG is presented in Supplementary Algorithm \ref{alg:ETGL algorithm}.

\subsection{$\epsilon t$-Greedy: Exploration With Search}\label{et-Greedy}
In principle, exploration should be highest at the beginning of training, as discovering rewarded transitions during early steps is essential for escaping local optima~\citep{matheron2019problem}. Motivated by the success of the fast exploration algorithms RRT \citep{lavalle1998rapidly} and $\epsilon z$-greedy~\citep{dabney2020temporally}, we introduce \emph{$\epsilon t$-greedy}, which combines $\epsilon$-greedy with a \emph{tree search} procedure. Like $\epsilon$-greedy, $\epsilon t$-greedy selects a greedy action with probability $1-\epsilon$, and an exploratory action with probability $\epsilon$. However, instead of exploring uniformly at random, the exploratory action in $\epsilon t$-greedy is the first step of an \emph{option} generated via a search with time budget $N$. 

To execute the search process, the agent requires access to the environment's transition function $\mathcal{T}$ of the corresponding MDP. This is used to generate new nodes within the search tree. However, since our exploration strategy is built on DDPG, the model-free algorithm, the transition function $\mathcal{T}$ is not known. Instead, the agent utilizes its replay buffer to advance the search. We briefly discuss the impact of having access to $\mathcal{T}$ on the exploration process in Supplementary Material \ref{sec: exp with perfect model}. We also assume that the agent has a SimHash function $\phi$, which discretizes the large continuous environment. For each state $s$, $n(\phi(s))$ serves as an estimate of the number of visits to a neighbourhood of $s$ throughout the entire learning process.

The replay buffer contains transitions observed during training. It can be used as a transition model for observed transitions and an approximate one for transitions similar to those already seen. For simplicity, we identify each bucket with its hash code $\phi(s)$. We use a buffer $B_{M}$ which stores observed transitions based on the hash of their states $\phi(s)$. If the agent makes a transition $(s_{t}, a_{t},r_{t},s_{t+1})$ in the environment, the transition is stored in bucket $b=\phi(s_{t})$. All transitions are assigned to their buckets upon being added to the replay buffer. As training may take a long time, we limit the number of transitions in each bucket, and randomly replace one of the old transitions in a full bucket with the new transition. 

The function \texttt{next\_state\_from\_replay\_buffer} in Algorithm \ref{alg:exp with replay buffer} shows how new nodes can be added to the search: assuming we are at node $s_{x}$, we randomly select a transition $(s^{\prime},a,r,s{''})$ in bucket $\phi(s_{x})$ and create a new child $s_{x{\prime}}$ for $s_x$ by using following approximation:

\begin{equation}
    \mathcal{T}(s_x,a) \approx \mathcal{T}(s^{\prime},a) 
\end{equation}

Algorithm \ref{alg:exp with replay buffer} explains how the search generates an exploratory option. Initially, at state $s$, we create a list of frontier nodes consisting of only the root node $s$. If bucket of state $s$ in $B_{M}$ is empty: $b_{\phi(s)} = \varnothing$, there is no transition to approximate $\mathcal{T}(s,a)$. In this case, $\epsilon t$-greedy as in $\epsilon$-greedy generates a random action at $s$. Otherwise, when $b_{\phi(s)} \neq \varnothing$, $\epsilon t$-greedy conducts a tree search iteratively, with a maximum of $N$ iterations. At each iteration, a node $s_x$ is sampled uniformly from the frontier nodes, and a \emph{child} for $s_x$, noted as $s_{x'}$, is generated using \texttt{next\_state\_from\_replay\_buffer} function. If $n(\phi(s_{x'})) = 0$, we terminate and return the action sequence from the root to $s_{x'}$; otherwise, we repeat this process until we have added $N$ nodes to the tree. We then return the action sequence from the root to a least-visited node $s_{min}$: 

\begin{equation}
    s_{min}=\min_{s\  \in \ frontier\  nodes} n(\phi(s))    
\end{equation}
 
To justify this exploration method, we adopt the conditions outlined in \citet{DBLP:journals/corr/abs-1805-09045} to validate the sample efficiency of $\epsilon t$-greedy. We begin by introducing the relevant terms and then present the main theorem. Detailed definitions and proofs are provided in Appendix \ref{thm:PAC-RL}. The key idea is to define a measure that captures the concept of visiting all state-action pairs, as outlined in Definition \ref{def-cover-len}.
\begin{definition}[\textbf{Covering Length}] \label{def-cover-len}
    The covering length \citep{EvanEyalMansour} represents the minimum number of steps an agent must take in an MDP, starting from any state-action pair $(s,a) \in \mathcal{S} \times \mathcal{A}$, to visit all state-action pairs at least once with a probability of at least 0.5. We define the covering length only for discrete MDPs; for continuous MDPs, we consider a discretization of the state-action space $\mathcal{S} \times \mathcal{A}$.
\end{definition}

Our objective is to find a near-optimal policy, as defined in Definition \ref{good-pol}.

\begin{definition}[\textbf{$\epsilon$-optimal Policy}]\label{good-pol}
    A policy $\pi$ is called $\epsilon$-optimal if it satisfies $V^{\pi^*}(s) - V^{\pi}(s) \le \epsilon$, for all $s \in S$, where $\epsilon > 0$.
\end{definition}
Next, we define the concept of sample efficiency, which is captured through the notion of polynomial sample complexity in Definition \ref{poly-sample}.
\begin{definition}[\textbf{PAC-MDP Algorithm}] \label{poly-sample}
    Given a state space $\mathcal{S}$, action space $\mathcal{A}$, suboptimality error $\epsilon > 0$ (from Definition \ref{good-pol}) and $0 < \delta < 1$, an algorithm $\mathcal{A}$ is called PAC-MDP \citep{Kakade2003OnTS}, if the number of time steps required to find a $\epsilon$-optimal policy is less than some polynomial in the relevant quantities $(|\mathcal{S}|,|\mathcal{A}|,\frac{1}{\epsilon},\frac{1}{1-\gamma},\frac{1}{\delta})$ with probability at least $1-\delta$.
    \label{PACMDP}
\end{definition}
For simplicity, when we say an algorithm $\mathcal{A}$ has polynomial sample complexity, we imply that $\mathcal{A}$ is PAC-MDP. The work by \citet{DBLP:journals/corr/abs-1805-09045} establishes polynomial sample complexity for a uniformly random exploration by bounding the covering length defined in Definition \ref{def-cover-len}. Using this, and considering a limited tree budget $N$, we show that $\epsilon t$-greedy is PAC-MDP. Let's denote the search tree by $\mathcal{X}$, and the distribution over the generated options in $\mathcal{X}$ as $\mathcal{P}_{\gX}$. The following Theorem provides a lower bound on option sampling in tree $\mathcal{X}$ under certain condition.

\begin{theorem}[\textbf{Worst-Case Sampling}] \label{theorem : worstcase}
    Given a tree $\mathcal{X}$ with $N$ nodes ($s_1$ to $s_N$), for any $\omega \in \Omega_{\mathcal{X}}$, the sampling probability satisfies:
    \begin{align}
        \mathcal{P}_{\mathcal{X}}[\omega] \ge \frac{1}{N!(\max_{i \in [N]}|\phi(s_i)|)^{N-1}} \ge \frac{1}{\Theta(|\mathcal{S}||\mathcal{A}|)}
    \end{align}, \textbf{if} $N \le \frac{\log(|\mathcal{S}||\gA|)}{\log\log(|\mathcal{S}||\gA|)}$. 
Here, $\mathcal{S}$ and $\mathcal{A}$ represent the state space and action space, respectively.
\end{theorem}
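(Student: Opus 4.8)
The plan is to lower-bound $\mathcal{P}_{\mathcal{X}}[\omega]$ by the probability of a single, explicitly chosen run of the tree search of Algorithm~\ref{alg:exp with replay buffer} that is guaranteed to return $\omega$, and then to simplify the resulting expression using the hypothesis $N \le \frac{\log(|\mathcal{S}||\mathcal{A}|)}{\log\log(|\mathcal{S}||\mathcal{A}|)}$.

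First I would isolate the two independent sources of randomness in one expansion step: the uniform choice of which frontier node $s_x$ to expand, and the uniform choice of which stored transition in bucket $\phi(s_x)$ is used to create the child. An option $\omega \in \Omega_{\mathcal{X}}$ is a root-to-node path $s_{i_1} \to s_{i_2} \to \cdots \to s_{i_{d+1}}$ with $d \le N-1$ edges whose endpoint triggers termination (in the principal case, $n(\phi(s_{i_{d+1}})) = 0$). The favourable run I would analyse builds this path edge by edge: at the $j$-th expansion the frontier has exactly $j$ nodes, so selecting the correct node on the path has probability $1/j$, and drawing the transition that produces the next vertex of $\omega$ has probability $1/|\phi(s_{i_j})|$; after $d$ such steps the endpoint is reached and the search returns $\omega$. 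Multiplying these probabilities and bounding each bucket size by its maximum gives
\[
\mathcal{P}_{\mathcal{X}}[\omega] \;\ge\; \prod_{j=1}^{d}\frac{1}{j\,|\phi(s_{i_j})|}
\;=\;\frac{1}{d!\,\prod_{j=1}^{d}|\phi(s_{i_j})|}
\;\ge\;\frac{1}{N!\,\big(\max_{i\in[N]}|\phi(s_i)|\big)^{N-1}},
\]
using $d \le N-1$ for both the factorial and the exponent. This is the first inequality.

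For the second inequality, set $n := |\mathcal{S}||\mathcal{A}|$. The number of transitions kept in any bucket of $B_M$ is capped by a fixed constant $c$, so $\big(\max_{i}|\phi(s_i)|\big)^{N-1} \le c^{N}$. From $N \le \frac{\log n}{\log\log n}$ one gets $\log N! \le N\log N \le \frac{\log n}{\log\log n}\cdot\log\log n = \log n$, hence $N! \le n$, and similarly $\log c^{N} = N\log c = o(\log n)$, so $c^{N} = n^{o(1)}$. Therefore the denominator is at most $n^{1+o(1)} = \Theta(|\mathcal{S}||\mathcal{A}|)$ up to a sub-polynomial factor (and exactly $\le |\mathcal{S}||\mathcal{A}|$ when each bucket holds a single transition), which yields $\mathcal{P}_{\mathcal{X}}[\omega] \ge 1/\Theta(|\mathcal{S}||\mathcal{A}|)$ and is in any case polynomial in $|\mathcal{S}|,|\mathcal{A}|$ as needed for the PAC-MDP conclusion.

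I expect the delicate step to be the first one: guaranteeing that the chosen favourable run returns \emph{exactly} $\omega$ rather than merely building the path to its endpoint. This requires a short case analysis — options produced by the zero-count termination rule (handled as above, also checking that every intermediate vertex $s_{i_1},\dots,s_{i_d}$ has positive count so the search does not stop early) versus options produced by the least-visited fallback after the full budget (where one instead completes the tree so that $\omega$'s endpoint is the arg-min of the visit counts). The $N!$-versus-$(N-1)!$ slack in the statement absorbs any off-by-one in counting the root or the first expansion, and the asymptotic estimate in the last paragraph is routine once the bound on $N$ is available.
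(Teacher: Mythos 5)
Your proposal is correct and takes essentially the same route as the paper: both arguments lower-bound $\mathcal{P}_{\mathcal{X}}[\omega]$ by the probability of the one favourable run that, at the $j$-th expansion, picks the correct frontier node (probability $1/j$) and the correct stored transition in its bucket (probability $1/|\phi(s_{i_j})|$), giving the product $\frac{1}{(N-1)!\,\prod_i |\phi(s_i)|} \ge \frac{1}{N!\,(\max_i|\phi(s_i)|)^{N-1}}$, and both obtain the second inequality by taking logarithms, using $N\log N \le \log(|\mathcal{S}||\mathcal{A}|)$ from the hypothesis, and absorbing the bucket-size factor via a Stirling-type estimate. Your explicit case analysis of the zero-count termination versus the least-visited fallback is the same content the paper packages into its ``worst-case tree'' assumptions, so there is no substantive difference in approach.
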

To prove Theorem \ref{theorem : worstcase}, we examine the construction of the ``hardest option'', $\hat{\omega} \in \Omega_{\mathcal{X}}$, which has the lowest sampling probability in the tree $\mathcal{X}$. Since $\mathcal{P}_{\mathcal{X}}$ is an unknown distribution, we cannot directly exploit it. Instead, we construct a worst-case scenario to approximate the minimum option sampling probability. Now, we present the following Theorem on the sample complexity of $\epsilon t$-greedy.

\begin{theorem}[\textbf{\bm{$\epsilon t$}-greedy Sample Efficiency}] \label{thm:1}
    Given a state space $\mathcal{S}$, action space $\mathcal{A}$, and a set of options $\Omega_{\mathcal{X}}$ generated by $\epsilon t$-greedy for each tree $\mathcal{X}$, if $\mathcal{P}_{\mathcal{X}}[\omega] \ge \frac{1}{\Theta(|\mathcal{S}||\mathcal{A}|)}$, $\epsilon t$-greedy achieves polynomial sample complexity or i.e. is PAC-MDP.
\end{theorem}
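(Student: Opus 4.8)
The plan is to reduce the claim to a statement about \emph{covering length} (Definition \ref{def-cover-len}) and then invoke the machinery of \citet{DBLP:journals/corr/abs-1805-09045}. That work shows that any exploration scheme whose induced behaviour policy has covering length $L$ polynomial in $(|\mathcal{S}|,|\mathcal{A}|,\frac{1}{1-\gamma})$ can be coupled with an optimism-style update rule to yield a PAC-MDP algorithm whose sample complexity is polynomial in $L$ together with the usual quantities of Definition \ref{poly-sample}; indeed, this is precisely how \citet{DBLP:journals/corr/abs-1805-09045} handle uniformly random exploration. So the whole burden is to produce a polynomial bound on the covering length of the $\epsilon t$-greedy behaviour policy, and the hypothesis $\mathcal{P}_{\mathcal{X}}[\omega] \ge \frac{1}{\Theta(|\mathcal{S}||\mathcal{A}|)}$ (supplied by Theorem \ref{theorem : worstcase} under the budget condition $N \le \frac{\log(|\mathcal{S}||\mathcal{A}|)}{\log\log(|\mathcal{S}||\mathcal{A}|)}$) is exactly what makes this possible.

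First I would set up the covering-length estimate. Fix an arbitrary target pair $(s^\star,a^\star)$ and an arbitrary start pair. At each decision point the agent, with probability $\epsilon$, commits to the first action of an option $\omega$ sampled from $\mathcal{P}_{\mathcal{X}}$ on the current tree $\mathcal{X}$. Because the tree search is count-biased — it returns a path to a node with $n(\phi(\cdot))=0$, or otherwise to a least-visited frontier node — the options generated over the run eventually include, for every hash bucket, an option whose terminal state maps to that bucket; this is where the granularity assumption on $\phi$ and the fact that the replay-buffer transition model is accurate on already-visited regions enter. Conditioned on such an option being present in $\mathcal{X}$, the hypothesis gives it probability at least $1/\Theta(|\mathcal{S}||\mathcal{A}|)$ of being selected, so within any block of $\Theta(|\mathcal{S}||\mathcal{A}|)$ steps the probability of reaching a not-yet-reached bucket (hence, after one further $\epsilon$-random action, a not-yet-reached $(s,a)$) is at least some inverse polynomial $p$. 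A coupon-collector / union-bound argument over the $|\mathcal{S}||\mathcal{A}|$ pairs then shows that after $O(|\mathcal{S}|^2|\mathcal{A}|^2\log(|\mathcal{S}||\mathcal{A}|))$ steps every pair has been visited with probability at least $1/2$, i.e.\ the covering length is polynomial.

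Finally I would feed this covering-length bound into the PAC-MDP template of \citet{DBLP:journals/corr/abs-1805-09045}: substituting $L = \mathrm{poly}(|\mathcal{S}|,|\mathcal{A}|)$ into their sample-complexity expression yields a bound polynomial in $(|\mathcal{S}|,|\mathcal{A}|,\frac{1}{\epsilon},\frac{1}{1-\gamma},\frac{1}{\delta})$ for returning an $\epsilon$-optimal policy (Definition \ref{good-pol}) with probability $1-\delta$, which is exactly the assertion that $\epsilon t$-greedy is PAC-MDP (Definition \ref{poly-sample}).

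I expect the main obstacle to be the non-stationarity and bootstrapping built into the construction: the tree $\mathcal{X}$, the counts $n(\phi(\cdot))$, and the replay-buffer transition model all evolve over time, and the buffer is populated by the very exploration being analysed. Making the ``every bucket eventually admits a reaching option'' step rigorous requires an inductive argument that each newly reached region enlarges the buffer and thereby the reachable frontier of future trees, and one must verify that the per-option lower bound of Theorem \ref{theorem : worstcase} holds uniformly over all trees arising along the trajectory — which is fine, since that bound is history-independent, so a worst-case tree may be assumed. Carefully discharging this (rather than the coupon-collector bookkeeping, which is routine) is the crux.
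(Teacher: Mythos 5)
Your route is genuinely different from the paper's. The paper never argues about covering length directly: it invokes a proposition from \citet{DBLP:journals/corr/abs-1805-09045} stating that random-walk-style exploration is PAC-MDP provided two spectral quantities of the option-augmented walk are polynomially bounded, namely $\frac{1}{\lambda}$ (inverse of the smallest non-zero Laplacian eigenvalue of the induced transition matrix) and $\frac{1}{\min_s \Psi(s)}$ (inverse of the minimum stationary probability). It then bounds $\frac{1}{\min_s\Psi(s)}$ by relating $\Psi(s_{\mathrm{rand}})$ to $\Psi(s_{\mathrm{root}})$ through the option probability $\mathcal{P}_{\mathcal{X}}[\omega]$, and bounds $\frac{1}{\lambda}$ via the Cheeger constant $h$ (using $\lambda \ge h^2/2$) by lower-bounding the flow out of the singleton set $\{s_{\mathrm{root}}\}$ by $\mathcal{P}_{\mathcal{X}}[\omega]$. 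You instead try to bound the covering length combinatorially with a coupon-collector argument. In principle either route passes through the same external machinery, and a direct covering-length bound would arguably be cleaner and more self-contained than the spectral detour.

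However, your version has a genuine gap at exactly the step you flag as the crux, and flagging it is not the same as discharging it. The claim that ``the options generated over the run eventually include, for every hash bucket, an option whose terminal state maps to that bucket,'' with that option then selected with probability at least $\frac{1}{\Theta(|\mathcal{S}||\mathcal{A}|)}$, does not follow from Theorem \ref{theorem : worstcase}. That theorem only lower-bounds the probability of options that are actually in $\Omega_{\mathcal{X}}$ for the current tree, and each tree has only $N \le \frac{\log(|\mathcal{S}||\mathcal{A}|)}{\log\log(|\mathcal{S}||\mathcal{A}|)}$ nodes, so every option has length at most $N$ and can only reach buckets within $N$ replay-buffer transitions of the current state. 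Reaching an arbitrary target pair $(s^\star,a^\star)$ therefore requires composing many options across many trees, and the probability of any particular composition is not controlled by the per-option bound alone --- naively it decays like $\left(\frac{\epsilon}{\Theta(|\mathcal{S}||\mathcal{A}|)}\right)^{k}$ over $k$ option selections, which is not polynomial. To rescue the coupon-collector argument you would need a separate ``progress'' lemma: that from any reachable configuration, within one block of polynomially many steps, the set of visited buckets strictly grows with inverse-polynomial probability, uniformly over the (history-dependent) buffer contents, counts, and model error. That lemma is the real content of the theorem and neither your sketch nor, frankly, the hypothesis $\mathcal{P}_{\mathcal{X}}[\omega]\ge\frac{1}{\Theta(|\mathcal{S}||\mathcal{A}|)}$ by itself supplies it; the paper sidesteps the issue by working with stationary-distribution and conductance bounds for an idealized option-augmented walk rather than tracking the evolving frontier.
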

Theorem \ref{theorem : worstcase} asserts that the sampling bound condition from Theorem \ref{thm:1} is satisfied when $N \leq \frac{\log(|\mathcal{S}||\gA|)}{\log\log(|\mathcal{S}||\gA|)}$. Theorem \ref{thm:1} establishes the necessary lower bound on the sampling probability of an option $\omega \in \Omega_{\gX}$ for any given exploration tree $\gX$, ensuring that the $\epsilon t$-greedy strategy is PAC-MDP under this criterion.

\begin{algorithm}[tb!] \small
\caption{Generating exploratory option with tree search}
\label{alg:exp with replay buffer}   
\begin{algorithmic}[1] 
\Function{}{}\textbf{generate\_option}(state s, hash function $\phi$, budget N)
\State frontier\_nodes $\gets \{ \}$
\State Initialize root using $s$: $\mathit{root} \gets \mathit{TreeNode}(s)$
\State frontier\_nodes $\gets$ frontier\_nodes $\cup $ \{root\};
\State $s_{\min}$ $\gets$ root
\State $i \gets 0$
\While{$ i < N$}
    \State $s_{x}$ $\sim$ \textit{UniformRandom}(frontier\_nodes)
    \State $s_{x^{\prime}}$= \textbf{next\_state\_from\_buffer}($s_{x}$)
    \If {$n(\phi(s_{x^{\prime}}))$=1}  \Comment{For a state to be in the replay buffer, it must have been visited at least once.}     
        \State Extract option $o$ by actions $root$ to $s_{x^\prime}$
        \State \Return $o$
    \EndIf
    \If {$n(\phi$($s_{x^{\prime}})$) $<$ $n(\phi$($s_{\min})$)}
        \State $s_{\min}$=$s_{x^{\prime}}$
    \EndIf
    \State $i \gets i + 1$
\EndWhile
\State Extract option $o$ by actions $root$ to $s_{\min}$
\State \Return $o$
\EndFunction
\State
\Function{}{}\textbf{next\_state\_from\_buffer}($s_{x}$, frontier\_nodes)
    \State $(s^{\prime},a,r,s{''}) \sim$ \textit{UnifromRandom}($\phi(s_{x})$)
    \State $s_{x^{\prime}} \leftarrow s{''}$ 
    \State $s_{x}$.add\_child($s_{x^{\prime}}$)
    \State frontier\_nodes $\gets$ frontier\_nodes $\cup$ \{$s_{x^{\prime}}$\}
    \State \Return $s_{x^{\prime}}$
\EndFunction
\end{algorithmic}
\end{algorithm}

\subsection{GDRB: Goal-conditioned Dual Replay Buffer}\label{grs-drb}

The experience replay buffer is an indispensable part of deep off-policy RL algorithms. It is common to use only one buffer to store all transitions and use FIFO as the retention policy, with the most recent data replacing the oldest data~\citep{mnih2013playing}. As an alternative, in the reservoir sampling \citep{vitter1985random} retention policy, each transition in the buffer has an equal chance of being overwritten. This maintains coverage of some older data over training. \emph{RS-DRB} \citep{zhang2019framework} uses two replay buffers, one for exploitation and the other for exploration. The transitions made by the agent's policy are stored in the exploitation buffer, and the random exploratory transitions are stored in the exploration buffer. For the retention policy, the exploration buffer uses reservoir sampling, while the exploitation buffer uses FIFO.

Inspired by this dual replay buffer framework, we propose a \emph{Goal-conditioned Double Replay Buffer (GDRB)}. The first buffer $D_{\beta}$ stores all transitions during training, and the second buffer $D_{e}$ stores the transitions that belong to successful episodes (i.e., goal reached). $D_{\beta}$ uses reservoir sampling, and $D_{e}$ uses FIFO. Since $D_{\beta}$ needs to cover transitions from the entire training process, it is larger than $D_{e}$. We balance the number of samples taken from the two buffers with an adaptive sampling ratio. Specifically, in a training process of $E$ episodes, at current episode $i$, the sampling ratios $\tau_{e}$ and $\tau_{\beta}$ for $D_{e}$ and $D_{\beta}$ are set as follows: $\tau_{e}= \frac{i}{E},   \tau_{\beta}= 1-\tau_{e}.$
To select $C$ mini-batches, $\max (\left \lfloor \tau_{\beta}*C \right \rfloor,1)$ mini-batches are chosen from $D_{\beta}$ and the rest from $D_{e}$. Later stages of training still sample from $D_{\beta}$ to not forget previously acquired knowledge, as we assume the policy is more likely to reach the goal as the training progresses. In case that $D_{e}$ is empty, since there are no successful episodes yet, we draw all mini-batches from $D_{\beta}$.

\subsection{Using Longest $n$-step Return}\label{n-step}

In standard DDPG, $Q$-values are updated using one-step TD. In goal-reaching tasks with sparse rewards, only one rewarded transition per successful episode is added to the replay buffer. The agent needs rewards provided by these transitions to update its policy toward reaching the goal. With few rewarded transitions, the agent should exploit a successful path to the goal many times so the reward is propagated backward quickly. Multi-step updates can accelerate this process by looking ahead several steps, resulting in more rewarded transitions in the replay buffer~\citep{meng2021effect, hessel2018rainbow}. For example, \citet{meng2021effect} utilize $n$-step updates in DDPG with $n$ ranging from $1$ to $8$. In our design, to share the reward from the last step of a successful episode for all transitions in the episode, we use \emph{longest $n$-step return} \citep{mnih2016asynchronous}, shown in Equation \ref{longest-step}.

\begin{equation} \label{longest-step} \small
Q(s_{t},a_{t})=
\begin{cases}
\sum^{T-t}_{k=0} \gamma^{k}r_{t+k}, \ \ \  \ \  \  \ \ \ \  \ \  \ \ \ \ \ \ \ \ \ \ \ \ \ \ \ \ \ \  \ \ \ \ \  \text{$s_{T}$ is a goal state}  \\
\\
\sum^{T-t-1}_{k=0} \gamma^{k}r_{t+k} + \gamma^{T-t}Q(s_{T},a_{T}), \ \ \ \ \ \text{otherwise}    
\end{cases}
\end{equation}

Here, $s_{T}$ is the last state in the episode. Using the longest n-step return for each transition from a successful episode, the reward is immediately propagated to all $Q$-value updates. In unsuccessful episodes, using the longest $n$-step return reduces the overestimation bias in Q-values~\citep{thrun1993issues}. \citet{meng2021effect} empirically show that using multi-step updates can improve the performance of DDPG on robotic tasks mostly by reducing overestimation bias --- they demonstrate that the larger the number of steps, the lower the estimated target Q-value and overestimation bias. 

\section{Experiments}\label{sec:experiments}

In this section, we show the details of how ETGL-DDPG improves DDPG for sparse-reward tasks using its three strategies. We use experiments to answer the following questions: 1) Can ETGL-DDPG outperform state-of-the-art methods in goal-reaching tasks with sparse rewards? 2) How does each of these three innovations impact the performance of DDPG? 3) Can $\epsilon t$-greedy explore more efficiently than $\epsilon z$-greedy and other common exploration strategies?

\begin{figure*}[t] \small
    \centering
        \begin{subfigure}[b]{60pt}
        \centering
        \includegraphics[width=\textwidth,height=2.3cm]{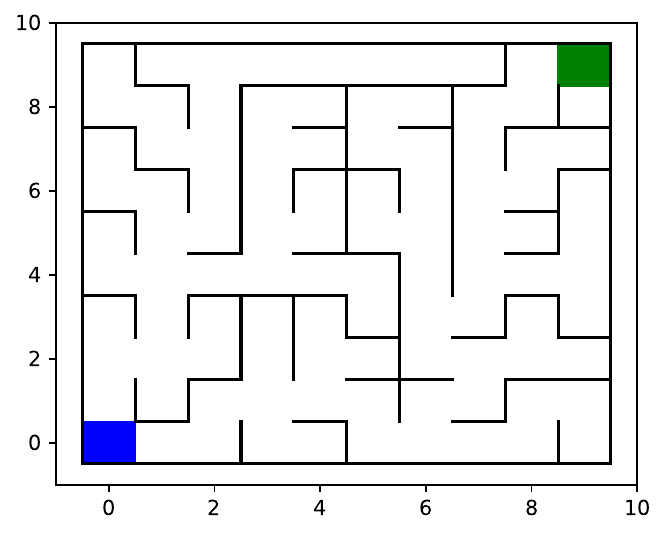}
        \caption{\scriptsize Wall-maze}
        \end{subfigure}
        \begin{subfigure}[b]{60pt}
        \centering
        \includegraphics[width=\textwidth,height=2.3cm]{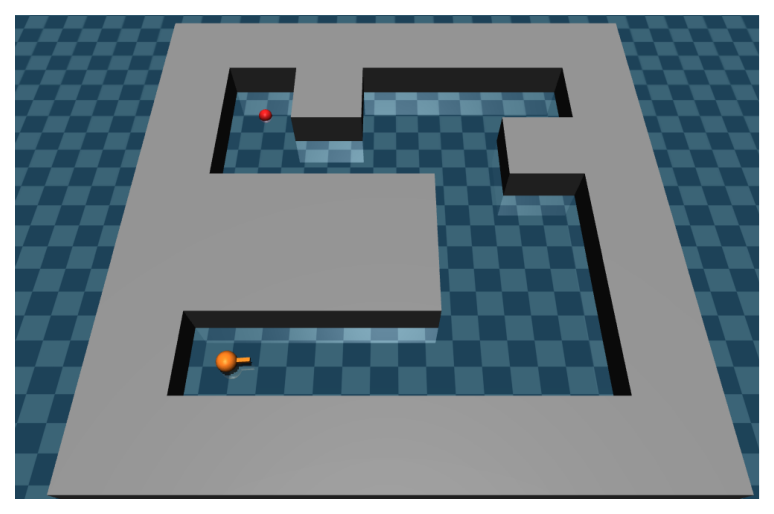}
        \caption{\scriptsize U-maze}
        \end{subfigure}
        \begin{subfigure}[b]{60pt}
        \centering
        \includegraphics[width=\textwidth,height=2.3cm]{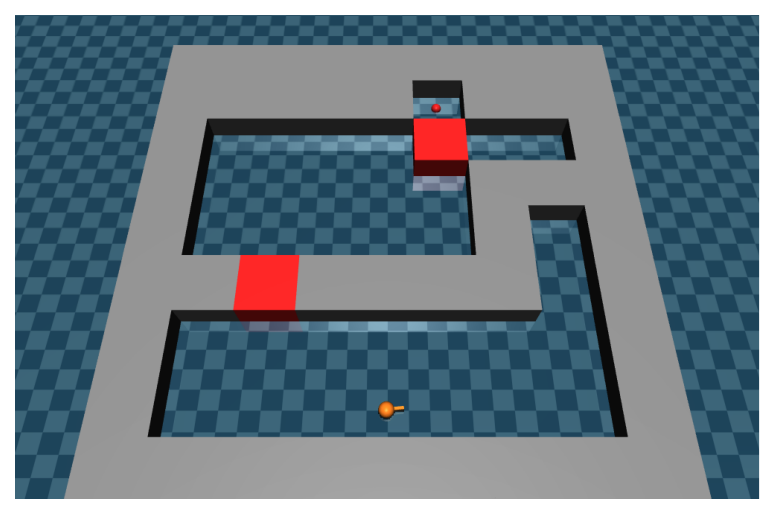}
        \caption{\scriptsize Point-push}
        \end{subfigure}
        \begin{subfigure}[b]{60pt}
        \centering
        \includegraphics[width=\textwidth,height=2.3cm]{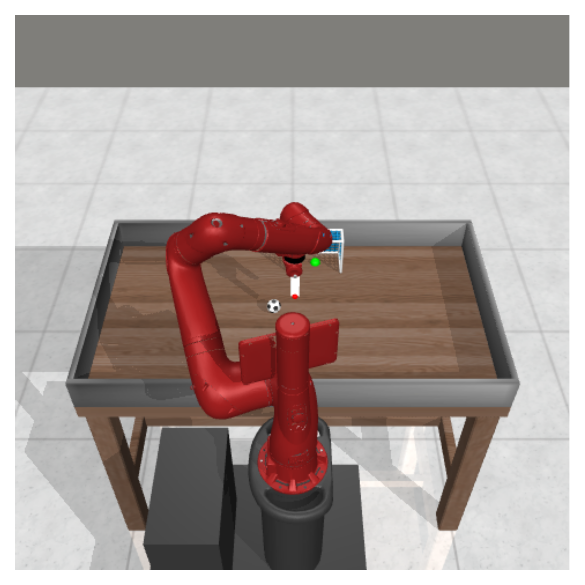}
        \caption{\scriptsize soccer}
        \end{subfigure}
        \begin{subfigure}[b]{60pt}
        \centering
        \includegraphics[width=\textwidth,height=2.3cm]{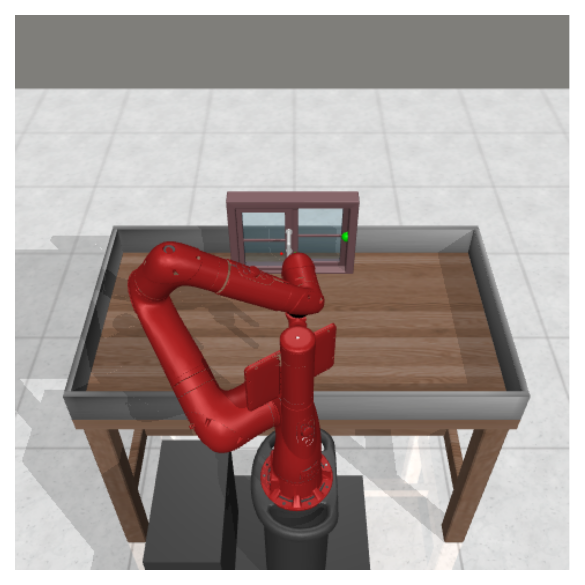}
        \caption{\scriptsize window-open}
        \end{subfigure}
        \begin{subfigure}[b]{60pt}
        \centering
        \includegraphics[width=\textwidth,height=2.3cm]{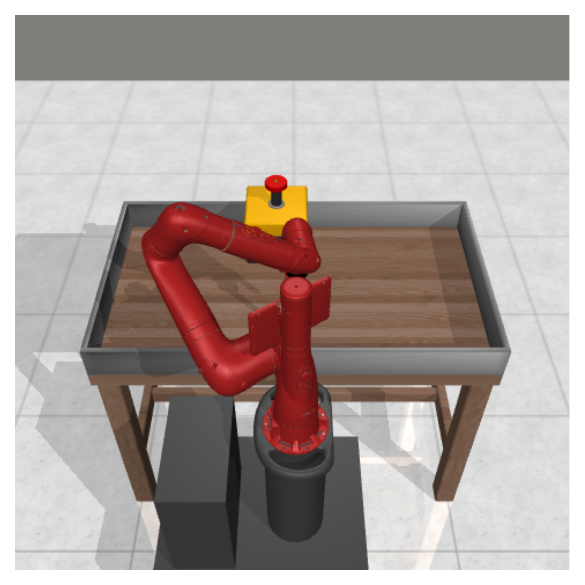}
        \caption{\scriptsize press-button}
        \end{subfigure}
    \caption{ The environments used in our experiments.}
    \label{fig: envs}
\end{figure*}

\begin{figure}[t]
    \centering
        \begin{subfigure}[b]{150pt}
        \centering
        \includegraphics[width=\textwidth]{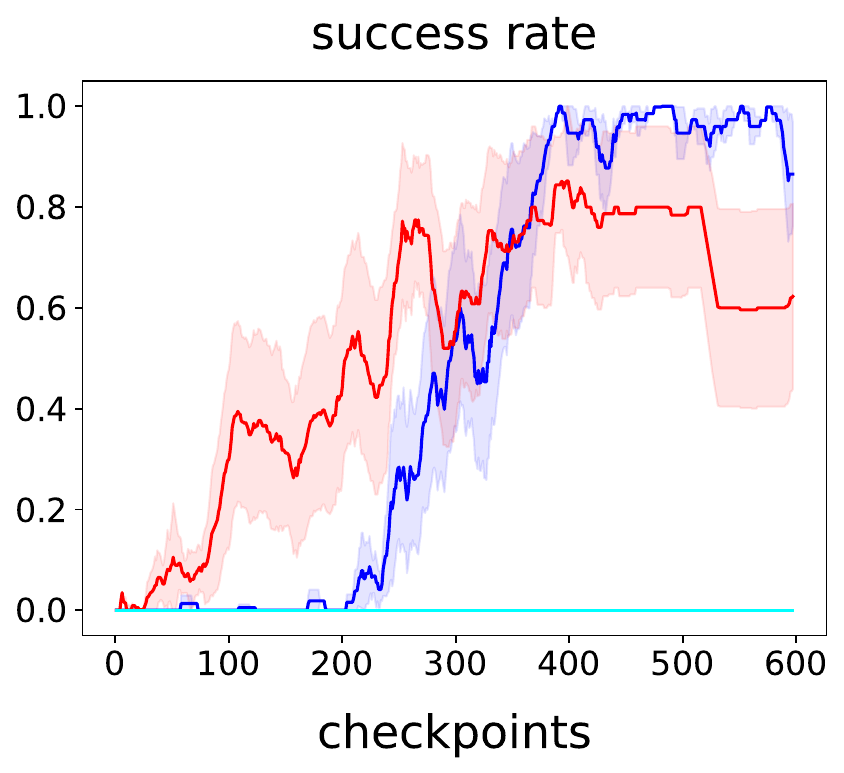}
        \caption{Wall-maze}
        \vspace{10pt}
        \end{subfigure}
        \begin{subfigure}[b]{150pt}
        \centering
        \includegraphics[width=\textwidth]{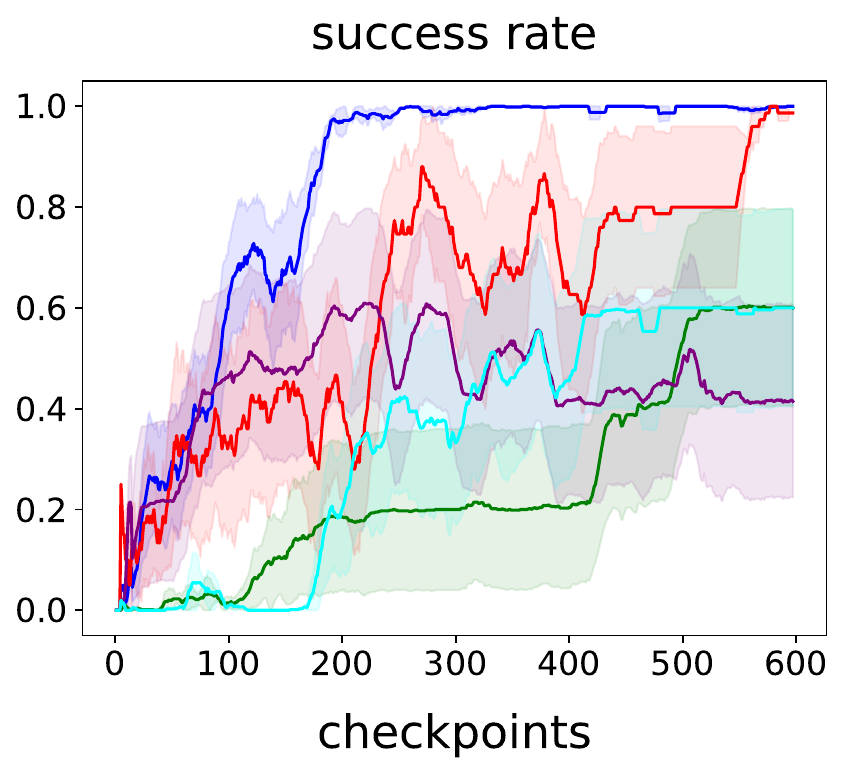}
        \caption{U-maze}
        \vspace{10pt}
        \end{subfigure}
        \begin{subfigure}[b]{150pt}
        \centering
        \includegraphics[width=\textwidth]{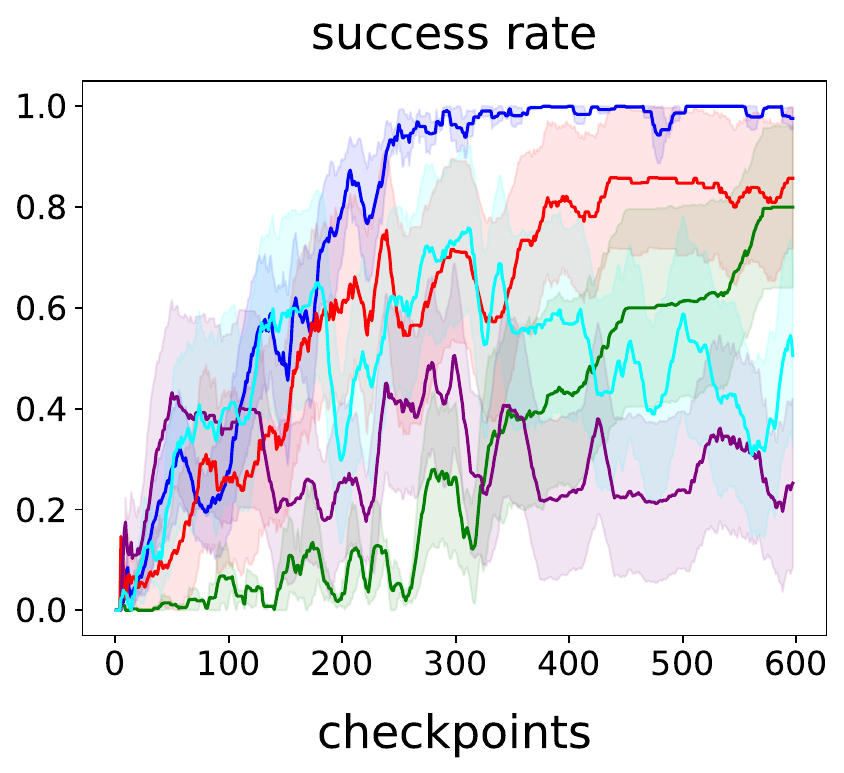}
        \caption{Point-push}
        \vspace{10pt}
        \end{subfigure}
        \begin{subfigure}[b]{150pt}
        \centering
        \includegraphics[width=\textwidth]{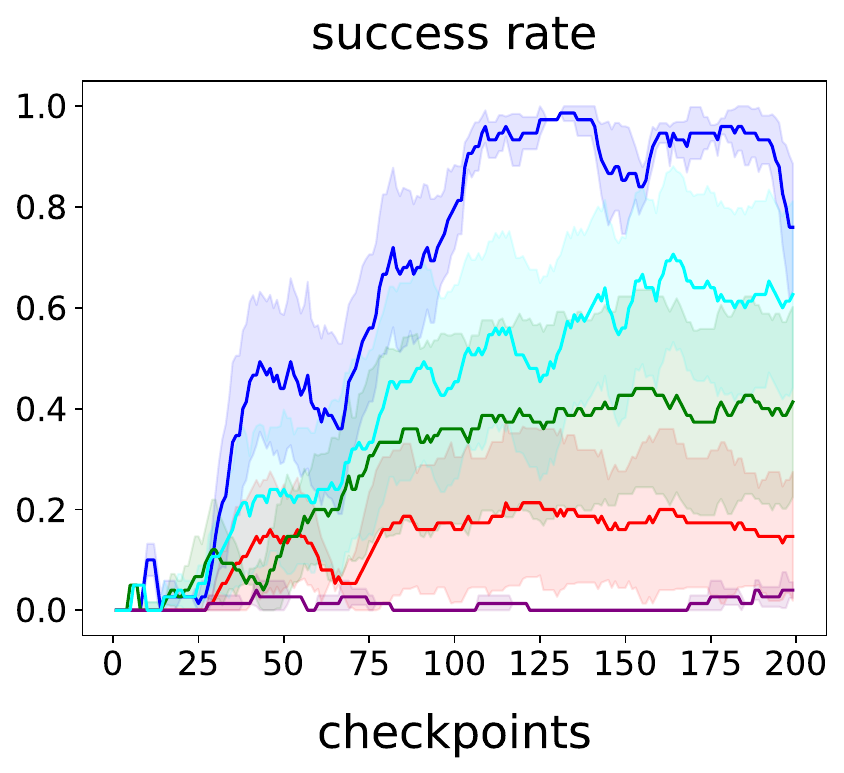}
        \caption{soccer}
        \end{subfigure}
        \begin{subfigure}[b]{150pt}
        \centering
        \includegraphics[width=\textwidth]{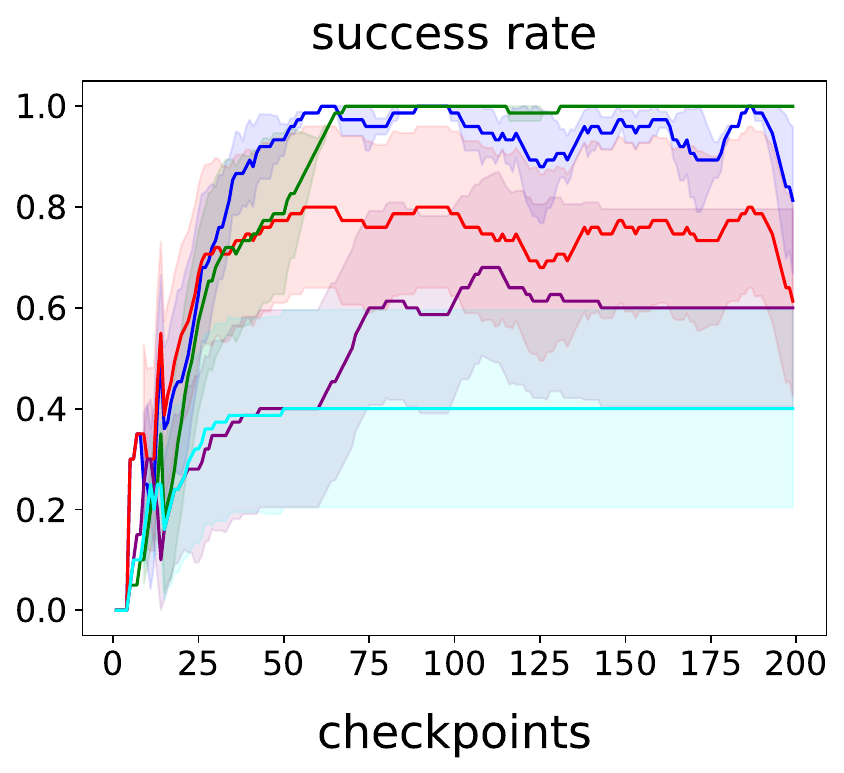}
        \caption{window-open}
        \end{subfigure}
        \begin{subfigure}[b]{150pt}
        \centering
        \includegraphics[width=\textwidth]{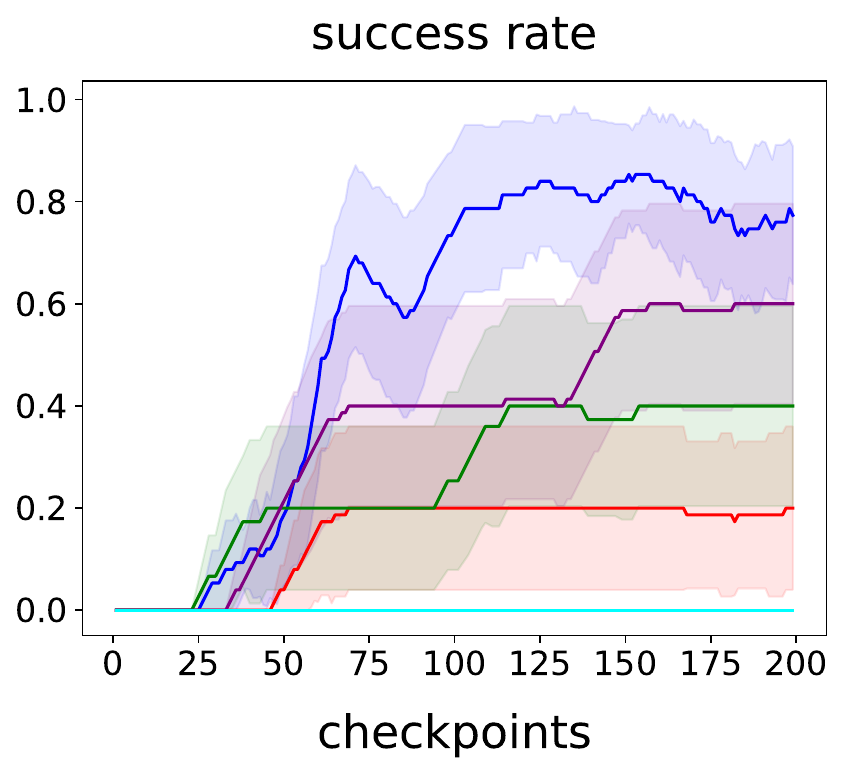}
        \caption{press-button}
        \end{subfigure}
        \hspace{-20pt} \includegraphics[width=0.6\textwidth]{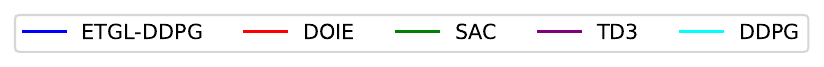}
    \caption{ The success rates across all environments, averaged over 5 runs with different random seeds. Shaded areas represent one standard deviation. We trained all methods for 6 million frames in the navigation environments and 2 million frames in the manipulation environments, with success rates reported at every $10^{5}$-step checkpoint. A moving average with a window size of 10 is applied to all methods for better readability.}
    \label{fig: overall performance}
\end{figure}


We consider two types of tasks: \emph{navigation} and \emph{manipulation}. We use three sparse-reward continuous environments for navigation. The first environment is a 2D maze called \emph{Wall-maze} \citep{trott2019keeping}, where a reward of -1 is given at each step, and a reward of 10 is given if the goal is reached. The start and goal states for each episode are randomly selected from the blue and green regions, respectively, as shown in Figure \ref{fig: envs}a. The agent’s action (dx,dy) determines the amount of movement along both axes. The environment contains a gradient cliff feature \citep{lehman2018more}, where the fastest way to reach the goal results in a deadlock close to the goal. Our second and third 3D environments are \emph{U-maze} (Figure \ref{fig: envs}b) and \emph{Point-push} (Figure \ref{fig: envs}c) \citep{github}, designed using the MuJoCo physics engine \citep{todorov2012mujoco}. In both environments, a robot (orange ball) seeks to reach the goal (red region). In Point-push, the robot must additionally push aside the two movable red blocks that obstruct the path to the goal. A small negative reward of -0.001 is given at each step unless the goal is reached, where the reward is 1. In each episode, the robot starts near the same position with slight random variations, but the goal region remains fixed.

We also employ three manipulation tasks: \emph{window-open}, \emph{soccer}, and \emph{button-press} (Figures \ref{fig: envs}d, e, and f) \citep{yu2020meta}. In window-open, the goal is to push the window open; in soccer, the goal is to kick the ball into the goal; and in button-press, the aim is to press the top-down button. Each episode begins with the robot's gripper in a randomized starting position, while the positions of other objects remain constant. The original versions of these tasks employ a uniquely shaped reward function for each task. However, these versions offer limited challenges for exploration, as standard baselines, such as SAC, demonstrate strong performance~\citep{yu2020meta}. We modified the original reward function to be sparse, transforming these tasks into challenging exploration problems.

The maximum number of steps per episode is set to 100 for Wall-maze and 500 for all other environments. Across all methods, the neural network architecture consists of 3 hidden layers with 128 units each, using ReLU activation functions. For standard baselines, we utilize the implementations from OpenAI Gym \citep{baselines}, and for other baselines, we rely on their publicly available implementations. After testing various configurations, we found that $\epsilon t$-greedy and $\epsilon z$-greedy perform best with budgets of $N=40$ and $N=15$, respectively, across these environments. Additional details about the environments and experimental setup are provided in Appendix \ref{sec: exp setup}.

\subsection{Overall Performance of ETGL-DDPG}\label{Performance Analysis}

We evaluate the performance of ETGL-DDPG compared to state-of-the-art methods. We compare with SAC \citep{haarnoja2018soft}, TD3~\citep{fujimoto2018addressing}, DDPG, and DOIE~\citep{lobel2022optimistic}. DOIE demonstrates state-of-the-art performance in challenging sparse-reward continuous control problems by drastically improving the exploration. While both DOIE and $\epsilon t$-greedy use a similarity measure between new and observed states, DOIE applies this to compute an optimistic value function rather than solely guiding the agent to unexplored areas. The results are shown in Figure \ref{fig: overall performance}. In the navigation environments, ETGL-DDPG and DOIE demonstrate superior performance compared to other methods, with ETGL-DDPG achieving a success rate of 1 faster than DOIE. Notably, Wall-maze presents a more challenging task among navigation environments, where only ETGL-DDPG and DOIE are able to achieve a success rate above zero. In manipulation tasks, the press-button poses the hardest challenge as none of the methods achieve a success rate of 1. ETGL-DDPG still outperforms all other approaches, while DOIE underperforms compared to SAC, indicating its limitations in adapting to high dimensional environments. Regarding training time, we observed that ETGL-DDPG requires approximately 1.5 times the training time of DDPG. This extended training duration can be attributed to two main components: $\epsilon t$-greedy and the longest $n$-step return. Among these, $\epsilon t$-greedy has a greater effect, as each step necessitates a local search.

\begin{figure*}[t]
    \centering
        \begin{subfigure}[b]{140pt}
        \centering
        \includegraphics[width=\textwidth]{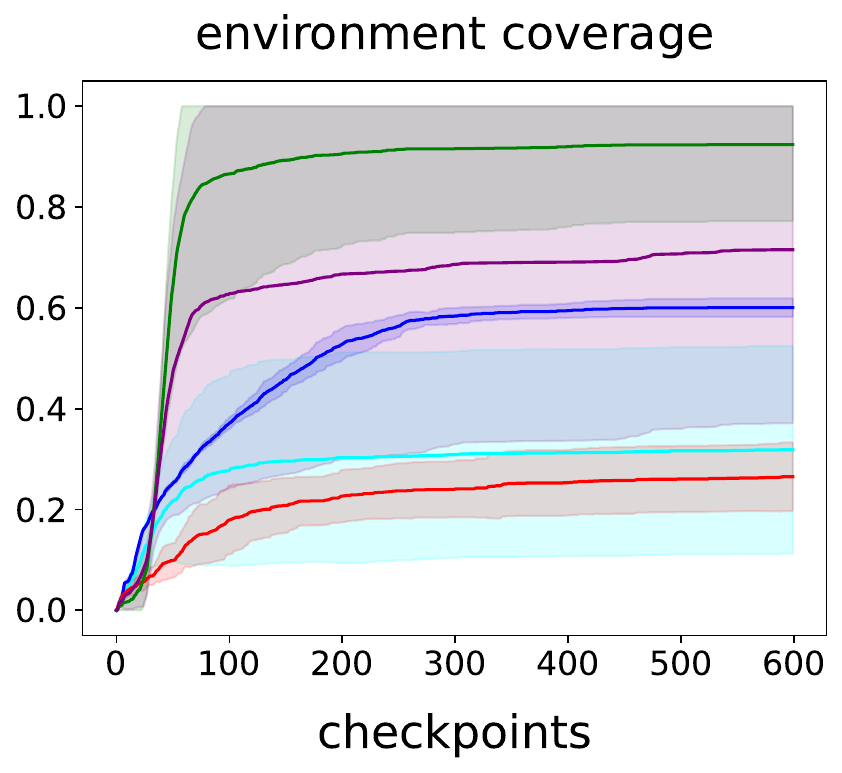}
        \caption{Wall-maze}
        \end{subfigure}
        \begin{subfigure}[b]{140pt}
        \centering
        \includegraphics[width=\textwidth]{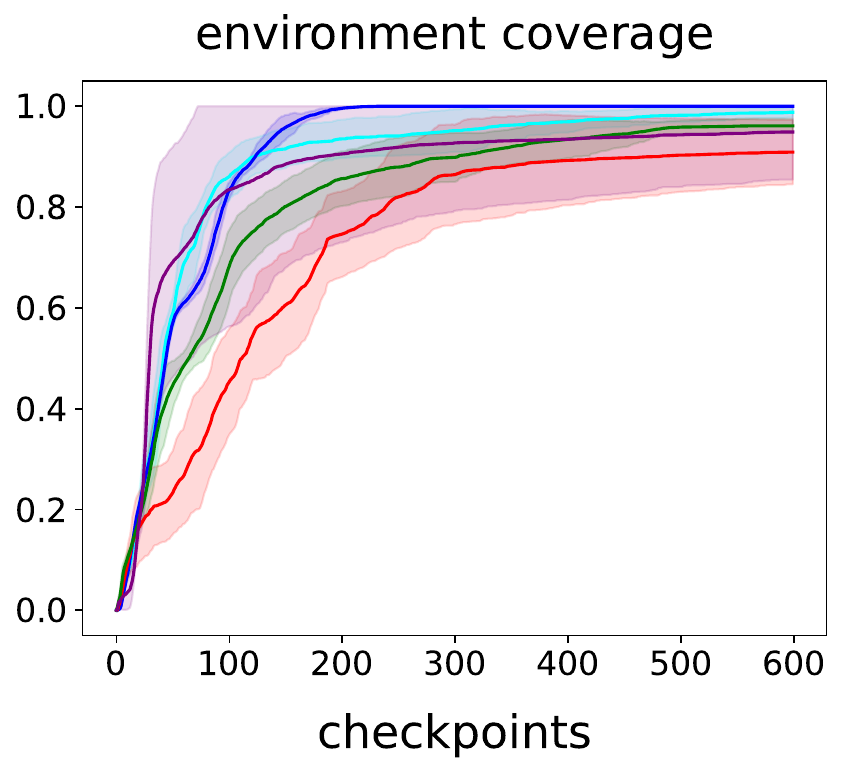}
        \caption{U-maze}
        \end{subfigure}
        \begin{subfigure}[b]{140pt}
        \centering
        \includegraphics[width=\textwidth]{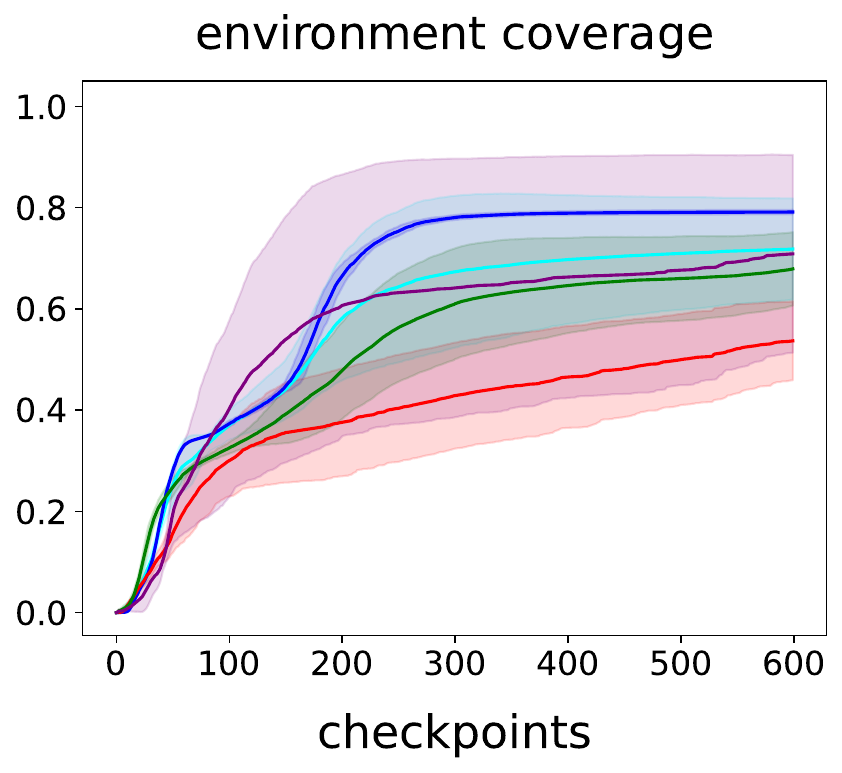}
        \caption{Point-push}
        \end{subfigure}
        \hspace{-20pt} \includegraphics[width=0.80\textwidth]{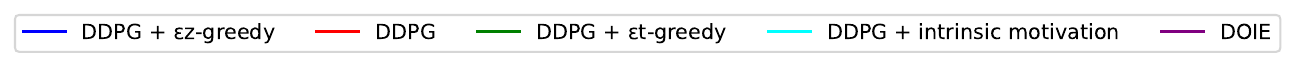}
    \caption{ The environment coverage for exploration strategies in navigation environments. On the graph, the y-axis indicates the portion of the environment that has been covered, and the checkpoints occur every $10^{4}$ steps shown on the x-axis. The results are given for the average of 10 runs with random seeds. The shaded region represents one standard deviation.}
    \label{fig: environment coverage}
\end{figure*}

\subsection{Environment Coverage through Exploration}\label{explore}

We now examine how effective $\epsilon t$-greedy is in covering the environment. To do so, we discretize the navigation environments into small cells. A cell is considered visited if the agent encounters a sufficient number of distinct states within it, and the overall environment coverage is quantified as the fraction of visited cells. Figure \ref{fig: environment coverage} presents a comparison of environment coverage across different exploration strategies. All strategies except DOIE, which uses Radial Basis Function Deep Q-Network (RBFDQN)~\citep{asadi2021deep}, use DDPG as their underlying algorithm. RBFDQN is an enhanced DQN variant that incorporates Radial Basis Functions (RBF) to achieve more accurate Q-value approximations in continuous environments. In Wall-maze, $\epsilon t$-greedy achieves 80\% coverage, whereas DOIE reaches 60\%. $\epsilon z$-greedy covers approximately half of the environment, whereas the remaining methods manage to explore only around 30\%. In U-maze, all strategies are successful, covering 80\% or more of the environment. Even so, $\epsilon z$-greedy reaches full coverage faster than other methods. In Point-push, none of the methods can fully cover the environment and ultimately achieve nearly the same coverage. The tree budget $N$ serves as an upper bound on the option length in $\epsilon t$-greedy, analogous to its role in $\epsilon z$-greedy when using a uniform distribution $z(n) = \mathbbm{1}_{n \leq N}/N$. We evaluate environment coverage under varying values of $N$ in Section~\ref{sec: impact of N}, where results indicate that $\epsilon t$-greedy achieves improved coverage as $N$ increases. In contrast, $\epsilon z$-greedy does not exhibit a consistent improvement in coverage and, in some cases, experiences a decline. This highlights the advantages of directed exploration over undirected approaches. Finally, we present the distribution of final states reached in the episodes to illustrate the order in which the agent visits different regions of the environment (see Appendix~\ref{sec: terminal dist}). 

In $\epsilon z$-greedy, various distributions can be used to select options based on their length. Although a uniform distribution is a straightforward choice, alternative distributions such as the zeta distribution $z(n) \propto n^{-\mu}$ (where $\mu$ controls the decay rate) can also be employed. Empirical results by \citep{dabney2020temporally} show that the zeta distribution slightly outperforms the uniform distribution, and moreover, produces a pattern akin to \emph{Lévy flights} observed in certain ecological foraging models \citep{baronchelli2013levy}. Figure~\ref{fig: option distribution} shows the distribution of option lengths generated by $\epsilon t$-greedy during training across all environments, revealing two main observations: (1) moderate-length options have the highest probability of being selected, and (2) probabilities decay as option length increases. While this decay pattern resembles that of the zeta distribution (where length~1 is most likely), $\epsilon t$-greedy instead favors moderate lengths. Note that Although options longer than 12 do occur, their frequency is extremely low and thus not visible in Figure~\ref{fig: option distribution}. 

\begin{figure*}[t]
    \centering
        \begin{subfigure}[b]{150pt}
        \centering
        \includegraphics[width=\textwidth]{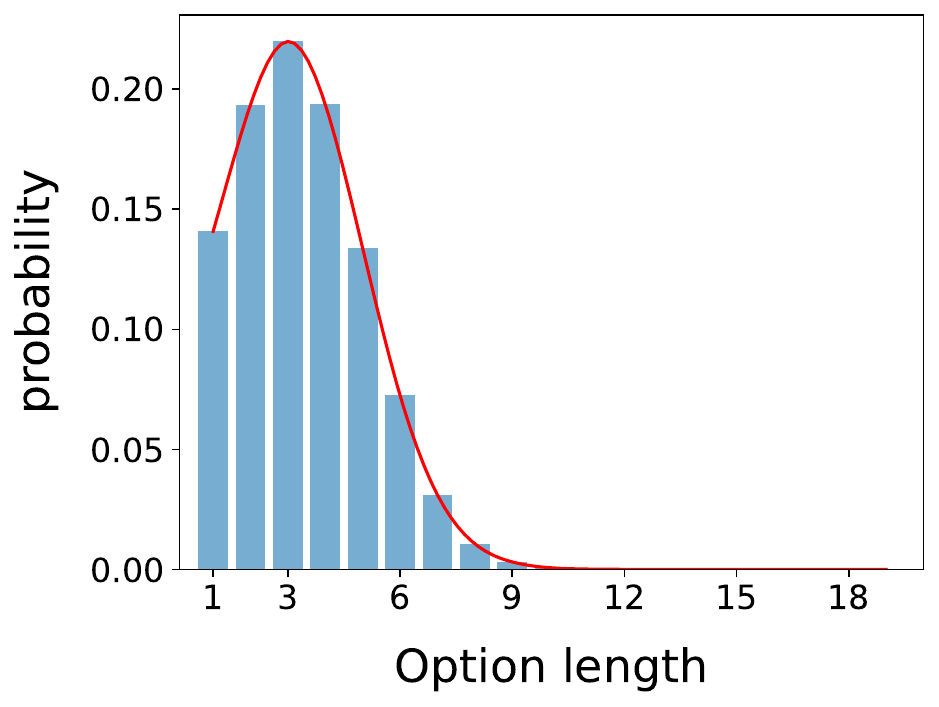}
        \caption{Wall-maze}
        \end{subfigure}
        \begin{subfigure}[b]{150pt}
        \centering
        \includegraphics[width=\textwidth]{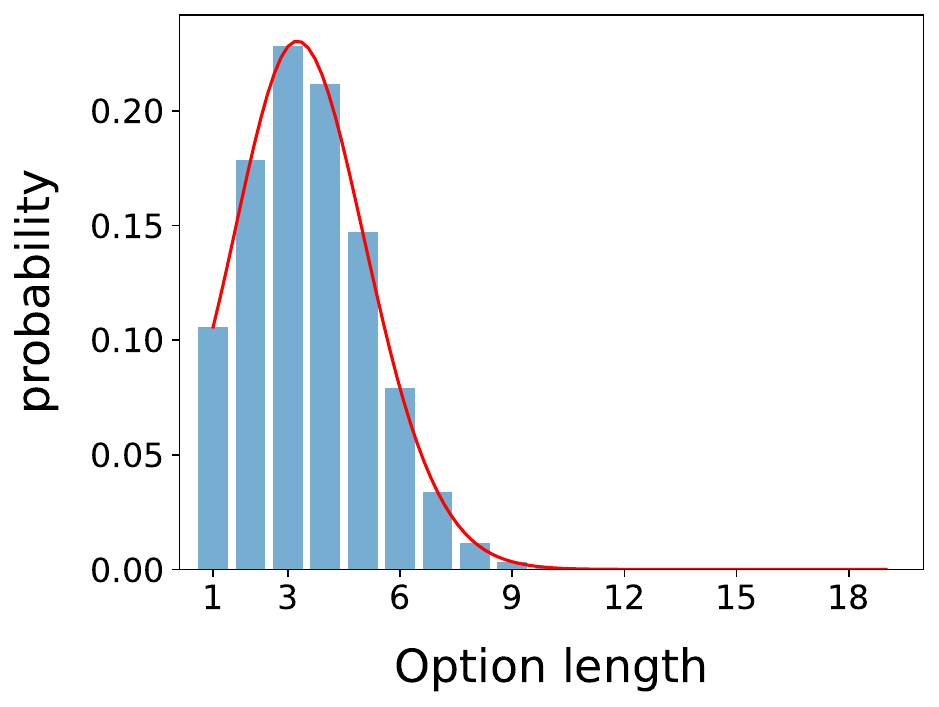}
        \caption{U-maze}
        \end{subfigure}
        \begin{subfigure}[b]{150pt}
        \centering
        \includegraphics[width=\textwidth]{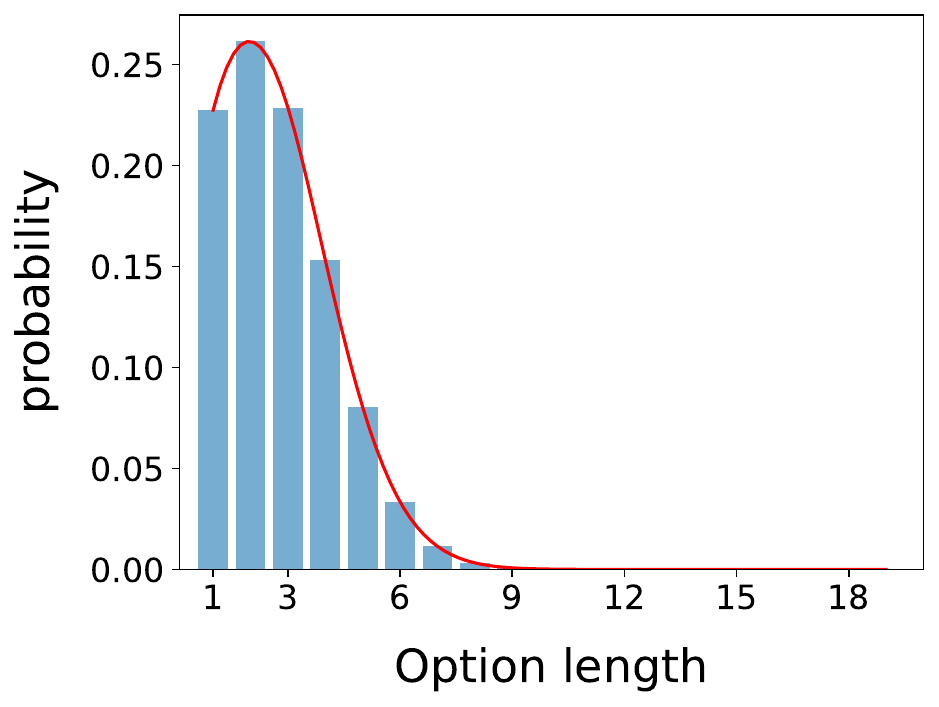}
        \caption{Point-push}
        \end{subfigure}
        \begin{subfigure}[b]{150pt}
        \centering
        \vspace{5pt}
        \includegraphics[width=\textwidth]{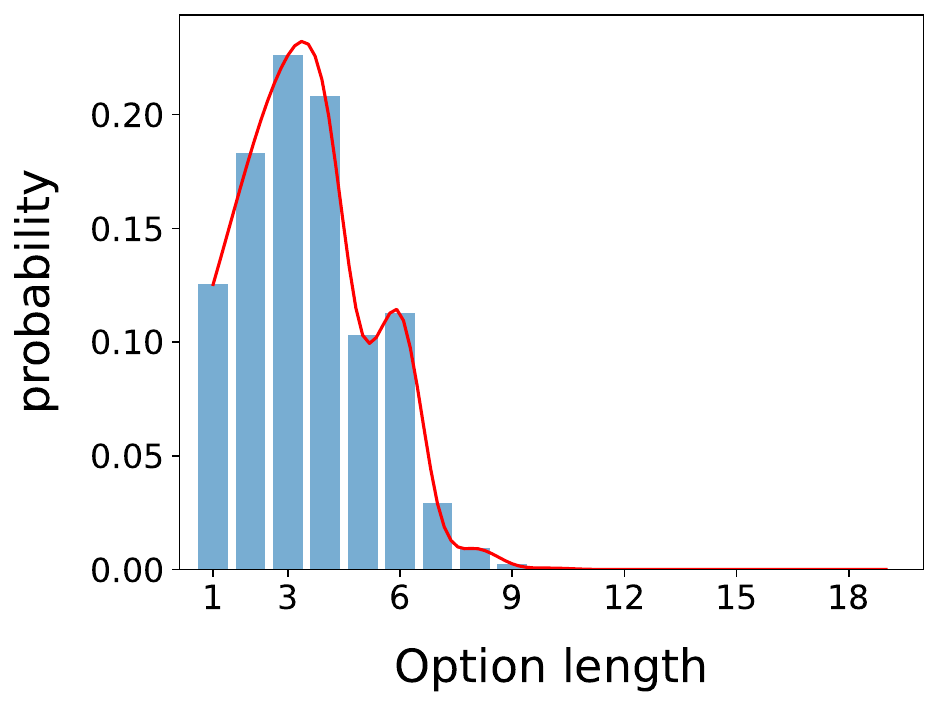}
        \caption{soccer}
        \end{subfigure}
        \begin{subfigure}[b]{150pt}
        \centering
        \includegraphics[width=\textwidth]{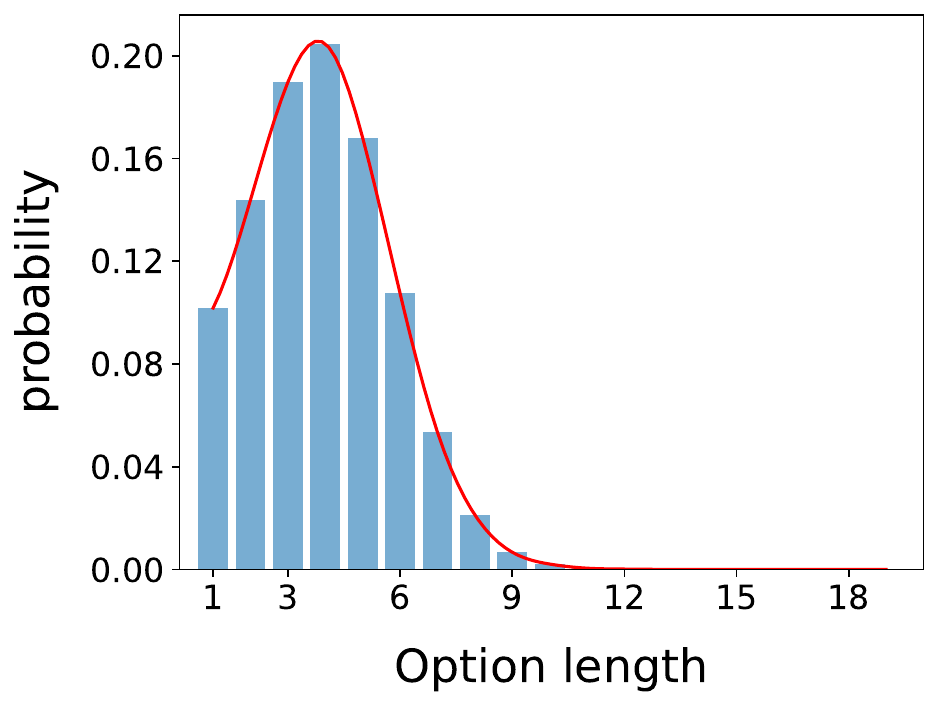}
        \caption{window-open}
        \end{subfigure}
        \begin{subfigure}[b]{150pt}
        \centering
        \includegraphics[width=\textwidth]{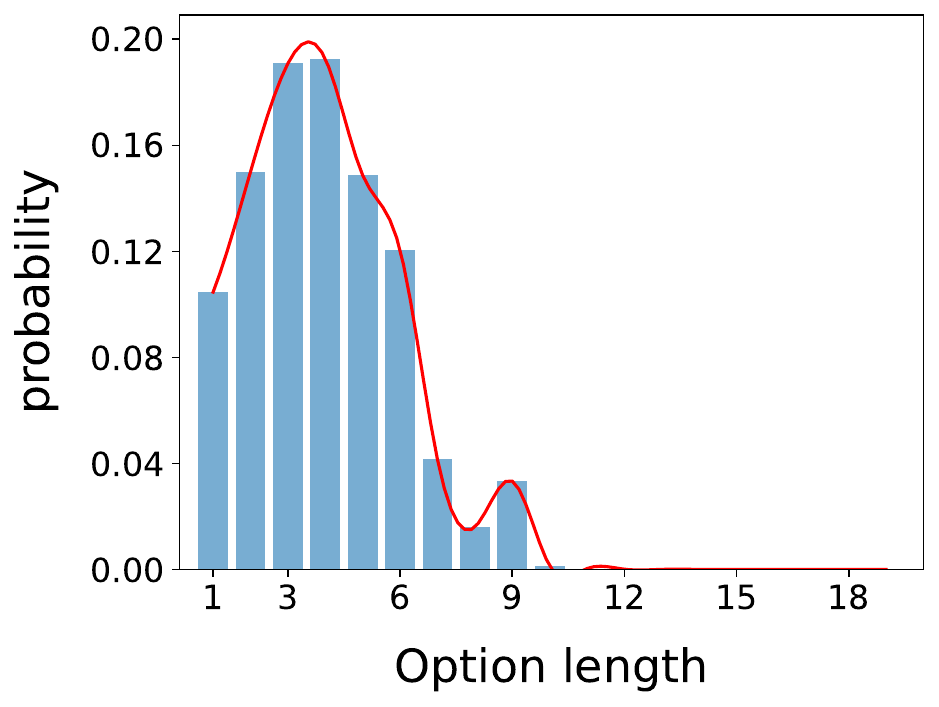}
        \caption{press-button}
        \end{subfigure}
    \caption{ The distribution of options chosen in training. The x-axis represents the length of the options and the y-axis indicates the probability of each length, calculated based on how often each length is chosen across all options.}
    \label{fig: option distribution}
\end{figure*}

\begin{figure*}[t]
    \centering
        \begin{subfigure}[b]{150pt}
        \centering
        \includegraphics[width=\textwidth]{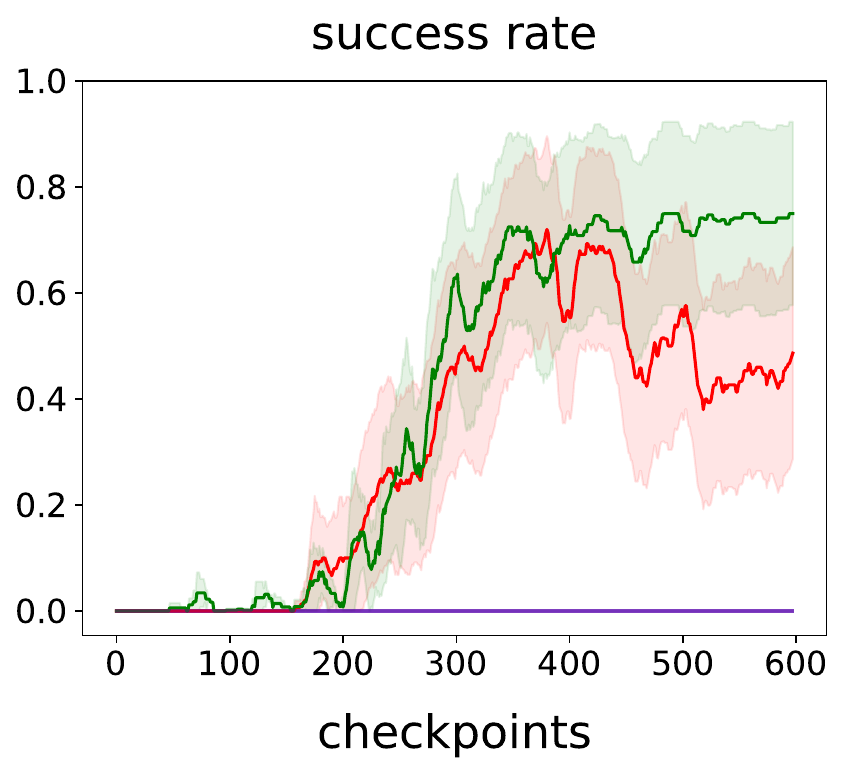}
        \caption{Wall-maze}
        \end{subfigure}
        \begin{subfigure}[b]{150pt}
        \centering
        \includegraphics[width=\textwidth]{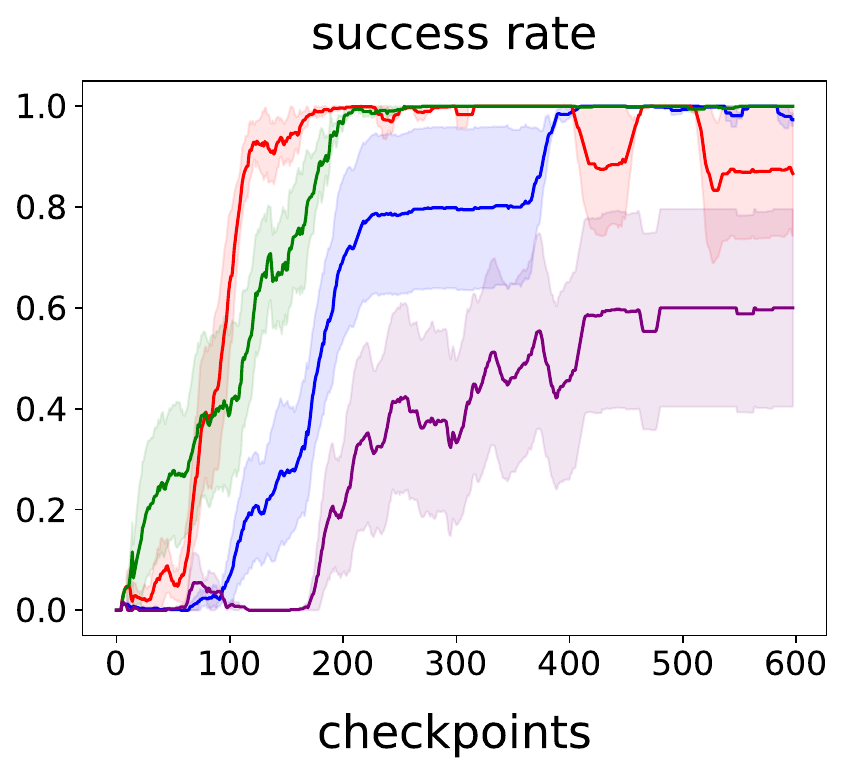}
        \caption{U-maze}
        \end{subfigure}
        \begin{subfigure}[b]{150pt}
        \centering
        \includegraphics[width=\textwidth]{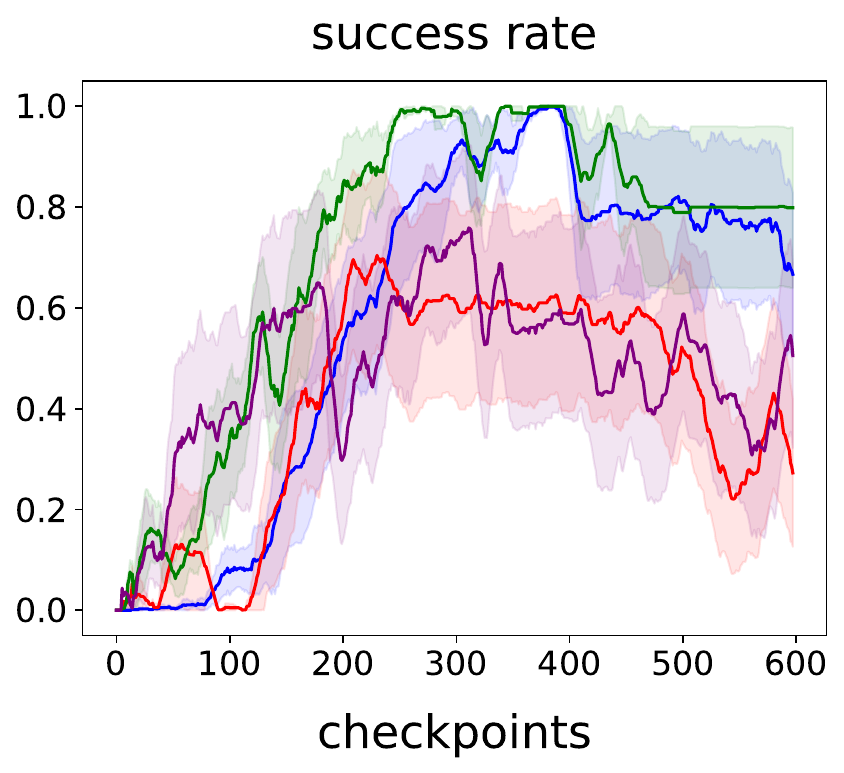}
        \caption{Point-push}
        \end{subfigure}
        \hspace{-20pt} \includegraphics[width=\textwidth]{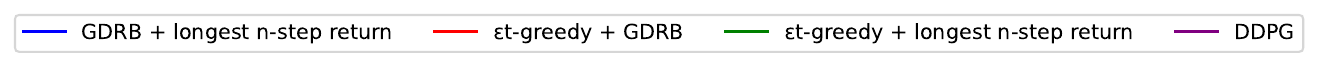}
    \caption{ The ablation study for two-component combinations in navigation environments. The success rates are averaged over 5 runs with different random seeds and the shaded areas represent one standard deviation.}
    \label{fig: leave-one-out ablation}
\end{figure*}

\subsection{Effectiveness of Each New Component in ETGL-DDPG}\label{ablation}

We evaluated the performance of ETGL-DDPG, and now we assess the impact of each component on DDPG separately. Figure \ref{fig: individual impact} presents the results for all environments. $\epsilon t$-greedy demonstrates the most improvement across all environments and is the only method that enhances the performance of DDPG in the Wall-maze, highlighting the critical role of our exploration strategy. GDRB shows a positive impact on DDPG performance in all environments, except for soccer, where DDPG alone outperforms all baselines. Additionally, we replaced reservoir sampling with FIFO as the retention policy in GDRB and observed similar results. The longest n-step return has a positive effect only in U-maze and press-button tasks, while it negatively impacts performance in soccer and Point-push. We attribute this to the inherently high variance of multi-step updates. A comparison of Figures \ref{fig: overall performance} and \ref{fig: individual impact} across all environments shows that ETGL-DDPG consistently outperforms the use of each component individually, supporting the effectiveness of their combination. 

To conclude the analysis of each component, we conduct three additional experiments in this section. First, we perform an ablation study by considering all possible two-component combinations, where the omitted component in each combination is replaced with its DDPG counterpart. The results, shown in Figure~\ref{fig: leave-one-out ablation} for the navigation environments, indicate that all combinations outperform DDPG in every environment except \emph{Wall-maze}, where the agent fails to achieve a non-zero success rate without $\epsilon$-greedy exploration. Moreover, in \emph{Point-push}, omitting the longest $n$-step return while following the original DDPG strategy causes a noticeable decline in success rates during the later stages of training.

Second, we compare the longest $n$-step return with the method proposed by \citet{meng2021effect}, who experimented with $n$ ranging from 1 to 8 and tested both the minimum of the $n$-step returns and their average. In their experiments, \emph{MMDDPG}, which computes the average return over 1 to 8 steps (\emph{avg8-step}), outperforms other variants in robotic tasks. Here, we replace the longest $n$-step return in ETGL-DDPG with the avg8-step return to evaluate its effectiveness in reward propagation. The results, presented in Appendix~\ref{n-step section}, show that both methods perform similarly across most environments, except in \emph{Wall-maze}, where the longest $n$-step return outperforms the avg8-step approach. Notably, the avg8-step method converges more rapidly, likely due to the stabilizing effect of averaging multiple updates.

Lastly, we compare GDRB with Hindsight Experience Replay (HER)~\cite{andrychowicz2017hindsight}, a method for sparse-reward tasks that increases feedback to the agent by treating certain states in unsuccessful episodes as imaginary goals. By assigning rewards to these artificially generated goals, HER can leverage unsuccessful episodes more effectively. Figure~\ref{fig: her-gdrb comparison} in Appendix~\ref{sec:her} compares the impact of GDRB and HER on DDPG in navigation environments. In \emph{Wall-maze}, HER enables the agent to learn from previously unsuccessful trajectories, allowing it to achieve success rates above zero. In the other two environments, both GDRB and HER exhibit similar performance, although HER converges more quickly due to its reward-reshaping mechanism, which provides stronger guidance in the initial stages of training.  

\begin{figure*}[t]
    \centering
        \begin{subfigure}[b]{150pt}
        \centering
        \includegraphics[width=\textwidth]{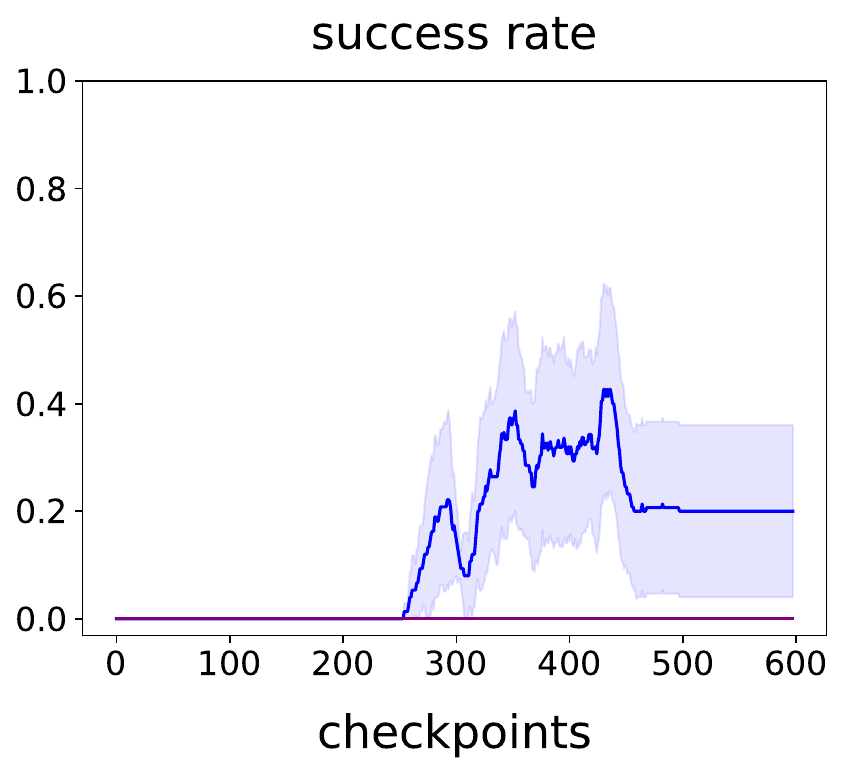}
        \caption{Wall-maze}
        \vspace{10pt}
        \end{subfigure}
        \begin{subfigure}[b]{150pt}
        \centering
        \includegraphics[width=\textwidth]{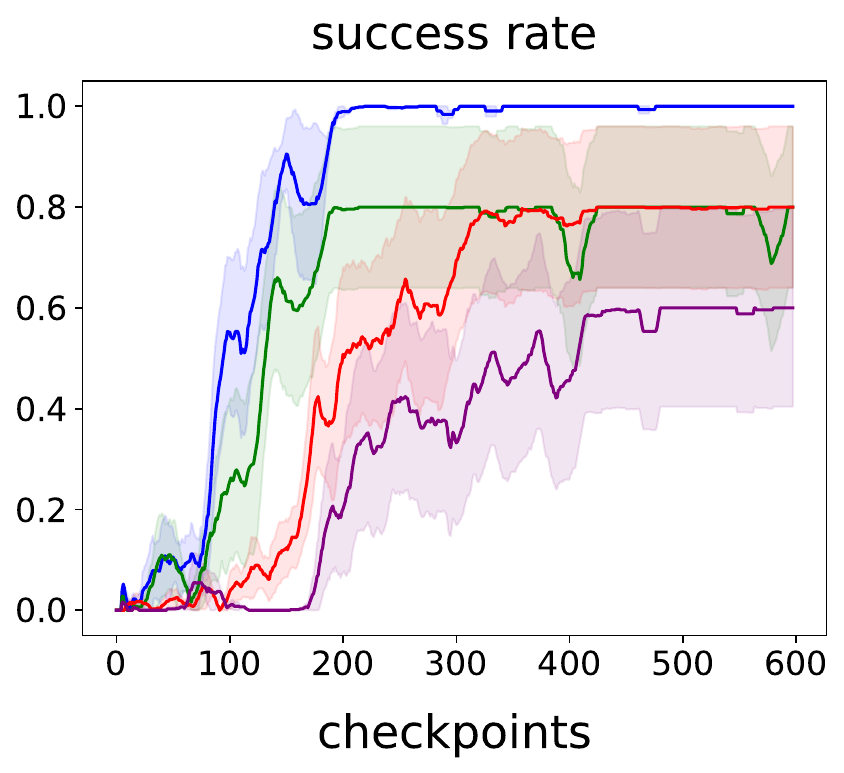}
        \caption{U-maze}
        \vspace{10pt}
        \end{subfigure}
        \begin{subfigure}[b]{150pt}
        \centering
        \includegraphics[width=\textwidth]{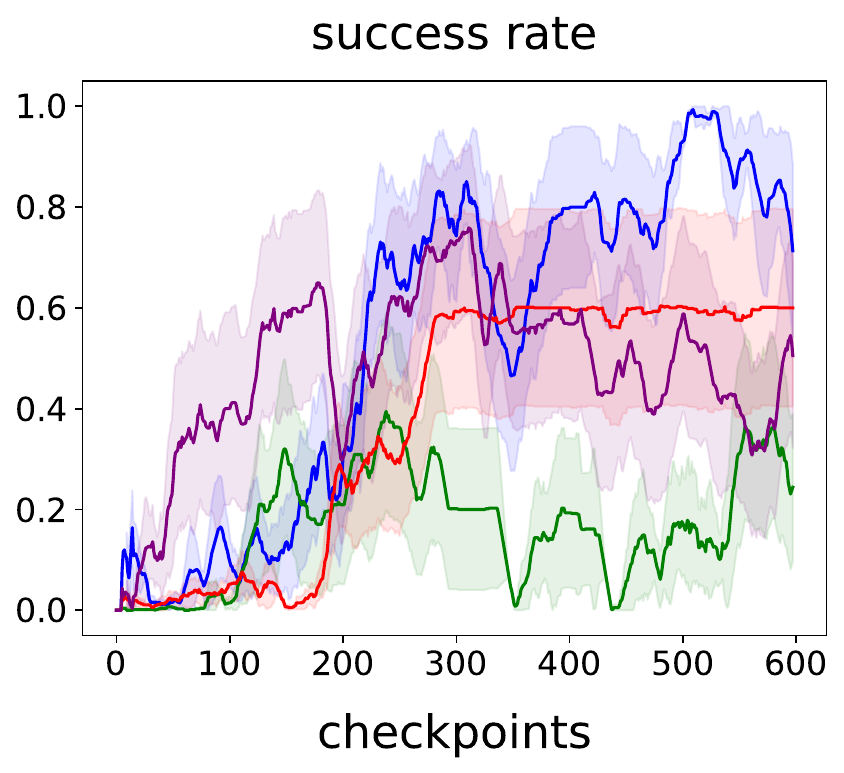}
        \caption{Point-push}
        \vspace{10pt}
        \end{subfigure}
        \begin{subfigure}[b]{150pt}
        \centering
        \includegraphics[width=\textwidth]{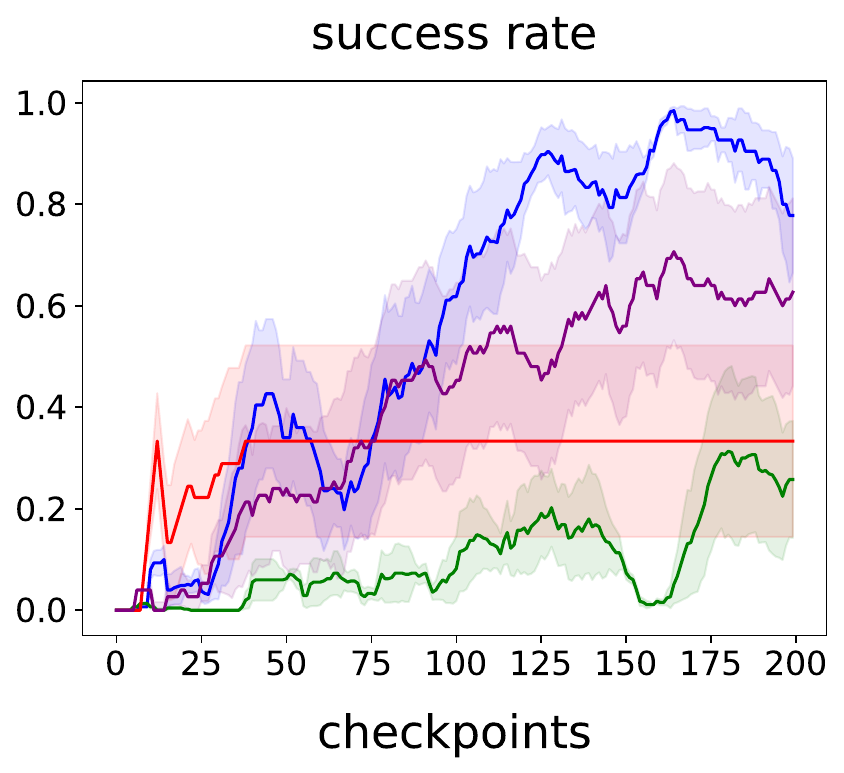}
        \caption{soccer}
        \end{subfigure}
        \begin{subfigure}[b]{150pt}
        \centering
        \includegraphics[width=\textwidth]{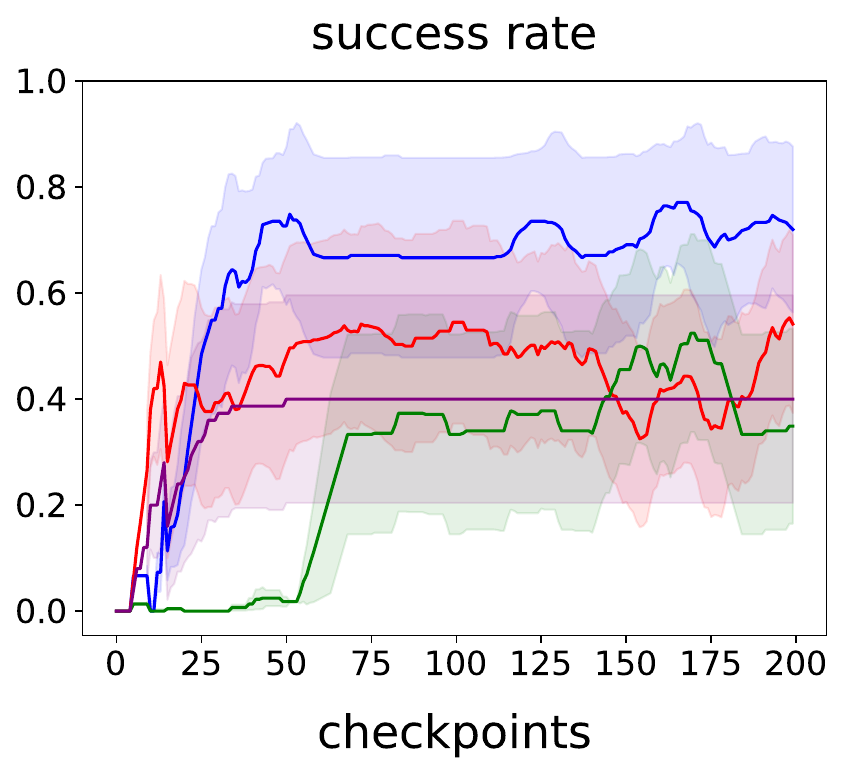}
        \caption{window-open}
        \end{subfigure}
        \begin{subfigure}[b]{150pt}
        \centering
        \includegraphics[width=\textwidth]{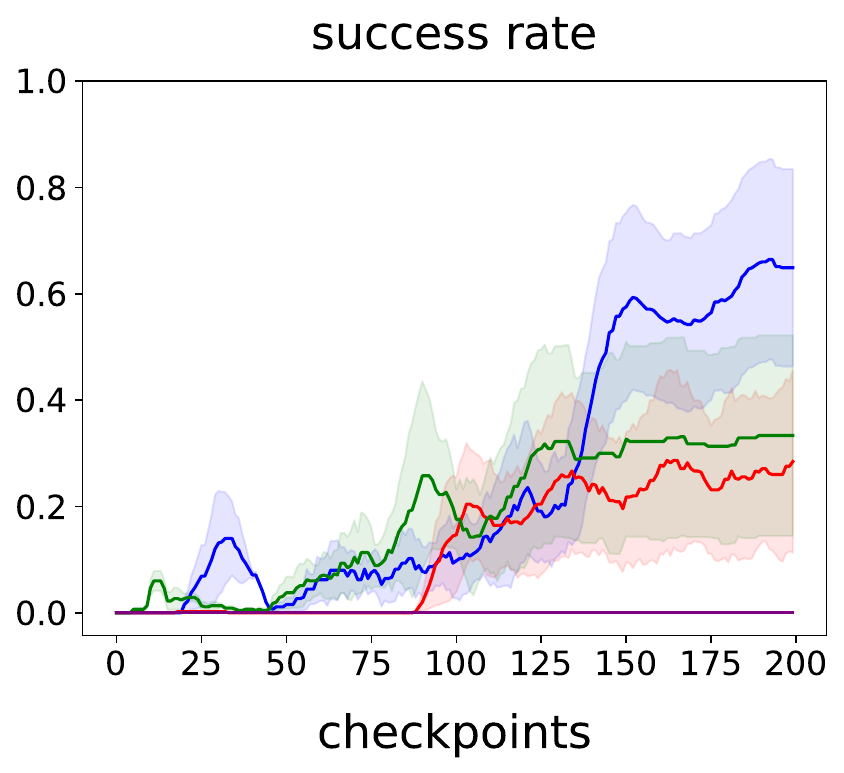}
        \caption{press-button}
        \end{subfigure}
        \hspace{-20pt} \includegraphics[width=0.80\textwidth]{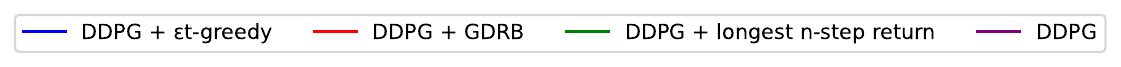}
    \caption{Analyzing the individual impact of three components on DDPG: $\epsilon t$-greedy, GDRB, and longest n-step return.}
    \label{fig: individual impact}
\end{figure*}

\section{Related Work}

\textbf{Intrinsic Motivation.} Intrinsic motivation methods ~\citep{burda2018exploration, pathak2017curiosity, ostrovski2017count,tang2017exploration} provide a reward bonus for unexplored areas of the state space, enabling the agent to receive more feedback in sparse-reward scenarios. However, since a transition can receive varying rewards at different time steps, these methods make the reward function non-stationary, violating the Markov assumption of MDPs. They are also sensitive to hyperparameters, requiring extensive tuning for each environment~\citep{schafer2021decoupled}. Decoupled RL algorithms~\citep{schafer2021decoupled, badia2019never} address this issue by training separate policies for exploration and exploitation, where the exploitation policy optimizes sparse rewards while leveraging data collected by the exploration policy. Although this resolves the non-stationarity issue, it doubles the computational cost. $\epsilon t$-greedy is similar to~\citet{tang2017exploration} in identifying less-explored states using the SimHash function. However, unlike intrinsic motivation methods, $\epsilon t$-greedy does not modify the reward function. Instead, it directs the agent toward less-explored states, preserving the stationarity of the reward function while avoiding the need for additional computation.
    
\textbf{Other Exploration Techniques.} There are also other strategies to improve exploration without using reward bonuses. \citet{colas2018gep} use a policy search process to generate diverse data to improve training of DDPG. \citet{liu2018competitive} introduce a competition-based exploration method where two agents (A and B) compete with each other. Agent A is penalized for visiting states visited by B, while B is rewarded for visiting states discovered by A. \citet{plappert2018parameter} directly inject noise into the policy's parameter space instead of the action space. \citet{eysenbach2019search} build a graph using states in the replay buffer, allowing the agent to navigate distant regions of the environment by applying Dijkstra's algorithm. $\epsilon t$-greedy shares similarities with \citet{eysenbach2019search} in constructing a tree structure within the state space. However, while \citet{eysenbach2019search} builds a single global graph to connect the start state to the goal state, $\epsilon t$-greedy generates local trees dynamically, focusing on accessing less-explored areas. \citet{ecoffet2021first} introduce \emph{Go-Explore} which separates exploration and policy optimization into two distinct phases. During the exploration phase, the algorithm systematically explores the environment by maintaining an archive of visited states along with their corresponding trajectories. In the robustification phase, these trajectories are leveraged to train a robust policy using standard reinforcement learning methods, such as PPO. However, a key limitation of Go-Explore is its reliance on resetting trajectories to return to previously visited states, a feature that is often unavailable in many real-world scenarios. \citet{lobel2022optimistic} present Deep Optimistic Initialization for Exploration (DOIE), which improves exploration in continuous control tasks by maintaining optimism in state-action value estimates. \citet{lobel2023flipping} demonstrate that DOIE can estimate visit counts by averaging samples from the Rademacher distribution instead of using density models. \citet{dey2024continual} present COIN, a continual optimistic initialization strategy that extends DOIE to stochastic and non-stationary environments. \citet{chakraborty2023dealing} leverage heavy-tailed action distributions to enhance exploration in continuous control tasks. \citet{wang2022learning} propose a framework in which learned high-level policies select among a set of pre-designed low-level base controllers, allowing the agent to decompose complex long-horizon tasks into simpler control primitives. \citet{dawood2023handling} use model predictive control (MPC), where a learned high-level policy proposes goals or waypoints, and MPC is used to generate low-level actions that guide the agent toward those goals.

\textbf{Experience Replay Buffer.} Rather than uniformly sampling from the buffer, Prioritized Experience Replay (PER) \citep{schaul2015prioritized} prioritizes transitions in the buffer based on reward, recency, or TD error at the expense of $O(\log N)$ per sample, where $N$ is the buffer size. CER~\citep{zhang2017deeper} includes the last transition from the buffer to each sampled batch with $O(1)$ complexity.~\citet{zhang2022replay} learn a conservative value regularizer only from the observed transitions in the replay buffer to improve the sample efficiency of DQN. \citet{pan2022understanding} theoretically show why PER has a better convergence rate than uniform sampling policy when minimizing mean squared error. Furthermore, \citet{pan2022understanding} identify two limitations of PER: outdated priorities and insufficient coverage of the state space. \citet{kompella2023event} propose Stratified Sampling from Event Tables (SSET), a method that partitions the experience replay buffer into distinct Event Tables. Each Event Table captures significant subsequences of optimal behavior, allowing for more targeted sampling during training. GDRB aligns conceptually with SSET if achieving a goal is defined as the sole event. However, SSET employs FIFO for inserting new data into its buffers, whereas GDRB uses reservoir sampling for its success-event buffer. Moreover, in SSET, the sampling ratios between buffers are fixed, while GDRB dynamically adjusts these ratios over time to favor the smaller buffer. 

\textbf{Reward Propagation.} Reward shaping~\citep{laud2004theory, hu2020learning} creates artificial intermediate rewards to facilitate reward propagation. However, designing appropriate intermediate rewards is hard and often problem-specific. \citet{trott2019keeping} address this issue by introducing ~\emph{self-balancing reward shaping} in the context of on-policy learning. Their approach defines the local optimum as the closest visited state to the goal and calculates the reward for each state based on its distance from the local optimum and the goal. By dynamically adjusting the local optimum, the agent is guided toward the optimal behavior. To extract more information from an unsuccessful episode, \citet{andrychowicz2017hindsight} introduce \emph{imaginary goals}. An imaginary goal for state $s$ is a state that is encountered later in the episode. Learning to achieve these imaginary goals helps the agent understand the structure of the environment. \citet{memarianself} propose \emph{Self-Supervised Online Reward Shaping (SORS)}, a method that alternates between ranking trajectories using sparse rewards and training a classifier to infer a dense reward function from these rankings. The inferred dense rewards are then utilized to update the agent's policy. \citet{devidze2024informativeness} present a reward informativeness criterion that adaptively constructs interpretable reward functions based on the agent’s current policy. \citet{wilcox2022monte} propose a Monte Carlo Augmented Actor-Critic (MCA2C) framework that integrates Monte Carlo returns from suboptimal demonstration trajectories into the critic update to improve value estimation in sparse-reward settings. By combining temporal-difference targets with demonstration-based Monte Carlo returns, the method provides more informative learning signals during early training, enabling the agent to better propagate sparse rewards. \citet{barth2018distributed} propose D4PG by utilizing distributional critic updates, $n$-step returns, and distributed training across multiple agents, which leads to improvements in wall-clock training time and data efficiency.

\section{Conclusions and Future Work}

We have introduced the ETGL-DDPG algorithm with three orthogonal components that improve the performance of DDPG for sparse-reward goal-conditioned environments. $\epsilon t$-greedy is a temporally-extended version of $\epsilon$-greedy using options generated by search. We prove that $\epsilon t$-greedy achieves a polynomial sample complexity under specific MDP structural assumptions. GDRB employs an extra buffer to separate successful episodes. The longest $n$-step return bootstraps from the Q-value of the final state in unsuccessful episodes and becomes a Monte Carlo update in successful episodes. ETGL-DDPG uses these components with DDPG and outperforms state-of-the-art methods, at the expense of about 1.5x wall-clock time w.r.t DDPG. The current limitation of our work is that we approximate visit counts through static hashing. For image-based problems such as real-world navigation, the future direction is to leverage dynamic hashing techniques such as \emph{normalizing flows}~\citep{papamakarios2021normalizing} as these tasks demand more intricate representation learning. Additionally, $\epsilon t$-greedy and GDRB are designed for deterministic domains and would require adaptation for stochastic environments. Extending these components to stochastic domains represents another promising direction for future research.

\bibliography{main}
\bibliographystyle{tmlr}
\newpage
\appendix
\section{Appendix}

\counterwithin*{theorem}{section}
\counterwithin*{theorem}{subsection}
\counterwithin*{preposition}{section}
\counterwithin*{preposition}{subsection}

\subsection{$\epsilon t$-greedy Sample Efficiency : Proofs} \label{thm:PAC-RL}

In this section, we first provide an overview of the proof, presenting the key ideas at a high level. Then, we present the detailed formal proof of Theorem \ref{theorem : worstcase} and Theorem \ref{theorem : main}.

\paragraph{Proof Overview.} We aim to show that the $\epsilon t$-greedy algorithm falls into the PAC-MDP category. According to \citet{DBLP:journals/corr/abs-1805-09045}, an algorithm $\mathcal{A}$ is PAC-MDP if the covering time induced by $\mathcal{A}$ is polynomially bounded. In \citet{DBLP:journals/corr/abs-1805-09045}, the authors further demonstrate that bounding the covering time requires bounding both the Laplacian eigenvalues and the stationary distribution over the states induced by the random walk policy. This is presented as Proposition \ref{p1}. According to Theorem \ref{theorem : main}, two conditions are satisfied: $N \leq \Theta(|\mathcal{S}||\mathcal{A}|)$ and a lower bound on the probability of the sampled option, $\mathcal{P}_{\mathcal{X}} \geq \frac{1}{\Theta(|\mathcal{S}||\mathcal{A}|)}$. These two conditions are necessary and are met by our problem setting and the exploration algorithm (Algorithm \ref{alg:exp with replay buffer}). To prove that $\mathcal{P}_{\mathcal{X}} \geq \frac{1}{\Theta(|\mathcal{S}||\mathcal{A}|)}$, we construct a worst-case tree structure $\mathcal{X}$, where we aim to identify the option induced by the tree $\mathcal{X}$ with the lowest probability, referred to informally as the ``hardest option". We then show that this lower bound satisfies the condition specified in Theorem \ref{th1}.

We now proceed with the proof of Theorem \ref{th1}, as demonstrated below.

\begin{theorem}[\textbf{Worst-Case Sampling}] \label{th1}
    Given a tree $\mathcal{X}$ with $N$ nodes ($s_1$ to $s_N$), for any $\omega \in \Omega_{\mathcal{X}}$, the sampling probability satisfies:
    \begin{align}
        \mathcal{P}_{\mathcal{X}}[\omega] \ge \frac{1}{N!(\max_{i \in [N]}|\phi(s_i)|)^{N-1}} \ge \frac{1}{\Theta(|\mathcal{S}||\mathcal{A}|)}
    \end{align} where $N \le \frac{\log(|\mathcal{S}||\gA|)}{\log\log(|\mathcal{S}||\gA|)}$
Here, $\mathcal{S}$ and $\mathcal{A}$ represent the state space and action space, respectively.
\end{theorem}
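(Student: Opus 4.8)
The plan is to analyze the tree-search procedure in Algorithm~\ref{alg:exp with replay buffer} as a sequential stochastic process and lower-bound the probability that it produces a fixed target option $\omega \in \Omega_{\mathcal{X}}$. Fix the tree $\mathcal{X}$ with nodes $s_1,\dots,s_N$ and the particular option $\omega$ corresponding to a root-to-leaf path. The key observation is that the option $\omega$ is realized exactly when the search expands the nodes of $\mathcal{X}$ in some order compatible with $\omega$ being generated, and each expansion step consists of two independent random choices: (i) selecting a frontier node $s_x$ uniformly at random, and (ii) selecting a transition $(s',a,r,s'')$ uniformly from the bucket $\phi(s_x)$, which then fixes the child $s_{x'}$. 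So I would decompose $\mathcal{P}_{\mathcal{X}}[\omega]$ as a product over the $N$ iterations of the while-loop of the per-step probabilities of making the ``correct'' choice.

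For step (i), at iteration $j$ the frontier has at most $j$ nodes (it grows by at most one per iteration and starts at size $1$), so the probability of picking the required frontier node is at least $1/j$; multiplying over $j=1,\dots,N$ gives a factor of at least $1/N!$. For step (ii), the probability of drawing the transition that yields the required child is at least $1/|b_{\phi(s_x)}|$, where $|b_{\phi(s_x)}|$ is the number of transitions stored in that bucket; bounding each such factor below by $1/\max_{i\in[N]}|\phi(s_i)|$ and noting there are at most $N-1$ non-root expansions gives the factor $1/(\max_{i\in[N]}|\phi(s_i)|)^{N-1}$. Combining the two yields the first inequality $\mathcal{P}_{\mathcal{X}}[\omega]\ge \frac{1}{N!(\max_{i\in[N]}|\phi(s_i)|)^{N-1}}$.

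For the second inequality, I would bound the bucket size $\max_i |\phi(s_i)|$ by a quantity polynomial in $|\mathcal{S}||\mathcal{A}|$ (the bucket-size cap, or simply the total number of stored transitions, is $O(|\mathcal{S}||\mathcal{A}|)$ under the MDP assumptions), and use the hypothesis $N \le \frac{\log(|\mathcal{S}||\mathcal{A}|)}{\log\log(|\mathcal{S}||\mathcal{A}|)}$. The point of this regime is exactly that it forces $N! \le N^N = 2^{N\log N}$ and $(\max_i|\phi(s_i)|)^{N-1} \le (|\mathcal{S}||\mathcal{A}|)^{O(N)}$ to both be $(|\mathcal{S}||\mathcal{A}|)^{o(1)}\cdot\mathrm{poly}$; concretely, writing $m = |\mathcal{S}||\mathcal{A}|$, one has $N \log N \le \log m$ (up to constants, since $\log N = O(\log\log m)$ and the $\log\log m$ in the denominator cancels it), so $N! \le m^{O(1)}$, and similarly $N\log(\max_i|\phi(s_i)|) = O(N\log m) = O(\log m \cdot \frac{\log m}{\log\log m})$ — here I would need to be careful, since naively this is superlinear in $\log m$. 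The honest resolution is that the $\Theta(|\mathcal{S}||\mathcal{A}|)$ bound in the statement absorbs the relevant polynomial blow-up: the claim is only that the product is $\ge 1/\mathrm{poly}(|\mathcal{S}||\mathcal{A}|)$, so I would track the exponents explicitly and verify $N!\,(\max_i|\phi(s_i)|)^{N-1} = O(|\mathcal{S}||\mathcal{A}|)$ holds in this $N$-regime by taking logarithms and checking $\log(N!) + (N-1)\log\max_i|\phi(s_i)| \le \log(|\mathcal{S}||\mathcal{A}|) + O(1)$.

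The main obstacle I expect is precisely this last bookkeeping: making the asymptotic claim $N!(\max_i|\phi(s_i)|)^{N-1} = \Theta(|\mathcal{S}||\mathcal{A}|)$ genuinely go through requires that $\max_i|\phi(s_i)|$ be not just polynomial but close to constant (e.g.\ $O(1)$ or $O(\mathrm{polylog})$) — otherwise the $(N-1)$-th power in the $N \approx \log m/\log\log m$ regime overshoots. So I would need to invoke the bucket-capacity mechanism described in Section~\ref{et-Greedy} (each bucket holds a bounded number of transitions) to treat $\max_i|\phi(s_i)|$ as $O(1)$, after which $\log(N!) = O(N\log N) = O(\log m)$ gives the result cleanly. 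A secondary subtlety is justifying that every iteration the frontier size is at most the iteration index (it is, since the root is the only initial frontier node and each \textbf{next\_state\_from\_buffer} call adds exactly one node), and that the early-termination branch (when $n(\phi(s_{x'}))=0$) does not invalidate the bound — it only restricts to a sub-event, which can only help.
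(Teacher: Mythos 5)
Your proposal is correct and follows essentially the same route as the paper's proof: both decompose $\mathcal{P}_{\mathcal{X}}[\omega]$ into a product over iterations of $\tfrac{1}{(\text{frontier size})}\cdot\tfrac{1}{(\text{bucket size})}$, yielding $\tfrac{1}{(N-1)!}\prod_i|\phi(s_i)|^{-1}$ and then the stated bound, and both establish the second inequality by taking logarithms, applying Stirling, and using $N\le \log(|\mathcal{S}||\mathcal{A}|)/\log\log(|\mathcal{S}||\mathcal{A}|)$. Your explicit observation that $\max_i|\phi(s_i)|$ must be treated as $O(1)$ via the bucket-capacity cap is exactly the assumption the paper makes implicitly when it declares $c_0=\log(\max_i|\phi(s_i)|)$ a constant, so your bookkeeping concern is resolved the same way the paper resolves it.
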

\begin{proof}
As outlined in the proof overview, we need to construct an option with the lowest sampling probability. Given a tree $\mathcal{X}$, we define $\mathcal{X}_i$ (for $1 \leq i \leq N$) as the tree constructed up to the $i$-th time step. At each step $\mathcal{X}_i$, we track the tuple of added states, denoted by $\mathcal{S}_{i}^{\mathcal{X}}$, the uniformly sampled state $s_{x}$ from $\mathcal{S}_{i}^{\mathcal{X}}$, and the state with the fewest visits, $s_{min}$. The notation $s_x$ and $s_{min}$ follows Algorithm \ref{alg:exp with replay buffer}. Without loss of generality, we assume that each next state $s_{x'}$ in line 9 of Algorithm \ref{alg:exp with replay buffer} satisfies $n(\phi(s_{x'})) \neq 0$. Specifically, we consider a worst-case tree $\mathcal{X}$ fully populated with states from $s_1$ to $s_N$. Therefore, at time step $N$, $\mathcal{S}_{N}^{\mathcal{X}} = (s_1, s_2, \dots, s_N)$, and we have the following relation:

\begin{align}
n(\phi(s_1)) \geq n(\phi(s_2)) \geq n(\phi(s_3)) \dots \geq n(\phi(s_N)).
\label{eq:seq-visitation}
\end{align}

\Eqref{eq:seq-visitation} provides a decreasing sequence of visitations for newly added nodes in tree $\mathcal{X}$, emphasizing line 15 of Algorithm \ref{alg:exp with replay buffer}, which causes the state $s_{min}$ to change over $N$ iterations.
We assume a specific structure for each $\phi(s_i)$, where for all $i \in [N]$, at each bucket $\phi(s_i)$, there exists only one state denoted by $s_{i+1}$, such that $n(\phi(s_{i+1})) \leq n(\phi(s_i))$. Additionally, we assume that at each time step in $\mathcal{X}_t$, the newly added node connects only to the most recently added node in the tree. The two key stochastic events are summarized as follows:
\begin{itemize}
    \item $\mathcal{E}_1$: The event in which nodes are sampled in Line 24 from buckets satisfying the increasing sequence above.
    \item $\mathcal{E}_2$: The event in which nodes are selected in Line 8.
\end{itemize}
We now define the probability of interest, which we aim to bound:
\begin{align}
\mathcal{P}[\text{option returned from } s_{\text{root}} \text{ to } s_N | \mathcal{E}_1 \text{ and } \mathcal{E}_2].
\end{align}
We expand this probability as follows:
\begin{align*}
    \mathcal{P}[\text{option returned from } s_{\text{root}} \text{ to } s_N \mid \mathcal{E}_1 \text{ and } \mathcal{E}_2] 
    &= \prod_{i=2}^{N} \mathcal{P}[\text{(State } s_i \text{ added to tree } \mathcal{X}) \wedge (s_i = s_{min}) \wedge (s_x = s_{i-1} \text{ in Line 8})] \\
    &= \prod_{i=2}^{N} \frac{1}{(i-1)|\phi(s_{i-1})|} \\
    &= \frac{1}{(N-1)!} \times \frac{1}{|\phi(s_1)||\phi(s_2)|\dots|\phi(s_N)|} \\
    &> \frac{1}{N!} \times \frac{1}{(\max_{i \in [N]} |\phi(s_i)|)^{N-1}} \\
    &> \frac{1}{|\mathcal{S}||\mathcal{A}|}.
\end{align*}

To prove the final inequality, note that $N \leq \frac{\log(|\mathcal{S}||\mathcal{A}|)}{\log \log(|\mathcal{S}||\mathcal{A}|)}$. Since the size of the sets $\mathcal{S}$ and $\mathcal{A}$ is large and $N$ is sub-logarithmic in $|\mathcal{S}||\mathcal{A}|$, i.e., $N \ll \log(|\mathcal{S}||\mathcal{A}|)$, we can say $N \leq \frac{\log(|\mathcal{S}||\mathcal{A}|)}{\log(N)}$. Let us denote $\log(\max_{i \in [N]}|\phi(s_i)|)$ as a constant $c_0$.

Now by the series of following inequalities we prove that $\frac{1}{N!}\times \frac{1}{(\max_{i \in [N]}|\phi(s_i)|)^{N-1}} > \frac{1}{|\mathcal{S}||\mathcal{A}|}$.
\begin{align}
    N &\le \frac{\log(|\gS||\gA|)}{\log(N)} 
    \Rightarrow N\log(N) \le \log(|\gS||\gA|) \\
    &\Rightarrow N\log(N) + (N-1)c_0 - N \le \log(|\gS||\gA|) 
    \quad\quad\quad  \text{(since $|\gS| |\gA| \gg N, c_0$)} \\ 
    &\Rightarrow \log(N!) + (N-1)c_0 \le \log(|\gS||\gA|) 
    \quad\quad\quad  \text{(Based on the Moivre–Stirling approximation)} \\ 
    &\Rightarrow \log(N!) + (N-1)c_0 \le \log(|\gS||\gA|) \\
    &\Rightarrow \log(N!) + \log \left( (\max_{i \in [N]} |\phi(s_i)|)^{N-1} \right) 
    \le \log(|\gS||\gA|) \\ 
    &\Rightarrow \log\left(N! \cdot (\max_{i \in [N]} |\phi(s_i)|)^{N-1}\right) \le \log(|\gS||\gA|) \\
    &\Rightarrow \frac{1}{N! \cdot (\max_{i \in [N]} |\phi(s_i)|)^{N-1}} 
    \ge \frac{1}{|\gS||\gA|}
\end{align}

\end{proof}
Now we provide the main proof which demonstrates polynomial sample complexity under certain criteria.
\begin{theorem}[\textbf{\bm{$\epsilon t$}-greedy Sample Efficiency}] \label{theorem : main}
    Given a state space $\mathcal{S}$, action space $\mathcal{A}$, and a set of options $\Omega_{\mathcal{X}}$ generated by $\epsilon t$-greedy for each tree $\mathcal{X}$, if $\mathcal{P}_{\mathcal{X}}[\omega] \ge \frac{1}{\Theta(|\mathcal{S}||\mathcal{A}|)}$, $\epsilon t$-greedy achieves polynomial sample complexity or i.e. is PAC-MDP.
\end{theorem}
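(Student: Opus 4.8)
The plan is to derive the PAC-MDP property from a polynomial bound on the \emph{covering time} of the Markov chain that $\epsilon t$-greedy induces over the state space, exactly along the lines of \citet{DBLP:journals/corr/abs-1805-09045}. That work shows that any exploration scheme whose induced random walk has covering time (in the sense of Definition~\ref{def-cover-len}) bounded by a polynomial in $(|\mathcal{S}|,|\mathcal{A}|,\frac{1}{1-\gamma})$ is PAC-MDP in the sense of Definition~\ref{poly-sample}; and it reduces such a covering-time bound (Proposition~\ref{p1}) to two ingredients: (i) a lower bound, inverse-polynomial in $|\mathcal{S}|$, on the minimum stationary probability $\min_s \pi_\infty(s)$ of the induced chain, and (ii) a lower bound, inverse-polynomial in $|\mathcal{S}||\mathcal{A}|$, on its spectral gap $1-\lambda_2$ (equivalently, a bound on the relevant Laplacian eigenvalue). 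So the proof reduces to exhibiting (i) and (ii) for the chain generated by Algorithm~\ref{alg:exp with replay buffer}, using the hypothesis $\mathcal{P}_{\mathcal{X}}[\omega]\ge \frac{1}{\Theta(|\mathcal{S}||\mathcal{A}|)}$ together with the side condition $N\le\Theta(|\mathcal{S}||\mathcal{A}|)$.

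Next I would make the induced chain explicit. At each decision point $\epsilon t$-greedy, with probability $\epsilon$, samples an option $\omega\in\Omega_{\mathcal{X}}$ and commits to its first primitive action, and with probability $1-\epsilon$ follows the greedy action; consider the ``macro-chain'' that advances one option (or one greedy step) at a time. Every option in the current tree terminates at a node whose entire prefix is realizable through transitions already stored in the buckets of $B_M$, and by hypothesis each such $\omega$ is selected with probability at least $\epsilon/\Theta(|\mathcal{S}||\mathcal{A}|)$; hence the probability that the macro-chain moves from an arbitrary state to any state reachable by a stored transition is bounded below by an inverse polynomial in $|\mathcal{S}|,|\mathcal{A}|$. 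Over training every directed MDP transition is eventually stored (the role of $B_M$ and of random replacement inside full buckets), so the support of the macro-chain dominates the transition graph of the MDP. Combined with the fact that each option has length at most $N\le\Theta(|\mathcal{S}||\mathcal{A}|)$, so a macro-step costs only polynomially many primitive steps, this yields a uniform inverse-polynomial lower bound on the conductance $\Phi$ of the macro-chain.

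I would then close the loop with standard spectral machinery: a Cheeger-type inequality $1-\lambda_2\ge \Phi^2/2$ converts the conductance bound into bound (ii), and the same inverse-polynomial lower bound on single-step transition probabilities, pushed along a connecting path, gives bound (i) on $\min_s \pi_\infty(s)$. Feeding (i) and (ii) into Proposition~\ref{p1} bounds the covering time by a polynomial in $(|\mathcal{S}|,|\mathcal{A}|,\frac{1}{1-\gamma})$, and the covering-time-to-PAC reduction of \citet{DBLP:journals/corr/abs-1805-09045}, instantiated at suboptimality $\epsilon$ and confidence $\delta$, then gives that with probability at least $1-\delta$ an $\epsilon$-optimal policy (Definition~\ref{good-pol}) is found within a polynomial number of steps, i.e.\ $\epsilon t$-greedy is PAC-MDP.

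The main obstacle I expect is handling the exploitation component cleanly: the $1-\epsilon$ greedy moves make the macro-chain depend on the (changing) learned policy, and the extra edges can shift the stationary distribution. The remedy is to lower-bound both $\Phi$ and $\min_s\pi_\infty(s)$ using only the $\epsilon$-fraction of exploratory transitions, via a laziness / sub-stochastic-domination argument, so the estimates hold uniformly no matter what the greedy policy does. A secondary subtlety is that the visit counts $n(\phi(\cdot))$ evolve during training, so $\mathcal{P}_{\mathcal{X}}$ is non-stationary; but the hypothesis is precisely the worst-case bound established in Theorem~\ref{th1} under $N\le \frac{\log(|\mathcal{S}||\mathcal{A}|)}{\log\log(|\mathcal{S}||\mathcal{A}|)}$, which holds at every stage of training and is exactly what the conductance argument consumes.
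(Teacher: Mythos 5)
Your proposal follows essentially the same route as the paper's proof: both reduce the claim to Proposition \ref{p1} of \citet{DBLP:journals/corr/abs-1805-09045}, and both establish its two hypotheses by (i) propagating the option-sampling lower bound $\mathcal{P}_{\mathcal{X}}[\omega]\ge \frac{1}{\Theta(|\mathcal{S}||\mathcal{A}|)}$ from the root to bound $\frac{1}{\min_s\Psi(s)}$, and (ii) bounding the Cheeger constant of the induced walk by that same quantity and invoking $\lambda\ge h^2/2$ to control the Laplacian eigenvalue. Your additional care about the greedy $(1-\epsilon)$-component and the non-stationarity of the visit counts goes beyond what the paper makes explicit, but the skeleton of the argument is the same.
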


\begin{proof}
    First note that if $\mathcal{P}_{\mathcal{X}}[\omega] \ge \frac{1}{\Theta(|\mathcal{S}||\mathcal{A}|)}$ then based on Theorem \ref{th1} we need to have $N \le \frac{\log(|\mathcal{S}||\gA|)}{\log\log(|\mathcal{S}||\gA|)}$, and this implies that $N \le \Theta(|\gS||\gA|)$. 
    Based on the paper by \citep{DBLP:journals/corr/abs-1805-09045}, and the analysis of the covering length when following a random policy, we have the following preposition:
    \begin{preposition}
        [\textbf{\cite{DBLP:journals/corr/abs-1805-09045}}] : For any irreducable MDP M, we define $P_{\pi_{RW}}$ as a transition matrix induced by random walk policy $\pi_{RW}$ over $M$ and $L(P_{\pi_{RW}})$ is denoted as the Laplacian of this transition matrix. Suppose $\lambda$ is the smallest non-zero eigenvalue of $L$ and $\Psi(s)$ is the stationary distribution over states which is induced by random walk policy, then Q-learning with random walk exploration is a PAC RL algorithm if:  $\frac{1}{\lambda}$,$\frac{1}{\min_{s}\Psi(s)}$ are  Poly($|\gS||\gA|$). 
    \label{p1}
    \end{preposition}Note that Preposition \ref{p1} is not limited to an MDP with primitive actions. Therefore, we can broaden its scope by incorporating options into this proposition and demonstrate that both $\frac{1}{\lambda}$ and $\frac{1}{\min_{s}\Psi(s)}$ can be polynomially bounded in terms of MDP parameters—in this case, states and actions in our approach.

    Let's begin by examining the upper-bound for $\frac{1}{\min_{s}\Psi(s)}$. Suppose we are at exploration tree $\gX$. Without a loss of generality, we consider that capacity of tree $\gX$ is full, and we have $N$ states. In this tree, let's designate $s_{root}$ as the state assigned as the root of the tree during the exploration phase. Now, consider another random state (excluding $s_{root}$) within this tree structure, denoted as $s_{rand}$. We acknowledge that, when considering the entire state space, there can be multiple options constructed from $s_{root}$ to $s_{rand}$. Each tree $\gX$ provides one of these options. $\Psi(s)$ is defined over all states, and $\omega$ is the option with a limited size because of the constrained tree budget.
    
    we can calculate the upper-bound for $\frac{1}{\min_{s}\Psi(s)}$ as follows:

    \begin{equation}
    \begin{aligned}
                \Psi(s_{rand}) = \sum_{\omega \in \Omega_{\gX}} \gP_{\gX}[\omega]\Psi(s_{root}) \Rightarrow \Psi(s_{rand}) \ge \gP[\omega] \Psi(s_{root}) ,\\ \frac{1}{\Psi(s_{rand})} \le \frac{1}{\gP[\omega]}\frac{1}{\Psi(s_{root})} \Rightarrow  \frac{1}{\Psi(s_{rand})} \le \frac{\Theta(|\gS||\gA|)}{\Psi(s_{root})}
    \end{aligned}
    \end{equation}
Since $s_{rand}$ can represent any of the states encountered in the tree, we can regard it as the state assigned the least probability in the stationary distribution. Therefore, we have:
    \begin{equation}
        \frac{1}{\Psi(s_{rand})} \le \frac{\Theta(|\gS||\gA|)}{\Psi(s_{root})} \Rightarrow \frac{1}{\min_{s} \Psi(s)} \le \frac{\Theta(|\gS||\gA|)}{\Psi(s_{root})}
    \end{equation}

    So, $\frac{1}{\min_{s}\Psi(s)}$ is polynomially bounded. Now, we need to demonstrate that $\frac{1}{\lambda}$ is also polynomially bounded. To bound $\lambda$, we first need to recall the definition of the Cheeger constant, $h$. Drawing from graph theory, if we denote $V(G)$ and $E(G)$ as the set of vertices and edges of an undirected graph $G$, respectively, and considering the subset of vertices denoted by $V_{s}$, we can define $\sigma V_{s}$ as follows:
    
    \begin{equation}
    \sigma V_{s} := \{(n_1,n_2) \in E(G): n_1 \in V_{s} , n_2 \in V(G)\setminus V_{s}\}
    \end{equation}

So, $\sigma V_{s}$ can be regarded as a collection of all edges going from $V_s$ to the vertex set outside of $V_s$. In the above definition, $(n_1, n_2)$ is considered as a graph edge. Now, we can define a Cheeger constant:

\begin{equation}
    h(G) := \min \{\frac{|\sigma V_{s}|}{|V_{s}|}: V_s \subseteq V(G), 0 < V_{s} \le \frac{1}{2}|V(G)|\}
\end{equation}

We are aware that $h \ge \lambda \ge \frac{h^2}{2}$, and by polynomially bounding $h$, we can ensure that $\lambda$ is also bounded. In a related work \citep{DBLP:journals/corr/abs-1805-09045}, an alternative variation of the Cheeger constant is utilized, which is based on the flow $F$ induced by the stationary distribution $\Psi$ of a random walk on the graph. Suppose for nodes $n_1, n_2$ and subset of nodes $N_1$ in the graph, we have:

\begin{align}
    &F(n_1,n_2)= \Psi(n_1)P(n_1,n_2) , \\ 
    &F(\sigma N_1)=\hspace{-15pt}\sum_{n_1 \in N_1 , n_2 \notin N_1}\hspace{-15pt}F(n_1,n_2), \\
    &F(N_1)=\sum_{n_1 \in N_1} \Psi(n_1)
\end{align}

Building upon the aforementioned definition, the Cheeger constant is defined as:

\begin{equation}
h :=  \inf_{N_1} \frac{F(\sigma N_1)}{\min\{F(N_1),F(\bar{N_1})\}}    
\end{equation}

Suppose $N_{rand} = \{s_{root}\}$; we will now demonstrate that $\frac{1}{h}$ can be polynomially bounded :

\begin{equation*}
\begin{aligned}
    h =  \inf_{N_1} \frac{F(\sigma N_1)}{\min\{F(N_1),F(\bar{N_1})\}} \ge  \frac{F(\sigma N_{rand})}{\min\{F(N_{rand}),F(\overline{N_{rand}})\}} \ge \frac{\sum_{s \neq  s_{root} } \Psi(s_{root})P_{\pi_{RW}}(s_{root},s)}{\Psi(s_{root})}, \\
= \sum_{s \neq S_{root}}P_{\pi_{RW}}(s_{root},s) \ge \gP_{\gX}[\omega] \Rightarrow \frac{1}{h} \le \Theta(|\gS||\gA|)
\end{aligned}
\end{equation*}

We demonstrate that both terms stated in Preposition \ref{p1} are polynomially bounded, and thus, the proof is complete.
\end{proof}

\begin{algorithm}[h] \small
\caption{Generating exploratory option with tree search using a perfect model} 
\label{alg:exp with perfect model}   
\begin{algorithmic}[1] 
\Function{}{}\textbf{generate\_option}(state s, hash function $\phi$, budget N)
\State frontier\_nodes $\gets \{ \}$
\State Initialize root using $s$: $\mathit{root} \gets \mathit{TreeNode}(s)$
\State frontier\_nodes $\gets$ frontier\_nodes $\cup $ \{root\};
\State $s_{\min}$ $\gets$ root
\State $i \gets 0$
\While{$ i < N$}
    \State $s_{x}$ $\sim$ \textit{UniformRandom}(frontier\_nodes)
    \State $s_{x^{\prime}}$= \textbf{next\_state\_from\_env}($s_{x}$)
    \If {$\phi$($n(s_{x^{\prime}})$)=0}
        \State Extract option $o$ by actions $root$ to $s_{x^\prime}$
        \State \Return $o$
    \EndIf
    \If {$n(\phi$($s_{x^{\prime}})$) $<$ $n(\phi$($s_{\min})$)}
        \State $s_{\min}$=$s_{x^{\prime}}$
    \EndIf
    \State $i \gets i + 1$
\EndWhile
\State Extract option $o$ by actions $root$ to $s_{\min}$
\State \Return $o$
\EndFunction
\State
\Function{}{}\textbf{next\_state\_from\_env}($s_{x}$, frontier\_nodes)
    \State a $\sim$ \textit{UniformRandom}($\mathcal{A}(s_x)$)
    \State $s_{x^{\prime}}$ $\leftarrow$ $\mathcal{T}$($s_{x}$, a)
    \State $s_{x}$.add\_child ($s_{x^{\prime}}$)
    \State frontier\_nodes $\gets$ frontier\_nodes $\cup$ \{$s_{x^{\prime}}$\}
    \State \Return $s_{x^{\prime}}$
\EndFunction
\end{algorithmic}
\end{algorithm}

\begin{algorithm}[t]
\caption{ETGL-DDPG}
\label{alg:ETGL algorithm}  
\begin{algorithmic}[] 
\State Randomly initialize critic network $Q(s,a,g|\theta^{Q})$ and actor $\mu(s,g|\theta^{\mu})$ with weights $\theta^{Q}$ and $\theta^{\mu}$
\State Initialize target networks $Q^{\prime}$ and $\mu^{\prime}$ with weights $\theta^{Q^{\prime}} \leftarrow \theta^{Q}$, $\theta^{\mu^{\prime}} \leftarrow \theta^{\mu}$
\State Initialize replay buffers $D_{\beta}$, $D_{e}$, hash function $\phi$, exploration budget $N$
\State
\Function{}{}\textbf{train}($Q$, $\mu$, $\phi$)
    \For{episodes=1,M}
        \State Receive initial observation state $s_{1}$ and goal $g$
        \State \textbf{run\_episode}($s_{1}$, $g$)
        \State \textbf{update}($success$)
    \EndFor
\EndFunction
\State
\Function{}{}\textbf{run\_episode}($s$, $g$)
    \State $success \leftarrow false$, $l \leftarrow 0$
    \While{t $\leq$ T \textbf{and} \textbf{not}(\textit{success})}
        \If{$l$==0}
            \If{random()$< \epsilon$}
                \State Exploratory option  $w \leftarrow$ \textbf{generate\_option}($s$, $\phi$, $N$)
                \State Assign action : $a_{t} \leftarrow w$
                \State $l\leftarrow$ length($w$)
            \Else
                \State Greedy action : $a_{t} \leftarrow \mu(s_{t},g|\theta^{\mu})$
            \EndIf
        \Else
            \State Assign action : $a_{t} \leftarrow w$
            \State $l \leftarrow l-1$
        \EndIf
        \State Execute action $a_{t}$ and observe reward $r_{t}$ and next state $s_{t+1}$
        \If{is\_goal($s_{t+1}$)}
            \State $success \leftarrow true$
        \EndIf
    \EndWhile
\EndFunction
\State
\Function{}{}\textbf{update}($success$)
    \State
    \State
    $R=\left\{\begin{matrix}
     r_{t}  & success \\ 
     0   & o.w
    \end{matrix}\right.$
    \vspace{2mm}
    \ \ \ $bootstrap=\left\{\begin{matrix}
     0  & success \\ 
     1   & o.w
    \end{matrix}\right.$
    \For{$i \in \{t-1,...,t_{start}\}$}
        \State $R \leftarrow r_{i}+\gamma R$
        \If{$success$}
            \State store transition ($s_{i},g,a_{i},R,s_{t},bootstrap$) in $D_{\beta}$, $D_{e}$
        \Else
            \State store transition ($s_{i},g,a_{i},R,s_{t},bootstrap$) in $D_{\beta}$
        \EndIf
    \EndFor
    \State Sample C random mini-batches of k transitions ($s_{j},g_{j},a_{j},r_{j},s_{j+1},bootstrap_{j}$) by $\tau_{\beta}$ and $\tau_{e}$ ratios from $D_{\beta}$ and $D_{e}$
    \State set $y_{j}=r_{j}+bootstrap_{j}*\gamma Q^{\prime}(s_{j+1},g_{j},\mu^{\prime}(s_{j+1},g_{j}|\theta^{\mu^{\prime}})|\theta^{Q^{\prime}})$
    \State update critic by minimizing the loss: $L=\frac{1}{k}\sum_{j}(y_{j}-Q(s_{j},g_{j},a_{j}|\theta^{Q}))$
    \State update the actor: $\nabla_{\theta^{\mu}} J \approx \frac{1}{k} \sum_{j}\nabla_{a}Q(s,g,a|\theta^{Q})|_{s=s_{j},g=g_{j},a=\mu(s_{j},g_{j})}\nabla_{\theta^{\mu}}\mu(s,g|\theta^{\mu})|_{s_{j}}$
    \State update the target networks: $\theta^{Q^{\prime}} \leftarrow \tau \theta^{Q}+(1-\tau)\theta^{Q^{\prime}},\ \theta^{\mu{\prime}} \leftarrow \tau\theta^{\mu}+(1-\tau)\theta^{\mu^{\prime}}$
\EndFunction
\end{algorithmic}
\end{algorithm}

\subsection{n-step Methods Comparison} \label{n-step section}

In this section, we compare two update rules for Q-values in DDPG: the longest n-step return and the average return over 1 to 8 steps (avg8-step). The results are shown in Figure \ref{fig: n-step comparison}. Both methods performed similarly overall; however, in the Wall-maze environment, the longest n-step return outperformed the avg8-step approach. The avg8-step method converged more quickly in U-maze and Point-push due to the stabilizing effect of averaging multiple updates.

\subsection{Comparison of GDRB and Hindsight Experience Replay (HER)} \label{sec:her}

In this section, we compare two strategies for storing transitions in the replay buffer: GDRB and HER. Figure~\ref{fig: her-gdrb comparison} shows the results in the navigation environments. In Wall-maze, HER achieves success rates above zero during the early stages of training; however, it fails to maintain any success in later stages. This observation is consistent with the results reported by \citet{trott2019keeping} in the same environment. In the other two environments, both methods exhibit comparable performance, but HER discovers paths to the goal more quickly, as it leverages reward shaping in unsuccessful episodes to provide additional information about the structure of the environment.

\begin{figure*}[t]
    \centering
        \begin{subfigure}[b]{150pt}
        \centering
        \includegraphics[width=\textwidth]{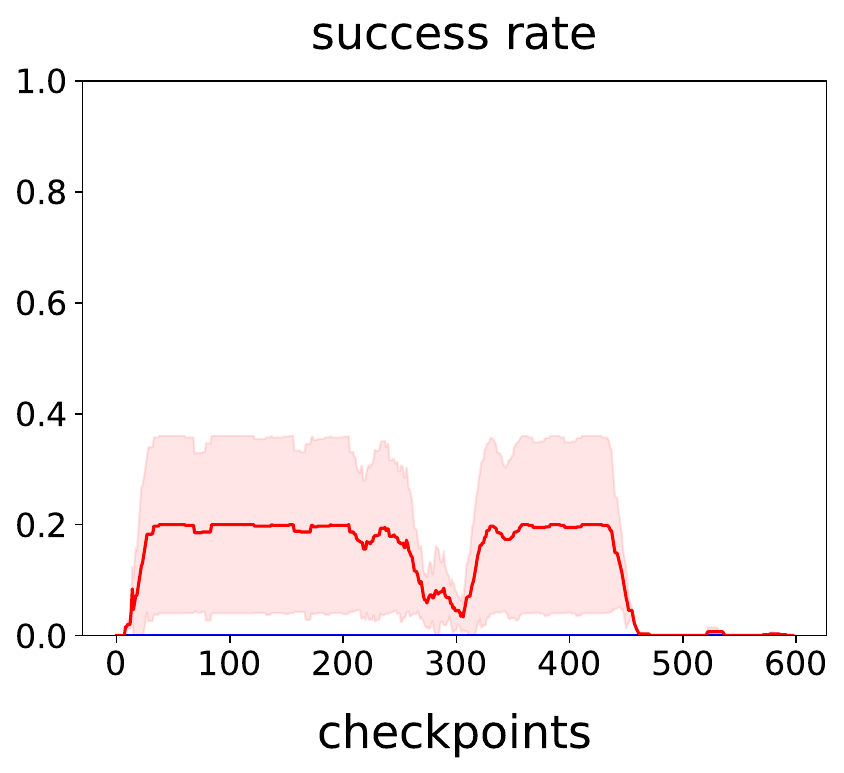}
        \caption{Wall-maze}
        \end{subfigure}
        \begin{subfigure}[b]{150pt}
        \centering
        \includegraphics[width=\textwidth]{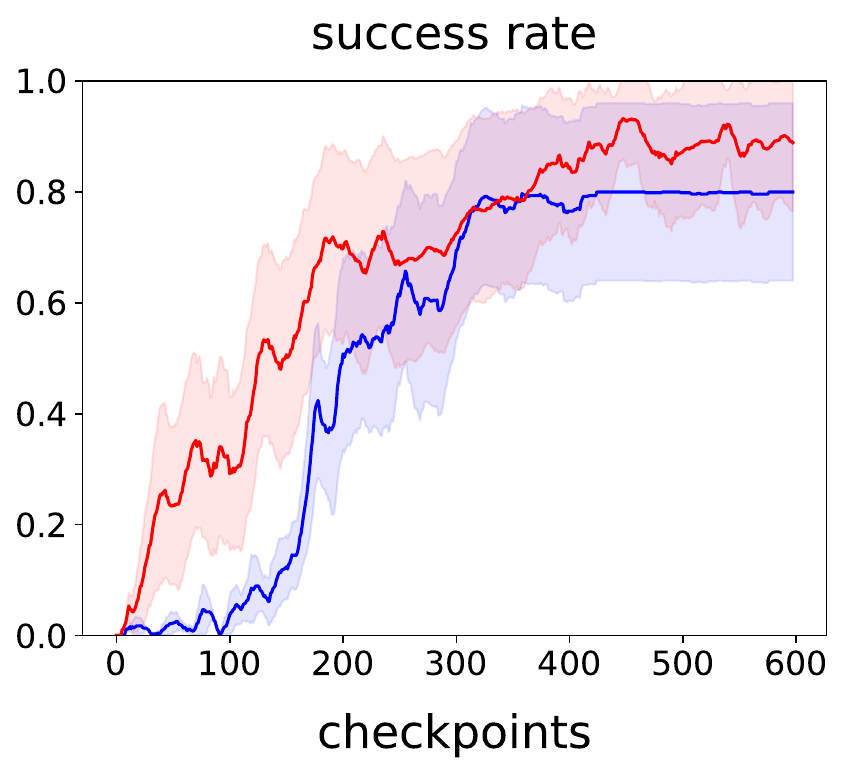}
        \caption{U-maze}
        \end{subfigure}
        \begin{subfigure}[b]{150pt}
        \centering
        \includegraphics[width=\textwidth]{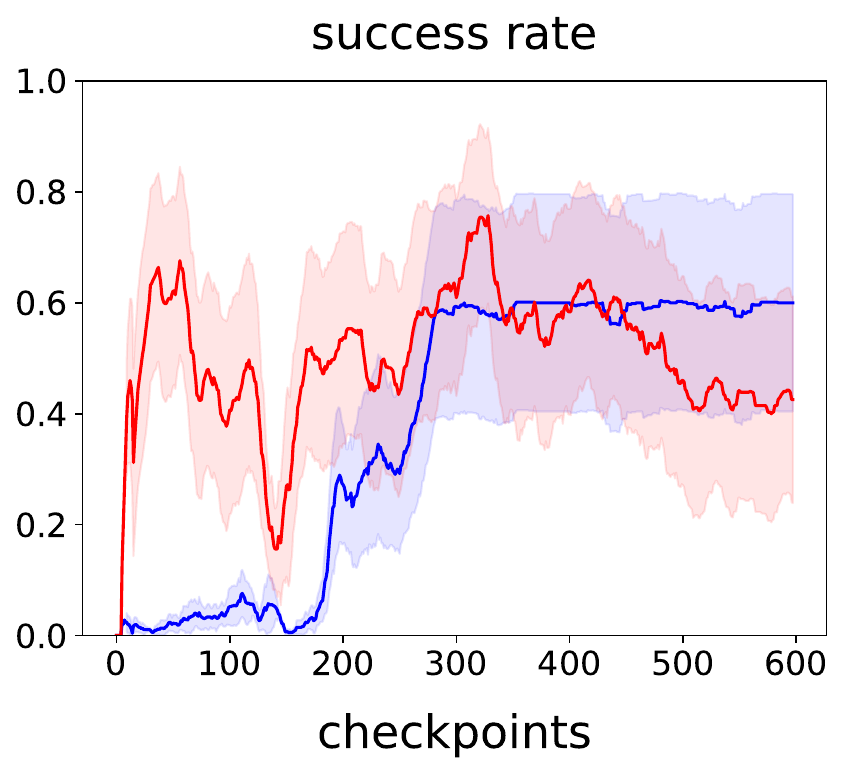}
        \caption{Point-push}
        \end{subfigure}
        \hspace{-20pt} \includegraphics[width=0.4\textwidth]{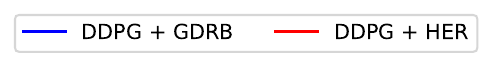}
    \caption{ The comparison between two replay buffer strategies: GDRB and HER.}
    \label{fig: her-gdrb comparison}
\end{figure*}

\begin{figure*}[t]
    \centering
        \begin{subfigure}[b]{150pt}
        \centering
        \includegraphics[width=\textwidth]{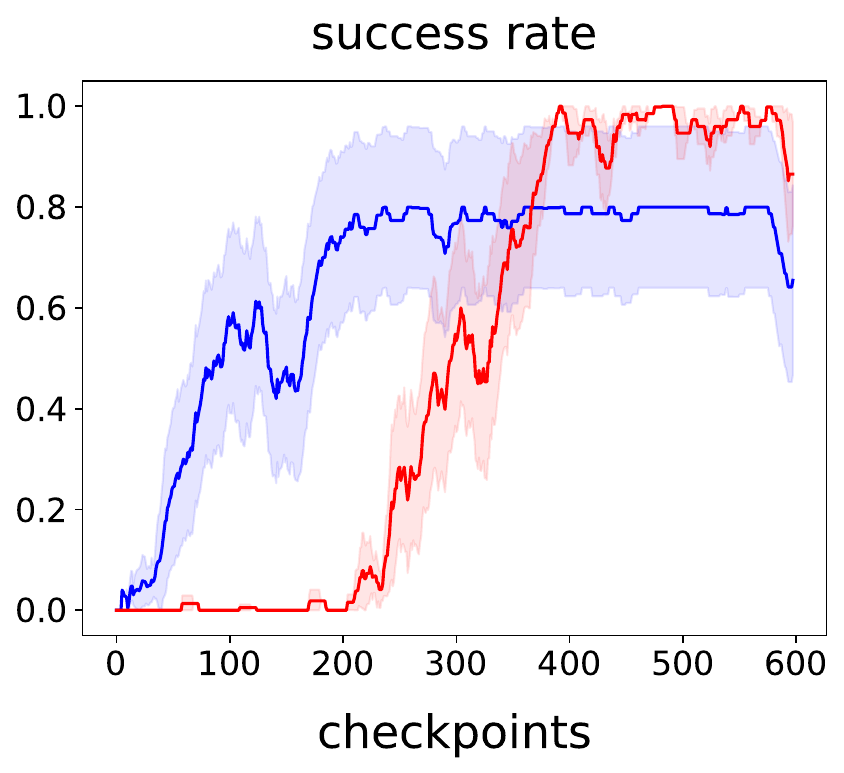}
        \caption{Wall-maze}
        \end{subfigure}
        \begin{subfigure}[b]{150pt}
        \centering
        \includegraphics[width=\textwidth]{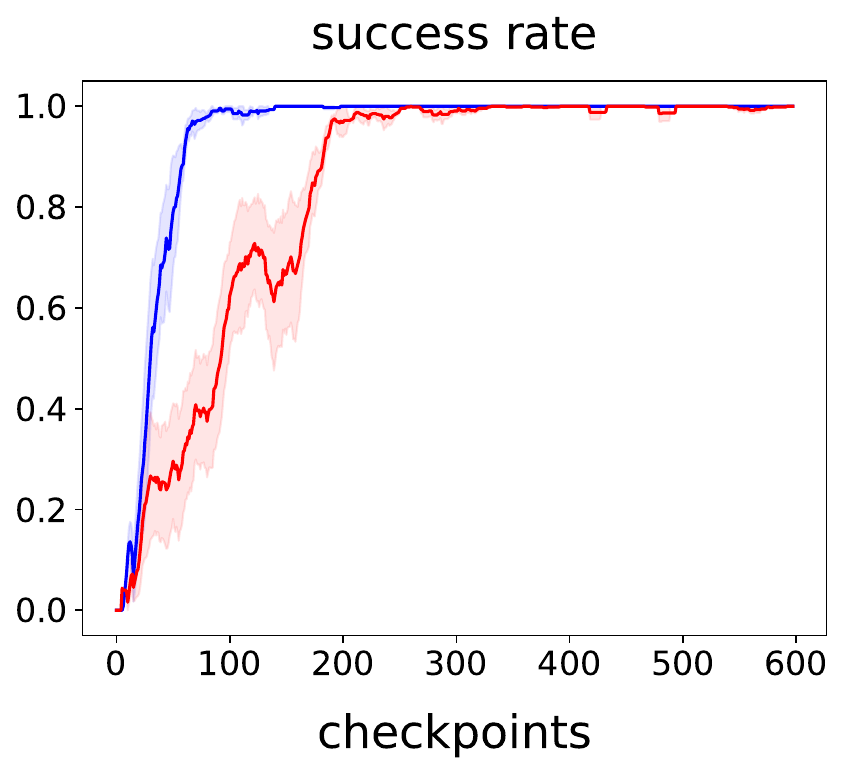}
        \caption{U-maze}
        \end{subfigure}
        \begin{subfigure}[b]{150pt}
        \centering
        \includegraphics[width=\textwidth]{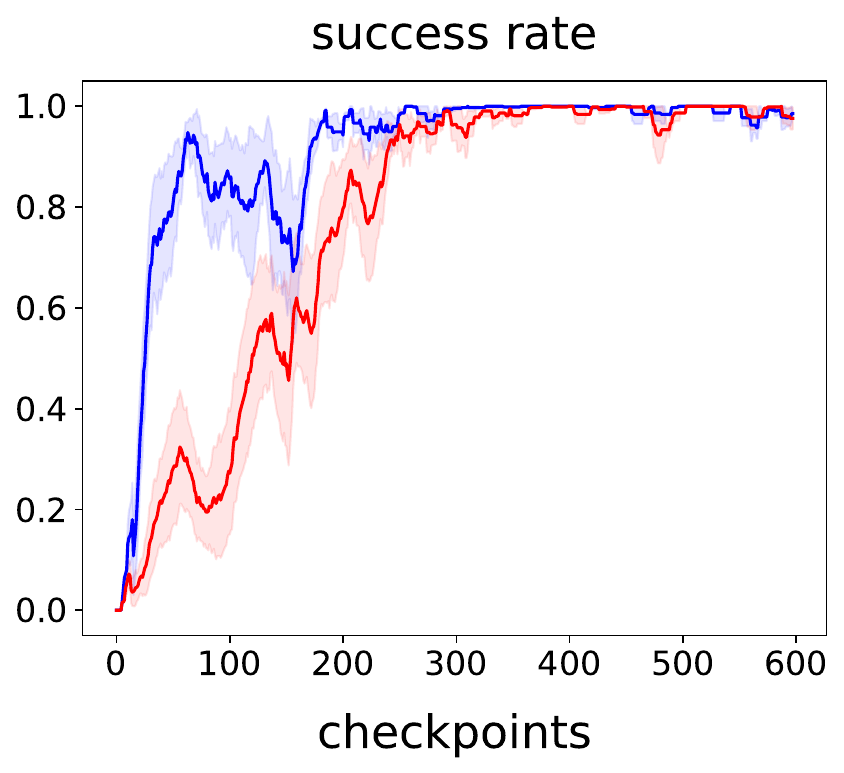}
        \caption{Point-push}
        \end{subfigure}
        \hspace{-20pt} \includegraphics[width=0.65\textwidth]{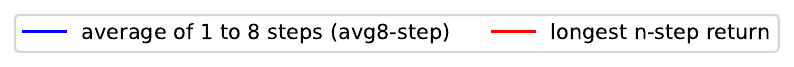}
    \caption{The comparison between two update rules for Q-values: the longest n-step return and the average return over 1 to 8 steps (avg8-step).}
    \label{fig: n-step comparison}
\end{figure*}

\subsection{Exploration with a Perfect Model} \label{sec: exp with perfect model}

Since the DDPG algorithm is model-free, we utilize the replay buffer to construct the tree for $\epsilon t$-greedy. However, $\epsilon t$-greedy can also take advantage of a perfect model when available. The pseudocode for option generation using a perfect model is provided in Algorithm \ref{alg:exp with perfect model}. The key difference from Algorithm \ref{alg:exp with replay buffer} is the use of the \texttt{next\_state\_from\_env} function instead of \texttt{next\_state\_from\_replay\_buffer} to generate child nodes. In this case, an action is uniformly sampled from the action space, and the environment's transition function $\mathcal{T}$ is directly used to determine the next state (line 25). Figure \ref{fig: model-buffer} compares the performance of ETGL-DDPG in navigation environments using a perfect model versus a replay buffer. The results show a clear advantage when using a perfect model, as the agent reaches a success rate of 1 more quickly and with less deviation. 

\begin{figure*}[h]
    \centering
        \begin{subfigure}[b]{150pt}
        \centering
        \includegraphics[width=\textwidth,height=3.5cm]{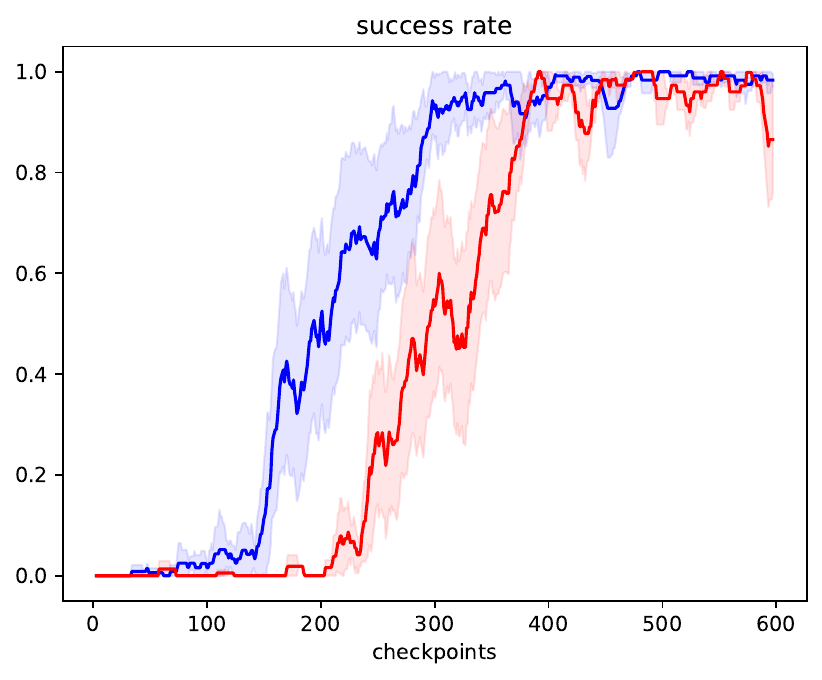}
        \caption{Wall-maze}
        \end{subfigure}
        \begin{subfigure}[b]{150pt}
        \centering
        \includegraphics[width=\textwidth,height=3.5cm]{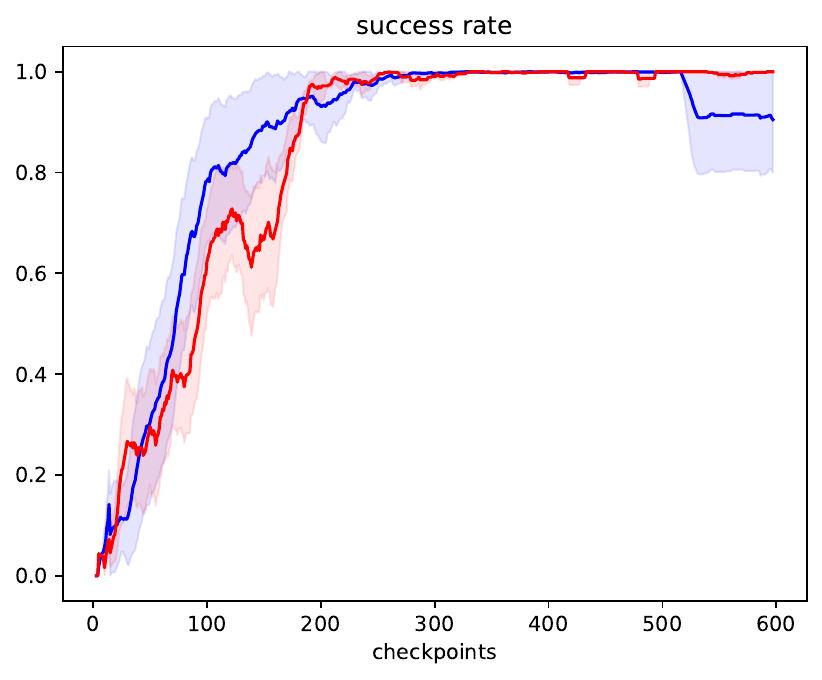}
        \caption{U-maze}
        \end{subfigure}
        \begin{subfigure}[b]{150pt}
        \centering
        \includegraphics[width=\textwidth,height=3.5cm]{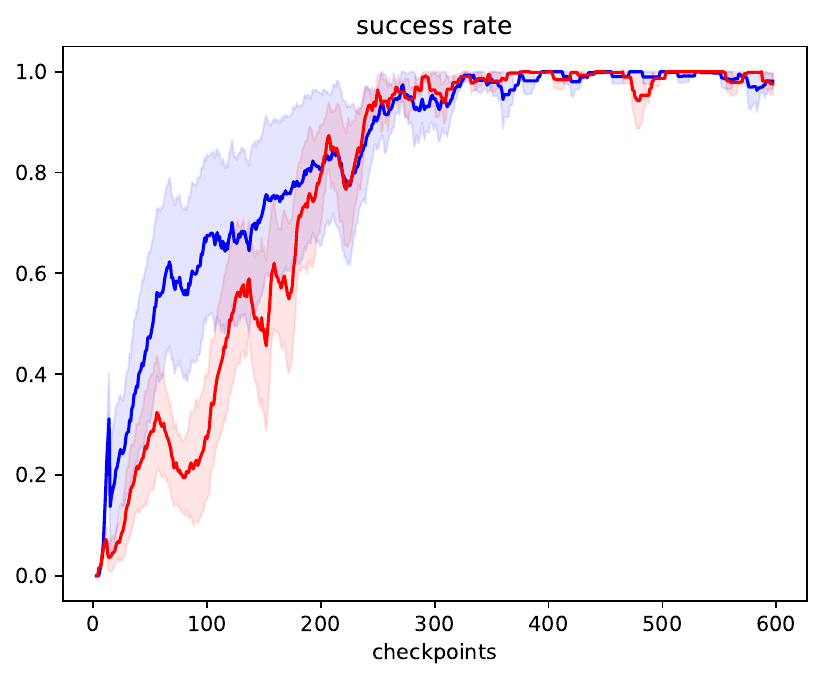}
        \caption{Point-push}
        \end{subfigure}
        \hspace{-20pt} \includegraphics[width=0.35\textwidth]{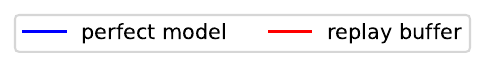}
    \caption{Comparison of ETGL-DDPG performance in navigation environments using a perfect model vs. replay buffer.}
    \label{fig: model-buffer}
\end{figure*}

\subsection{Implementation Details and Experimental Hyperparameters} \label{sec: exp setup}

Here, we describe the implementation details and hyperparameters for all methods used in this paper. All experiments were run on a system with 5 vCPU on a cluster of Intel Xeon E5-2650 v4 2.2GHz CPUs and one 2080Ti GPU. Table~\ref{details for environments} displays the details for environments. Tables \ref{details for ETGL-DDPG}, \ref{SAC details}, and \ref{DOIE details} showcase the hyperparameters utilized in ETGL-DDPG and the baselines.

\vspace{50pt}
\begin{table*}[h!]
\begin{center}
\caption{Implementation details for ETGL-DDPG.}
\label{details for ETGL-DDPG}
\begin{tabular}{c|c|c|c|c|c|c}
    \textbf{Hyperparameter}  &  {wall-maze} & {U-maze} & {Point-push} & {window-open} & {soccer} & {button-press} \\
    \hline
     batch size & \multicolumn{3}{c|}{128} & \multicolumn{3}{c}{512}      \\
    \hline
     number of updates per episode &  \multicolumn{3}{c|}{20} & \multicolumn{3}{c}{200}    \\
     \hline
     epsilon decay rate &  \multicolumn{3}{c|}{0.9999988} & \multicolumn{3}{c}{0.9999992}  \\
     \hline
     exploration budget \textit{N} & 20 & \multicolumn{2}{|c|}{40} & \multicolumn{3}{|c}{60}  \\
     \hline
     SimHash dimension  &  \multicolumn{3}{c|}{$k=9$} & \multicolumn{3}{c}{$k=16$}  \\
     \hline
     soft target updates $\tau$ &   \multicolumn{6}{c} {$10^{-2}$}   \\
      \hline
      discount factor $\gamma$ & \multicolumn{6}{c}{0.99}  \\
     \hline 
     warmup period  & \multicolumn{6}{c}{$2*10^{5}$ steps}\\
     \hline 
     exploration buffer size & \multicolumn{6}{c}{$10^{6}$}    \\
     \hline 
     exploitation buffer size & \multicolumn{6}{c}{$5 * 10^{4}$}\\
     \hline
     actor learning rate & \multicolumn{6}{c}{$10^{-4}$}\\
     \hline
     critic learning rate & \multicolumn{6}{c}{$10^{-3}$}\\
\end{tabular}
\end{center}
\end{table*}

\vspace{50pt}
\hspace{100mm}
\begin{table}[htbp]
\centering
\caption{Environment details.}
\label{details for environments}
\begin{tabular}{l|c|c|c|c}
    \textbf{environment}  &  {$S \in$} & {$G \in$}  & {$A \in$} & Max steps per episode   \\
    \hline
    Wall-maze  & $\mathbb{R}^{2}$ &  $\mathbb{R}^{2}$ & $[-0.95,0.95]^{2}$ & 100    \\
    \hline
    U-maze  &  $\mathbb{R}^{6}$ &  $\mathbb{R}^{2}$ &  $[-1,1]*[-0.25,0.25]$ & 500   \\
    \hline
    Point-push  &  $\mathbb{R}^{11}$ &  $\mathbb{R}^{2}$ &  $[-1,1]*[-0.25,0.25]$ & 500\\
    \hline
    window-open  &  $\mathbb{R}^{39}$ &  $\mathbb{R}^{3}$ &  $[-1,1]^{4}$ & 500\\
    \hline
    soccer  &  $\mathbb{R}^{39}$ &  $\mathbb{R}^{3}$ &  $[-1,1]^{4}$ & 500\\
    \hline
    button-press  &  $\mathbb{R}^{39}$ &  $\mathbb{R}^{3}$ &  $[-1,1]^{4}$ & 500\\
\end{tabular}
\end{table}

\vspace{80pt}
\begin{table}[h]
\begin{center}
\centering
\caption{Implementation details for DOIE.}
\vspace{2mm}
\label{DOIE details}
\begin{tabular}{c|c|c|c|c|c|c}
    \textbf{Hyperparameter}  &  {wall-maze} & {U-maze} & {Point-push} & {window-open} & {soccer} & {button-press} \\
    \hline
     batch size & \multicolumn{6}{c}{256}      \\
    \hline
     number of updates per episode &  \multicolumn{6}{c}{100}    \\
     \hline 
     replay buffer size & \multicolumn{6}{c}{$5*10^{5}$}\\
     \hline 
     actor learning rate & \multicolumn{6}{c}{$10^{-4}$}\\
     \hline
     critic learning rate & \multicolumn{6}{c}{$5*10^{-3}$}\\
     \hline
      discount factor $\gamma$ & \multicolumn{6}{c}{0.99}  \\
      \hline
      action scaling & \multicolumn{6}{c}{0.01}\\
      \hline
      environment scaling & \multicolumn{6}{c}{0.1 for each dimension}\\
      \hline
      knownness mapping type & \multicolumn{6}{c}{polynomial} \\
    
\end{tabular}
\end{center}
\end{table}

\vspace{50pt}
\begin{table}[h]
\centering
\caption{Implementation details for SAC, TD3, and DDPG.}
\label{SAC details}
\begin{tabular}{c|c|c|c|c|c|c}
    \textbf{Hyperparameter}  &  {wall-maze} & {U-maze} & {Point-push} & {window-open} & {soccer} & {button-press} \\
    \hline
     batch size & \multicolumn{3}{c|}{128} & \multicolumn{3}{c}{512}      \\
    \hline
     update frequency per step &  \multicolumn{3}{c|}{12} & \multicolumn{3}{c}{2}    \\
     \hline 
     action noise &  $\sim N(0,0.2)$ & \multicolumn{2}{c|}{$\sim N(0,(0.3, 0.05))$} & \multicolumn{3}{c}{$\sim N(0,(0.15))$}  \\
     \hline
     warmup period  & \multicolumn{6}{c}{$2*10^{5}$ steps}\\
     \hline 
     replay buffer size & \multicolumn{6}{c}{$10^{6}$}\\
     \hline
     learning rate & \multicolumn{6}{c}{$3*10^{-4}$}\\
     \hline
     soft target updates $\tau$ &   \multicolumn{6}{c} {$5*10^{-3}$}   \\
      \hline
      discount factor $\gamma$ & \multicolumn{6}{c}{0.99}  \\
\end{tabular}
\end{table}

\vspace{10pt}

\subsection{ETGL-DDPG Algorithm}

In this section, we introduce ETGL-DDPG, as detailed in Algorithm \ref{alg:ETGL algorithm}, which is organized into three primary functions: \texttt{train}, \texttt{run\_episode}, and \texttt{update}. The \texttt{train} function is called once at the start of the training process. For each training episode, the \texttt{run\_episode} function is invoked to perform a training episode within the environment, followed by the \texttt{update} function to adjust the networks based on the experience gained from the episode.

\subsection{Analyzing the Impact of $N$ on $\epsilon t$-greedy and $\epsilon z$-greedy Exploration} \label{sec: impact of N}

The tree budget $N$ upper bounds the option length of $\epsilon t$-greedy due to the fact that the longest path between nodes in the tree is shorter or equal to the number of nodes in the tree. This is analogous to the role of $N$ in $\epsilon z$-greedy, where a uniform distribution $z(n)=\mathbbm{1}_{n \leq N}/N$ is used. To evaluate both methods, we assess environment coverage under varying budget sizes, calculating the coverage after 1 million training frames. Table \ref{table: budget} shows the results: $\epsilon t$-greedy consistently achieves greater coverage than $\epsilon z$-greedy across all environments and budget sizes. Additionally, $\epsilon t$-greedy demonstrates improved coverage as the budget increases. In contrast, increasing the budget for $\epsilon z$-greedy does not consistently improve coverage and can even decrease it in some cases. This highlights the advantages of directed exploration over undirected methods, particularly in complex environments with numerous obstacles, such as Wall-maze.

\begin{table*}[t] \small
\centering
\caption{Analysis of the impact of budget N on the environment coverage.}
\label{table: budget}
\begin{tabular}{c|c|c|c|c|c|c}
\toprule
\headercell{\\budget $N$}  &\multicolumn{3}{c|}  {$\epsilon z$-greedy} &\multicolumn{3}{c} {$\epsilon t$-greedy}\\
\cmidrule(l){2-7}
\cmidrule(l){2-7}
& Wall-maze &  U-maze & Point-push  & Wall-maze & U-maze & Point-push     \\ 
\midrule
  5  & 0.36           &  0.55          & 0.36           & 0.76       & 0.94          & 0.40           \\
  10 & \textbf{0.38}  &  0.91          & 0.38           & 0.97       & 0.91          & 0.41            \\
  15 & 0.34           &  0.85          & 0.39           & 0.65       & 0.94          & 0.42            \\
  20 & 0.30           &  0.84          & 0.40           & 0.83       & 0.94          & 0.48            \\
  25 & 0.28           &  \textbf{0.86} & 0.40           & \textbf{1} & 0.95          & 0.47            \\
  30 & 0.27           &  0.83          & 0.39           & 1          & \textbf{0.97} & 0.51            \\
  35 & 0.25           &  0.82          & 0.40           & 1          & 0.95          & 0.53           \\
  40 & 0.24           &  0.82          & 0.40           & 1          & 0.97          & 0.55            \\
  45 & 0.22           &  0.85          & \textbf{0.41}  & 1          & 0.96          & 0.64           \\
  50 & 0.22           &  0.79          & 0.40           & 1          & 0.97          & \textbf{0.73}  \\ \bottomrule
\end{tabular}
\end{table*}

\subsection{Terminal States Distribution} \label{sec: terminal dist}
We analyze the order in which the agent visits different parts of the environment by examining the distribution of the last states in the episodes. To make it more visually appealing and easy to interpret, we only sample some of the episodes. The results for Wall-maze, U-maze, and Point-push can be found in Figures \ref{fig: loc wall-maze},  \ref{fig: loc u-maze}, and \ref{fig: loc push}, respectively. In Wall-maze, only $\epsilon t$-greedy and DOIE can effectively navigate to different regions of the environment and ultimately reach the goal area. Other methods often get trapped in one of the local optima and are unable to reach the goal. The reason some methods, such as TD3, have fewer points is that the agent spends a lot of time revisiting congested areas instead of exploring new ones. In U-maze, most methods can explore the majority of the environment. However, during the final stages of training, methods such as DDPG, SAC, and DDPG + intrinsic motivation have lower success rates and may end up in locations other than the goal areas. In Point-push, $\epsilon t$-greedy, $\epsilon z$-greedy, and DOIE first visit the lower section of the environment in the early stages. After that, they push aside the movable box and proceed to the upper section to visit the goal area. For the other methods, the pattern is almost the same, with occasional visits to the lower section. To further illustrate how the policy evolves during training and how closely it approximates the optimal policy, we present the agent’s trajectories across episodes. For clarity, we sample a few representative episodes from different stages of training. Figure~\ref{fig: policy quality} presents the results for U-maze and Point-push. In both environments, the agent initially becomes trapped in suboptimal regions. As training progresses, the policy gradually improves, and by the end, the agent consistently follows a trajectory that closely approximates the shortest path to the goal.

\begin{figure*}[h]
    \centering
        \begin{subfigure}[b]{160pt}
        \centering
        \includegraphics[width=\textwidth,height=4.5cm]{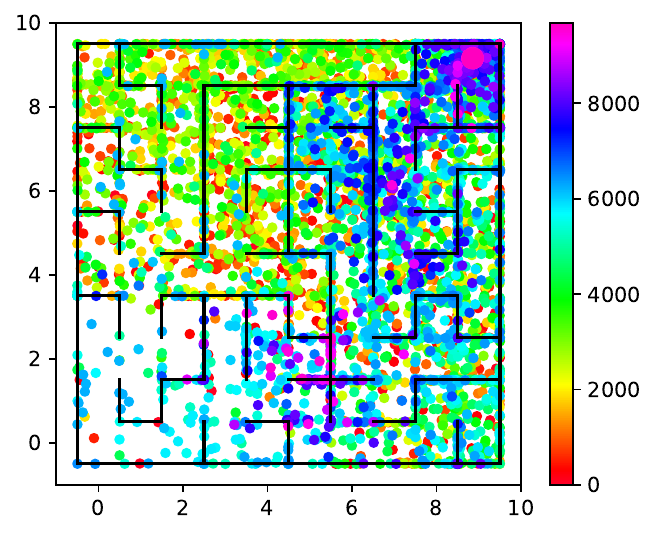}
        \caption{DDPG + $\epsilon t$-greedy}
        \end{subfigure}
        \begin{subfigure}[b]{160pt}
        \centering
        \includegraphics[width=\textwidth,height=4.5cm]{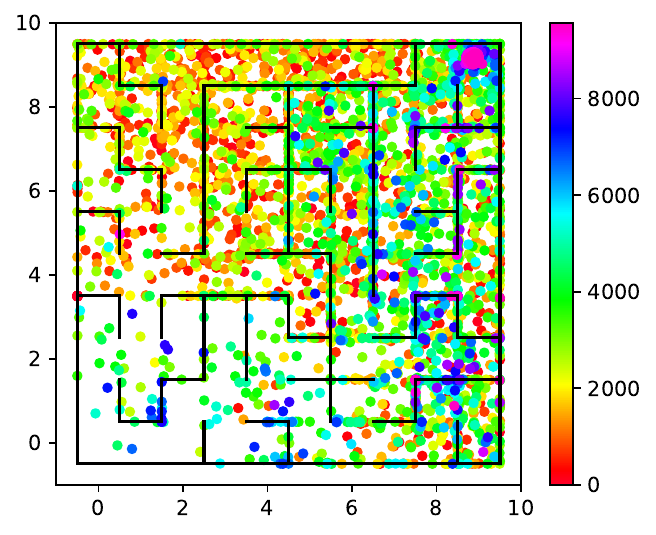}
        \caption{DOIE}
        \end{subfigure}
        \begin{subfigure}[b]{160pt}
        \centering
        \includegraphics[width=\textwidth,height=4.5cm]{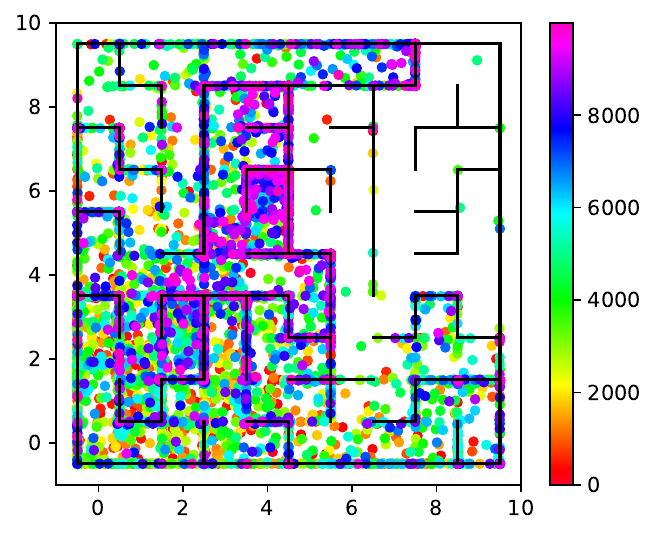}
        \caption{DDPG + $\epsilon z$-greedy}
        \end{subfigure}
        \begin{subfigure}[b]{160pt}
        \centering
        \includegraphics[width=\textwidth,height=4.5cm]{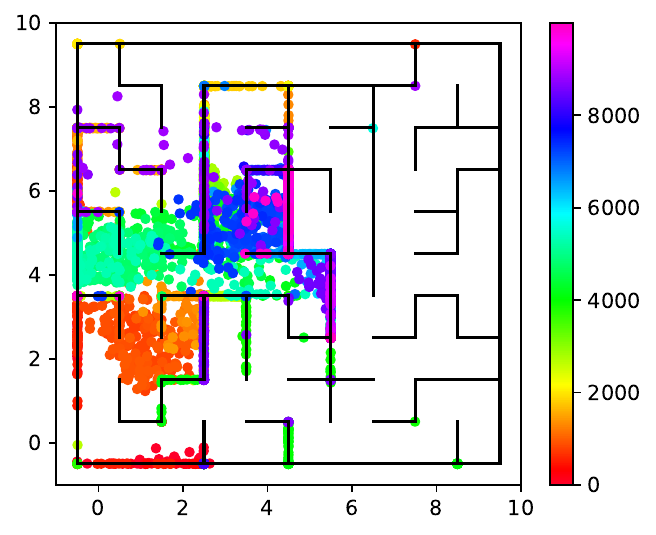}
        \caption{SAC}
        \end{subfigure}
        \begin{subfigure}[b]{160pt}
        \centering
        \includegraphics[width=\textwidth,height=4.5cm]{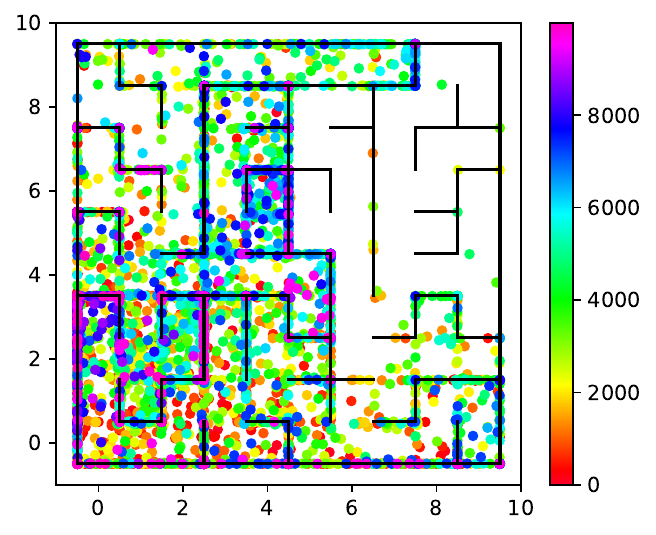}
        \caption{DDPG + intrinsic motivation}
        \end{subfigure}
        \begin{subfigure}[b]{160pt}
        \centering
        \includegraphics[width=\textwidth,height=4.5cm]{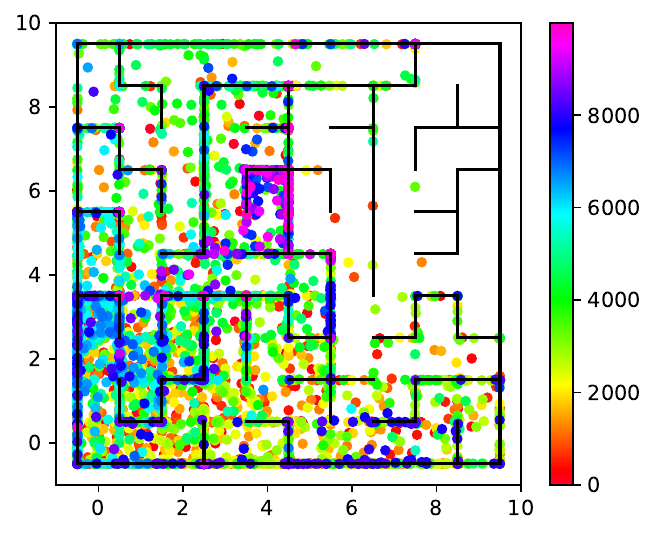}
        \caption{TD3}
        \end{subfigure}
        \begin{subfigure}[b]{160pt}
        \centering
        \includegraphics[width=\textwidth,height=4.5cm]{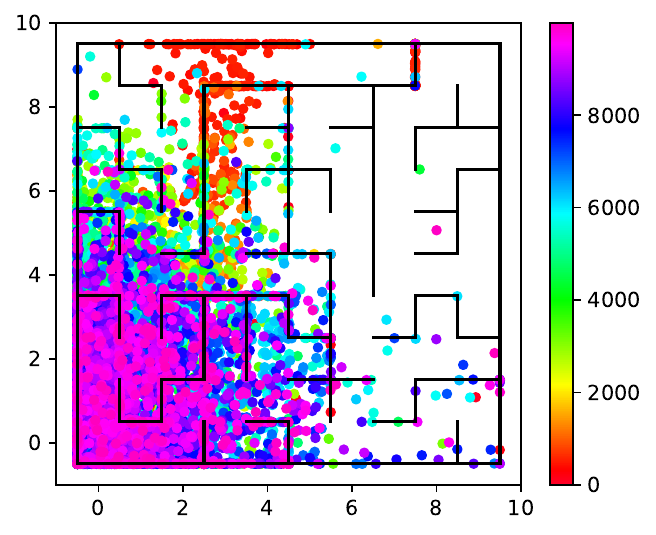}
        \caption{DDPG}
        \end{subfigure}
    \caption{The agent's location at the end of episodes throughout the training in Wall-maze.}
    \label{fig: loc wall-maze}
\end{figure*}

\begin{figure*}[h]
    \centering
        \begin{subfigure}[b]{160pt}
        \centering
        \includegraphics[width=\textwidth,height=4.5cm]{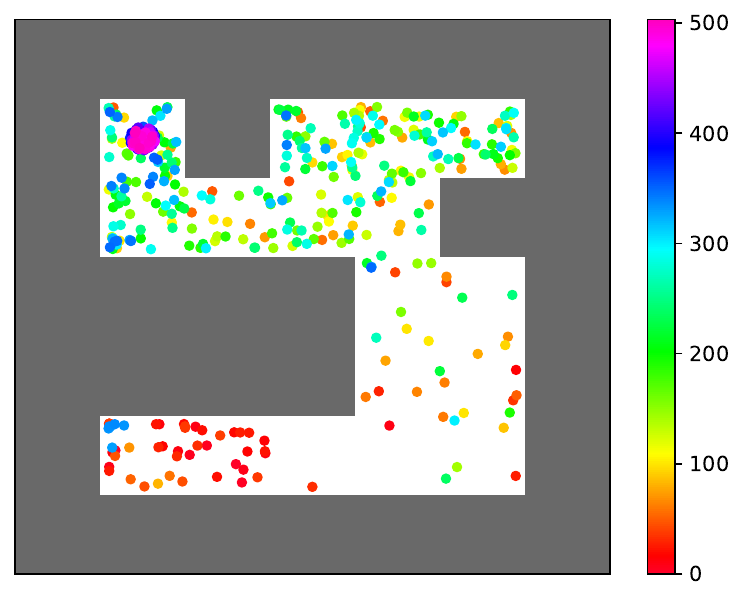}
        \caption{DDPG + $\epsilon t$-greedy}
        \end{subfigure}
        \begin{subfigure}[b]{160pt}
        \centering
        \includegraphics[width=\textwidth,height=4.5cm]{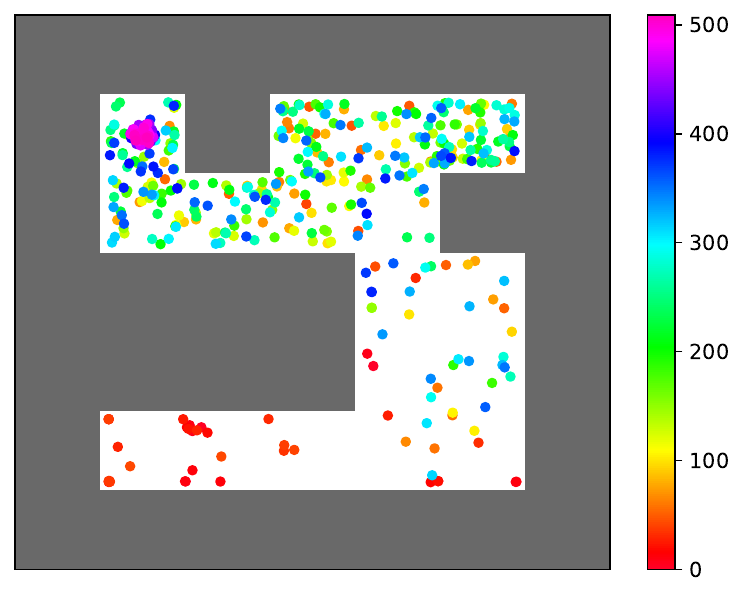}
        \caption{DOIE}
        \end{subfigure}
        \begin{subfigure}[b]{160pt}
        \centering
        \includegraphics[width=\textwidth,height=4.5cm]{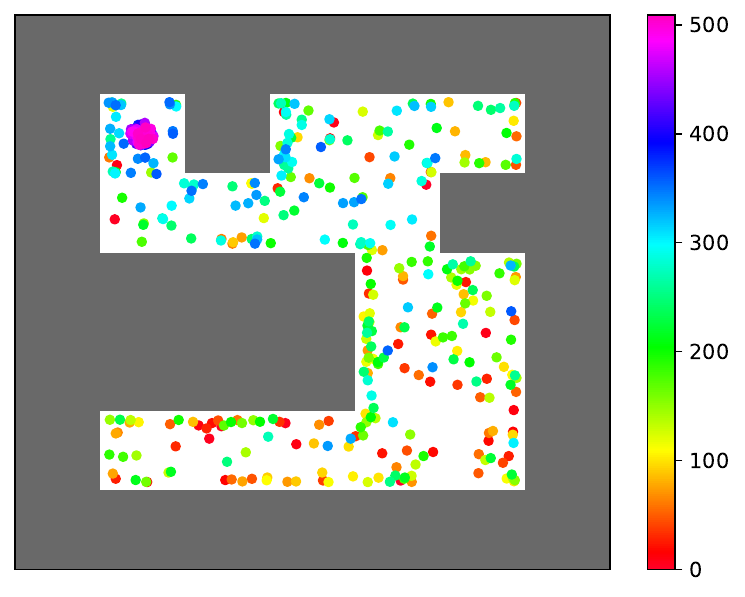}
        \caption{DDPG + $\epsilon z$-greedy}
        \end{subfigure}
        \begin{subfigure}[b]{160pt}
        \centering
        \includegraphics[width=\textwidth,height=4.5cm]{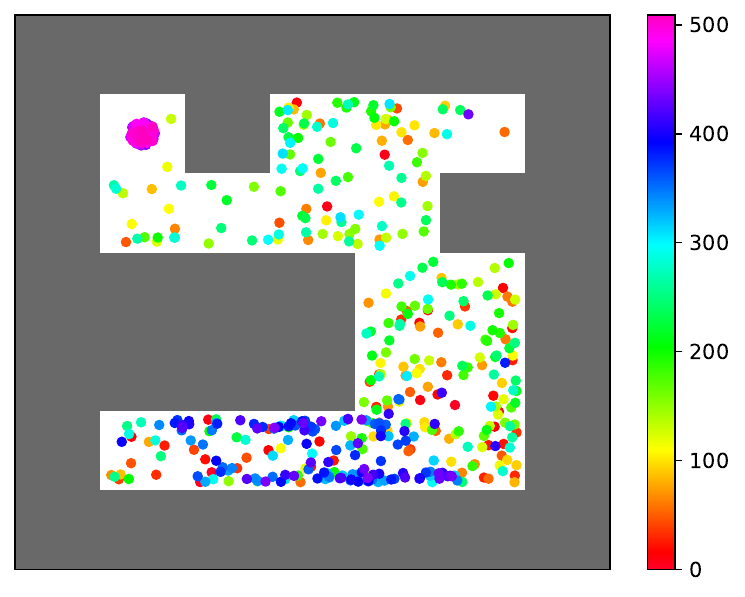}
        \caption{SAC}
        \end{subfigure}
        \begin{subfigure}[b]{160pt}
        \centering
        \includegraphics[width=\textwidth,height=4.5cm]{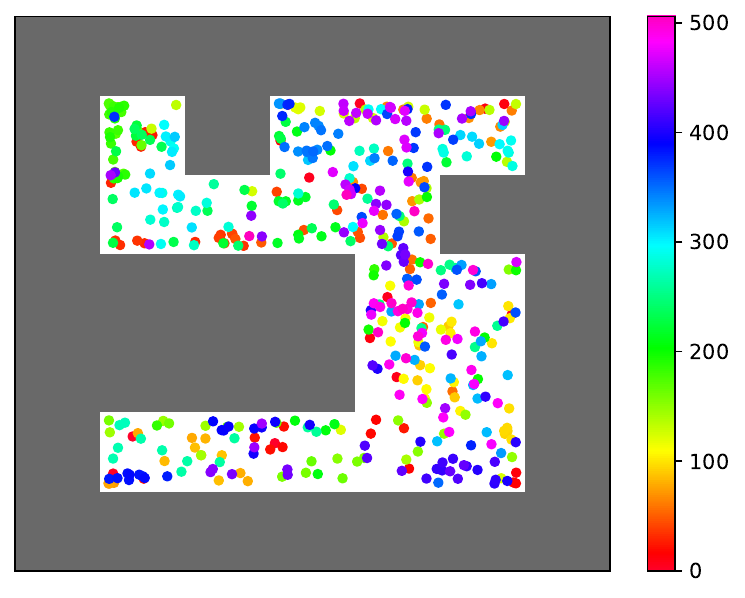}
        \caption{DDPG + intrinsic motivation}
        \end{subfigure}
        \begin{subfigure}[b]{160pt}
        \centering
        \includegraphics[width=\textwidth,height=4.5cm]{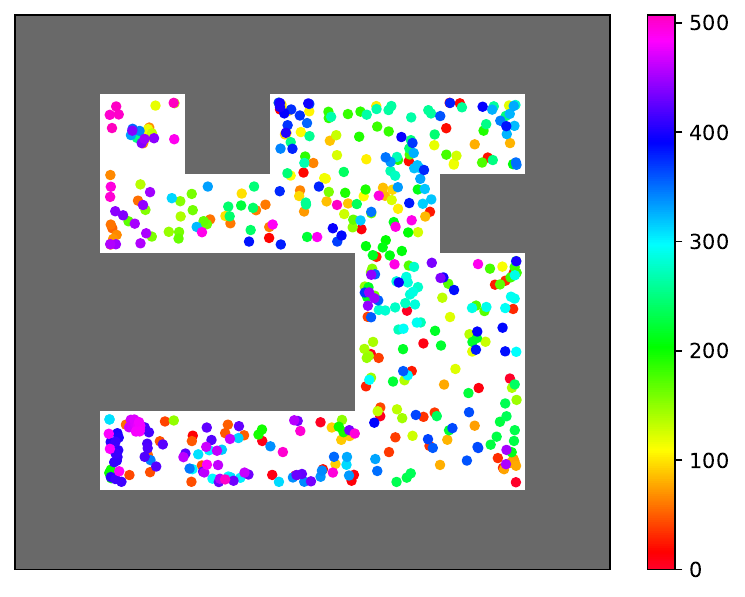}
        \caption{TD3}
        \end{subfigure}
        \begin{subfigure}[b]{160pt}
        \centering
        \includegraphics[width=\textwidth,height=4.5cm]{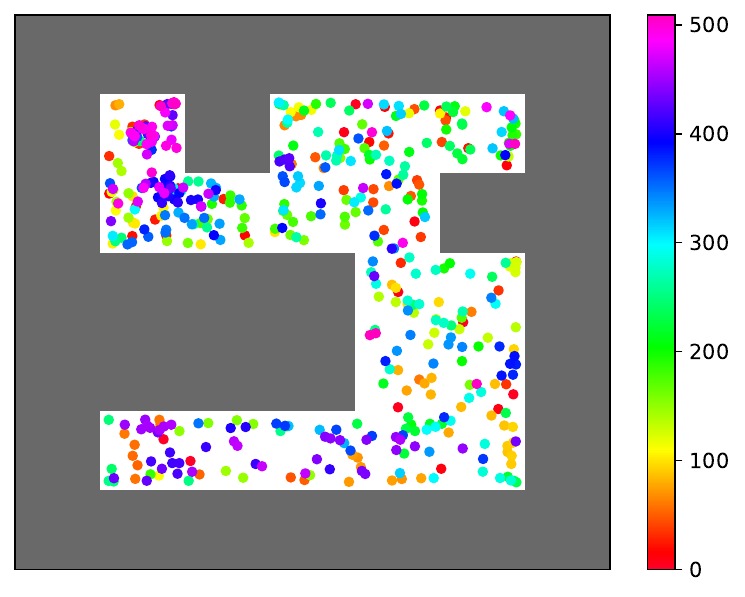}
        \caption{DDPG}
        \end{subfigure}
    \caption{The agent's location at the end of episodes throughout the training in U-maze.}
    \label{fig: loc u-maze}
\end{figure*}

\begin{figure*}[h]
    \centering
        \begin{subfigure}[b]{160pt}
        \centering
        \includegraphics[width=\textwidth,height=4.5cm]{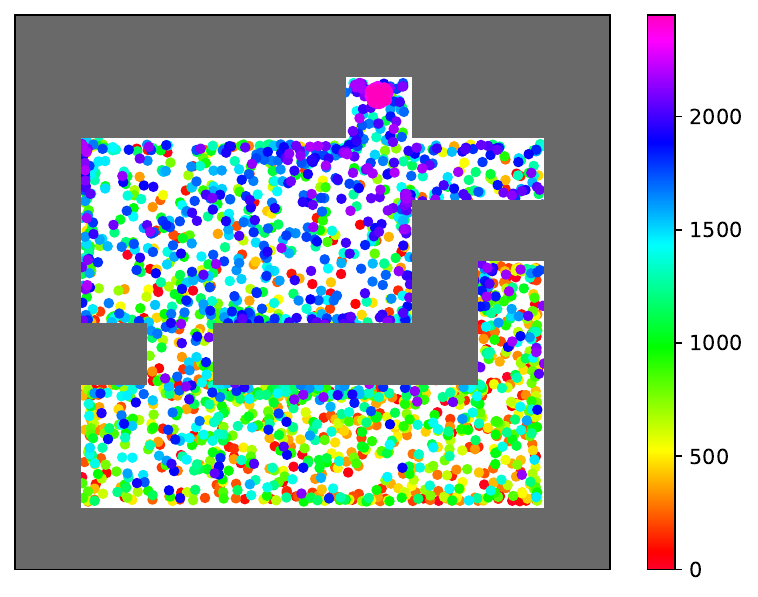}
        \caption{DDPG + $\epsilon t$-greedy}
        \end{subfigure}
        \begin{subfigure}[b]{160pt}
        \centering
        \includegraphics[width=\textwidth,height=4.5cm]{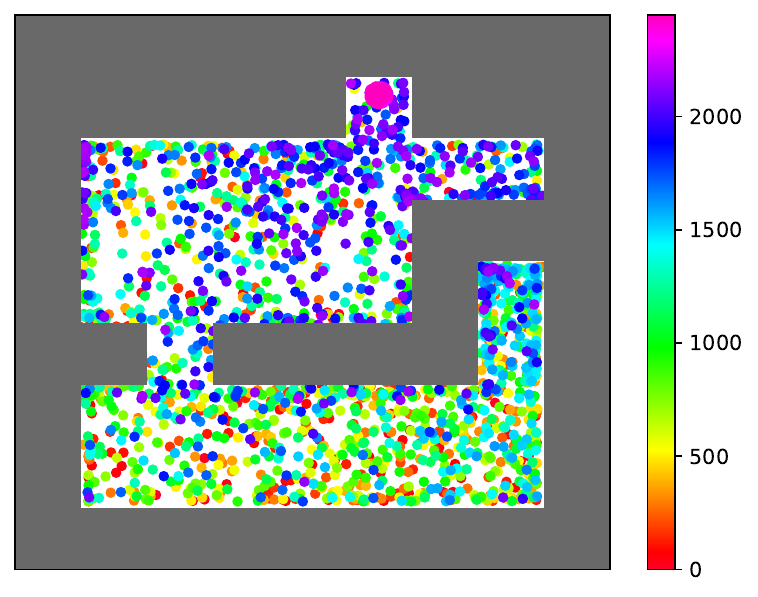}
        \caption{DOIE}
        \end{subfigure}
        \begin{subfigure}[b]{160pt}
        \centering
        \includegraphics[width=\textwidth,height=4.5cm]{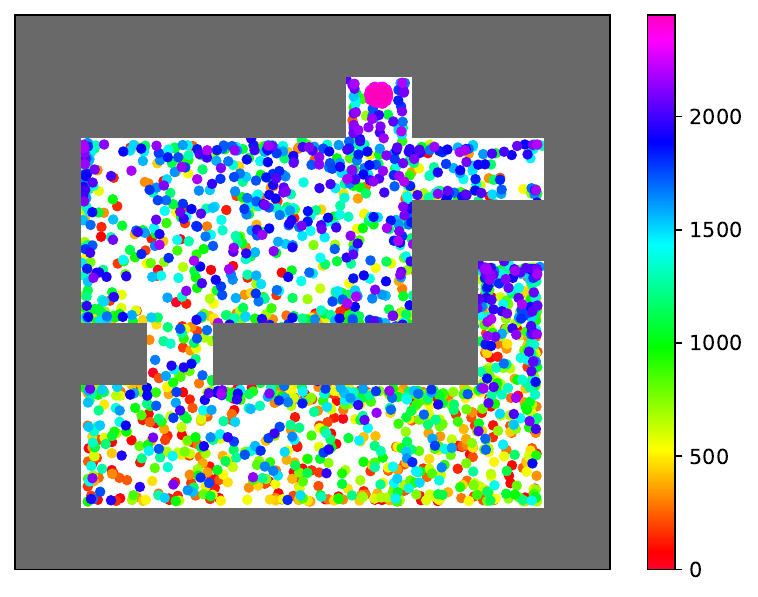}
        \caption{DDPG + $\epsilon z$-greedy}
        \end{subfigure}
        \begin{subfigure}[b]{160pt}
        \centering
        \includegraphics[width=\textwidth,height=4.5cm]{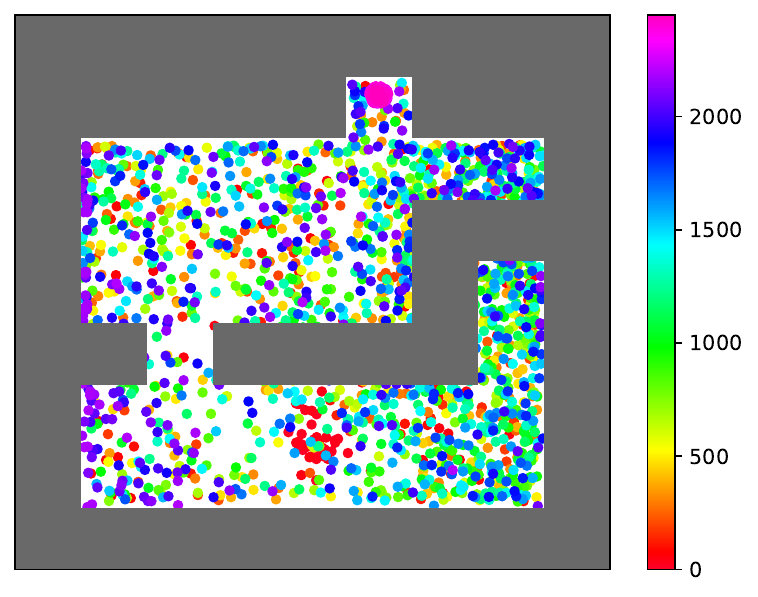}
        \caption{SAC}
        \end{subfigure}
        \begin{subfigure}[b]{160pt}
        \centering
        \includegraphics[width=\textwidth,height=4.5cm]{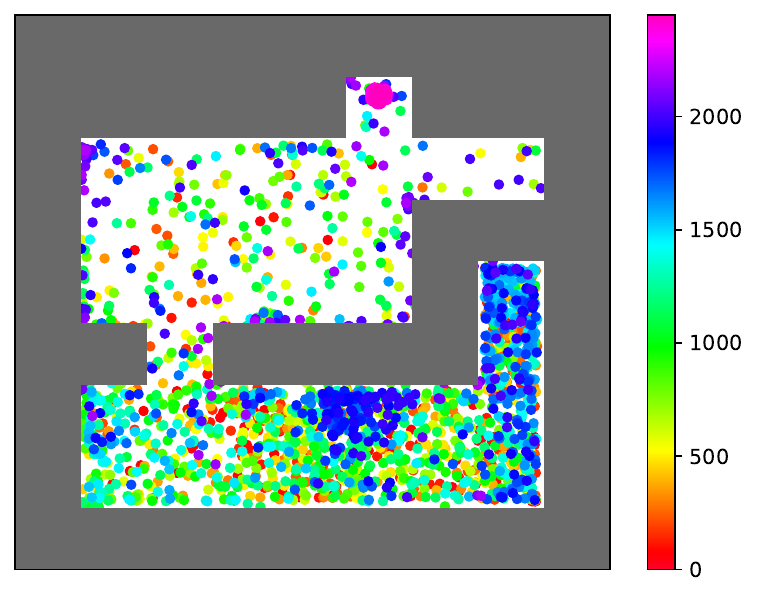}
        \caption{DDPG + intrinsic motivation}
        \end{subfigure}
        \begin{subfigure}[b]{160pt}
        \centering
        \includegraphics[width=\textwidth,height=4.5cm]{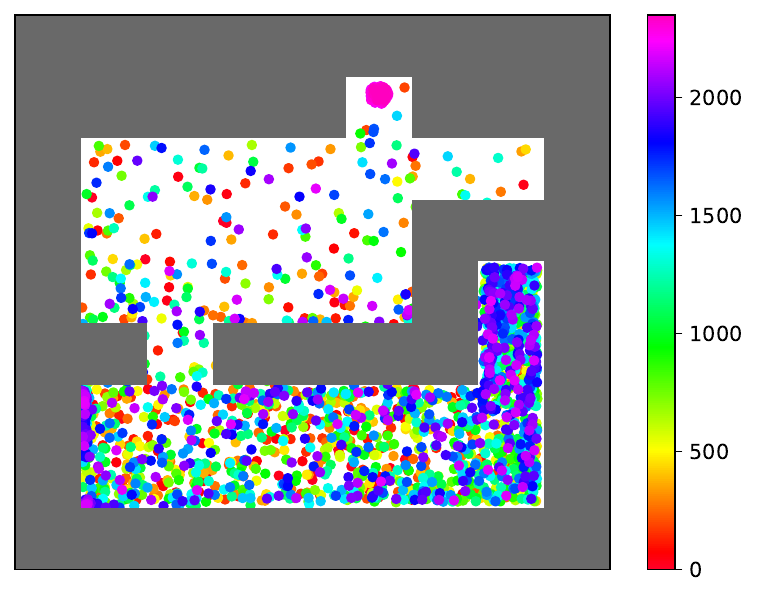}
        \caption{TD3}
        \end{subfigure}
        \begin{subfigure}[b]{160pt}
        \centering
        \includegraphics[width=\textwidth,height=4.5cm]{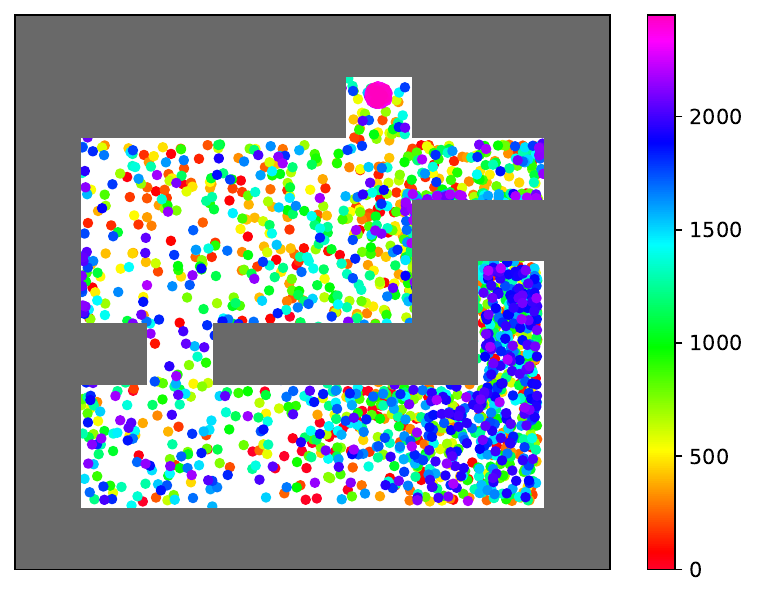}
        \caption{DDPG}
        \end{subfigure}
    \caption{The agent's location at the end of episodes throughout the training in Point-push.}
    \label{fig: loc push}
\end{figure*}

\begin{figure*}[t]
    \centering
        \begin{subfigure}[b]{200pt}
        \centering
        \includegraphics[width=\textwidth]{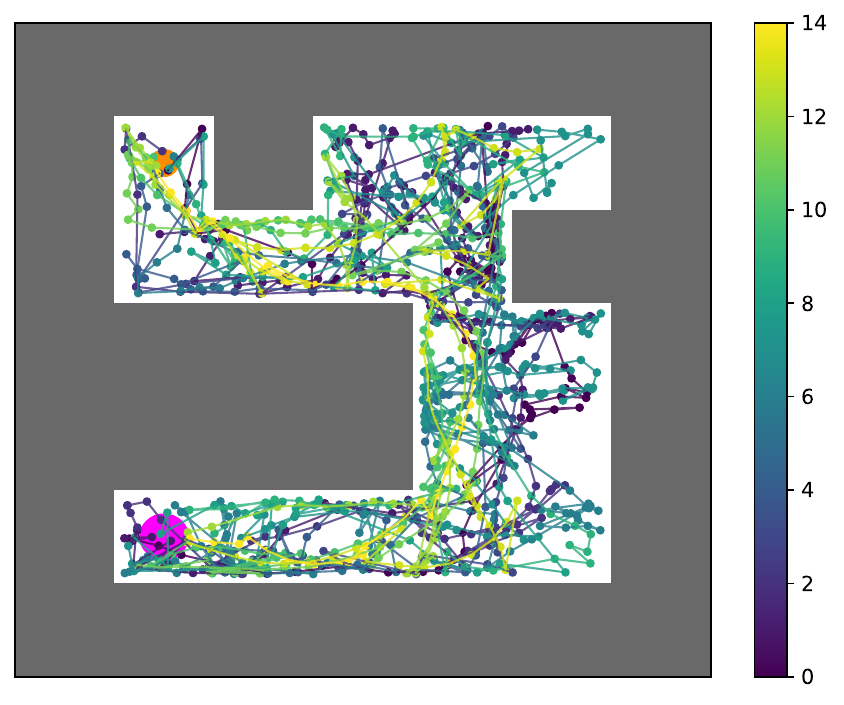}
        \caption{U-maze}
        \end{subfigure}
        \begin{subfigure}[b]{200pt}
        \centering
        \includegraphics[width=\textwidth]{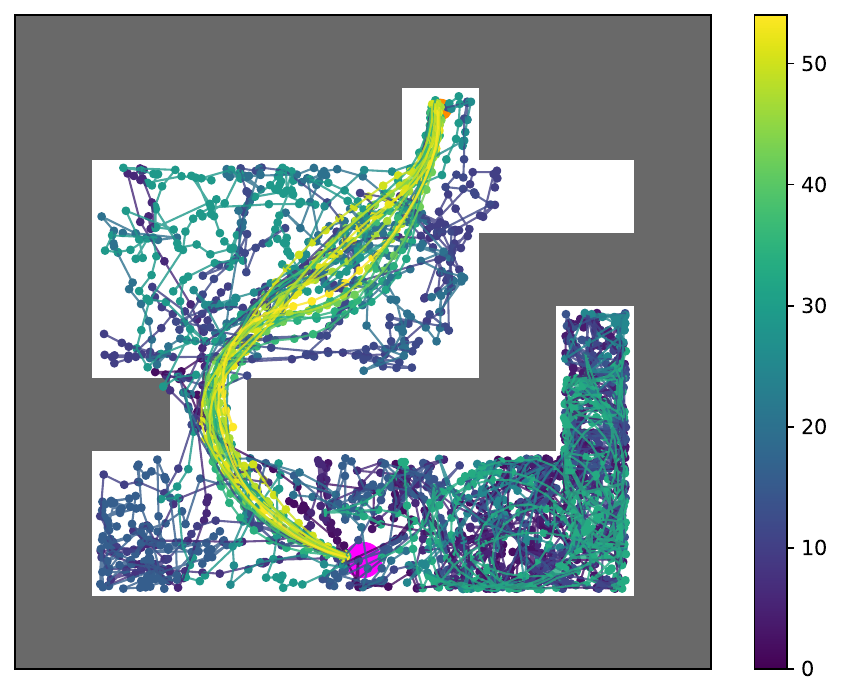}
        \caption{Point-push}
        \end{subfigure}
    \caption{The agent's policy during training, illustrated by the paths taken in each episode. Episodes are sampled from different stages of training to show how the policy evolves over time.}
    \label{fig: policy quality}
\end{figure*}

\end{document}